%% file: main.tex
\pdfoutput=1
\documentclass[11pt, letterpaper]{article}
\input{macros.tex}

\title{One-Shot Private Confidence Regions via Resampling}
\date{}
\author{
\begin{tabular}{c@{\hspace{2em}}c}
Shourya Pandey & Purnamrita Sarkar \\
{\small University of Texas at Austin} &
{\small University of Texas at Austin} \\
{\small\texttt{shouryap@utexas.edu}} &
{\small\texttt{purna.sarkar@austin.utexas.edu}}
\\[1.5em]
Po-Ling Loh & Debepsita Mukherjee \\
{\small University of Cambridge} &
{\small University of Texas at Austin} \\
{\small\texttt{pll28@dpmms.cam.ac.uk}} &
{\small\texttt{debepsitamukherjee@utexas.edu}}
\end{tabular}
}


\begin{document}
\maketitle\footnote{SP and PS gratefully acknowledge NSF grants 2217069 and CCF-2505865.}

\begin{abstract}
We propose a simple framework for constructing differentially private confidence regions \textit{in one shot}, i.e., by adding noise only to the final resampling quantile instead of privatizing the estimator computed on each resample. The cost of privacy of our procedure is only logarithmic in the number of resamples $B$ under with-replacement (m-out-of-n) sampling and independent of $B$ under without replacement sampling (subsampling), avoiding the $\sqrt{B}$ factor that arises in previous works. We provide \textit{nonasymptotic Gaussian Differential Privacy} (GDP) and utility guarantees for both subsampling and $m$-out-of-$n$ resampling, covering mean-like estimators with small global sensitivity as well as estimators admitting efficiently computable \textit{smooth sensitivity} bounds, including quantiles and degenerate U-statistics. This allows us to also obtain private confidence regions for degenerate U-statistics where the private error is much smaller than the non-private error. In all, we provide a toolbox for widely applicable DP uncertainty quantification procedures under popular resampling strategies while avoiding the computational and privacy costs of privatizing many intermediate resample statistics.

\end{abstract}


\newpage 

\begingroup
\footnotesize
\tableofcontents
\endgroup

\newpage

\input{intro}
\input{related}
\input{Notation/_notation}

\input{Algorithms/_algorithms}
\input{experiments}
\input{conclusion}

\newpage


\bibliographystyle{alpha}
\bibliography{refs}

\FloatBarrier
\appendix


\include{appendix}

\end{document}

%% file: macros.tex
\usepackage{fullpage}
\usepackage{blindtext}
\usepackage{hyperref}
\hypersetup{
	colorlinks=true,
	linkcolor=blue!70!black,
	citecolor=blue!70!black,
	urlcolor=blue!70!black
}
\usepackage{booktabs,tabularx,array}
\newcolumntype{L}[1]{>{\raggedright\arraybackslash}p{#1}}
\newcolumntype{Y}{>{\raggedright\arraybackslash}X}
\usepackage[english]{babel}
\usepackage[T1]{fontenc}
\usepackage{natbib}
\usepackage{tablefootnote}
\usepackage{placeins}
\usepackage{multirow}
\usepackage{tablefootnote}
\usepackage[table]{xcolor}
\usepackage{amsmath}
\usepackage{todonotes}
\usepackage{amssymb}
\usepackage{amsthm}
\usepackage{accents}
\usepackage{algorithm}
\usepackage{subcaption,makecell}
\usepackage[lined,boxed,ruled,norelsize,algo2e]{algorithm2e}
\usepackage{bbm}
\usepackage{bbold}
\usepackage{bm}
\usepackage{graphicx}
\usepackage{subcaption}
\graphicspath{{./figs/}}

\usepackage{cleveref}
\usepackage{thm-restate}
\usepackage{algorithm}
\usepackage{algpseudocode}

\newtheorem{theorem}{Theorem}[section]

\newtheorem{lemma}[theorem]{Lemma}
\newtheorem{fact}[theorem]{Fact}
\newtheorem{definition}[theorem]{Definition}

\newtheorem{proposition}[theorem]{Proposition}

\newtheorem{assumption}[theorem]{Assumption}
\newtheorem{remark}[theorem]{Remark}

\newcommand{\qnm}{Q_{n,m}}

\newcommand{\qnmp}{\widetilde{Q}_{n,m}}

\renewcommand{\epsilon}{\varepsilon}

\newcommand{\norm}[1]{\left\lVert#1\right\rVert}
\newcommand{\norms}[1]{\lVert#1\rVert}

\newcommand{\eps}{\varepsilon}

\newcommand{\iid}{\textup{iid}} 
\newcommand{\R}{\mathbb{R}}

\newcommand{\bas}[1]{\begin{align*}#1\end{align*}}
\newcommand{\ba}[1]{\begin{align}#1\end{align}}

\newcommand{\bk}{\color{black}}
\newcommand{\rd}{\color{red}}

\newcommand{\E}{\mathbb{E}}

\newcommand{\id}{\mathbf{I}}

\newcommand{\bb}[1]{\left(#1\right)}

\usepackage{xcolor}
\newcommand{\purna}[1]{\textcolor{green}{\textbf{PS:} #1}}

\renewcommand{\Pr}{\mathbb{P}}

\SetKw{Break}{break}
\SetKw{Return}{return}
\SetKw{KwTrue}{True}
\SetKw{KwFalse}{False}

\newcommand{\WR}{\textsc{WR}}
\newcommand{\WOR}{\textsc{WOR}}
\newcommand{\WoR}{\textsc{WOR}}
\newcommand{\kstar}{\kappa_{\star}}

\newcommand{\hatn}{\widehat{\theta}_n}
\newcommand{\tiln}{\widetilde{\theta}_n}

\newcommand{\LS}{\textsc{LS}}

\renewcommand{\SS}{\textsc{SS}}

\newcommand{\cX}{\mathcal{X}}
\newcommand{\cY}{\mathcal{Y}}

\newcommand{\mprob}{\operatorname{\mathbb{P}}}

\newcommand{\dpgs}{\texttt{CR-GS}}
\newcommand{\dpss}{\texttt{CR-SS}}

\newcommand{\TV}{\operatorname{TV}}

%% file: intro.tex
\section{Introduction}
Consider the problem of parameter estimation: given a dataset $D$ of $n$ i.i.d. samples from an unknown distribution $\mathcal{D}$ and a functional $\theta(.)$, the goal is to design an estimator for the target parameter $\theta(\mathcal{D})$. Uncertainty quantification goes beyond point estimates by providing valid confidence regions for the parameter of interest.
A classical approach treats the observed dataset as a proxy for the underlying distribution, and uses it to generate many auxiliary datasets whose distributions approximate that of the original sample. The estimator is then applied to each auxiliary dataset, producing an ensemble of parameter estimates $\{\hat{\theta}_n^*\}$. These methods are broadly known as \textit{resampling} methods, a standard and influential topic in statistics, such as the Quenouille-Tukey Jackknife~\citep{quenouille1949approximate,tukey1958bias}, bootstrap~\citep{efron1979bootstrap}, subsampling~\citep{politis1999subsampling}, and $m$-out-of-$n$ bootstrap~\citep{RESAMPLING}.

In particular, many classical confidence sets can be expressed in terms of quantiles of a suitably scaled estimation error. Resampling provides a generic plug-in approach for estimating these quantiles, avoiding the need for analytic variance formulas or asymptotic normality assumptions tailored to a specific model. Now, the nonparametric bootstrap can be computationally intensive since it requires drawing many size-$n$ with-replacement resamples from the size-$n$ dataset. Moreover, there are several settings in which the bootstrap is not consistent~\citep{athreya1987bootstrap,BI_andrews_2000}. Roughly speaking, the bootstrap works  when $\sqrt{n}(\hat{\theta}_n-\theta)$ is asymptotically Gaussian (see~\cite{Vaart_1998} for sufficient conditions). A more general procedure is subsampling~\citep{politis1999subsampling} and the $m$-out-of-$n$ bootstrap~\citep{RESAMPLING}. These methods sample $m$ points from the dataset of size $n$, with or without replacement, where $m=o(n)$ and $m = \omega(1)$. They are valid as long as $\tau_n(\hat{\theta}_n-\theta)$ converges to some nondegenerate asymptotic distribution, for an appropriate scaling factor $\tau_n$. 

In this paper, we study \textit{uncertainty quantification under differential privacy (DP)}, a problem that has received growing attention.~\cite{Karwa2017FiniteSD} provides one of the first private confidence intervals for the mean under the Gaussian model and proves lower bounds on the expected length of any $(1-\alpha)$-level $(\varepsilon,\delta)$-DP confidence set. A central challenge for DP resampling is that releasing statistics from many resamples incurs privacy loss that grows with the number of resamples $B$ through composition. 

Several recent works address this issue through variants of private resampling.~\cite{chadha2024resampling} uses the Bag of Little Bootstraps (BLB)~\citep{kleiner2014scalable} on disjoint data blocks to control the sensitivity, but still privatizes each bootstrap replicate. They then apply a noisy median/order-statistic mechanism together with sparse vector/AboveThreshold~\citep{dworkBook2014} to search over a grid of candidate regions and return the first region achieving the target coverage. For statistics converging at the $1/\sqrt{n}$ rate, this approach requires at least $\sqrt{n}$ candidate intervals, increasing the computational cost by a factor of $\sqrt{n}$.~\cite{Dette2025GaussianDP} analyzes $m$-out-of-$n$ resampling under GDP and obtains an effective privacy loss scaling as $\mu_{\mathrm{tot}}\approx \mu_B \frac{m}{n}\sqrt{B}$. Their privacy guarantees are \textit{asymptotic} and rely on the Central Limit Theorem (CLT) and composition arguments of~\cite{dong2022gaussian} together with the mixture-of-mechanisms framework of~\cite{WangChengAwan2025DPBootstrap}.~\cite{shoham2025black} obtains $(\epsilon,\delta)$ private CIs by running a \emph{private} estimator on $B$ subsamples and combining the privacy loss via subsampling amplification plus (basic/advanced) composition (also incurring a $\sqrt{B}$-type dependence). In contrast, we add noise only once to the final resampling quantile/radius, yielding a privacy loss that grows only \emph{logarithmically} in $B$. 

\paragraph{How large is $B$?}
In practice, the bootstrap uses a Monte Carlo approximation based on $B$ resamples; by the DKW inequality,
$\sup_{t\in\mathbb R}\big|\widehat F_B(t)-F^\ast(t)\big|=O_p(B^{-1/2})$, so bootstrap quantiles converge as $B\to\infty$ \citep{Vaart_1998}.
Empirically, one typically takes $B\approx 1000$ or $2000$ for ``rough'' CI's, and $B\approx 10^4$ to make Monte Carlo error negligible~\citep{hesterberg2014teachers,efron1979bootstrap}.  In the private resampling setting, for each of $s$ disjoint blocks of size $m$,
~\cite{chadha2024resampling} use
\[B=\min\Big\{10{,}000,\ \max\Big\{100,\ \frac{n^{1.5}}{\log n}\Big\}\Big\}.\]

While distribution-privatizing methods can release many quantiles, standard resampling CIs typically use only one or two to set the endpoints. This is the motivation of our work.

\subsection{Our contributions}
We consider two resampling schemes and two classes of statistics for which we provide confidence regions for the relevant parameter. \\

\textbf{Resampling schemes.}
Two subsampling methods are particularly relevant:
\begin{itemize}
    \item \textit{Sampling without replacement ($\WOR$)}~\citep{politis1999subsampling} Each set $S_i$ is drawn uniformly from the $\binom{n}{m}$ $m$-element subsets of $[n]$.
    
    \item \textit{Sampling with replacement ($\WR$)}~\citep{RESAMPLING} Each multiset $S_i$ is formed by drawing $m$ (not-necessarily-distinct) indices independently and uniformly from $[n]$.
\end{itemize}

We consider two broad regimes of estimators for private resampling. Let $D=(X_1,\dots,X_n)$ denote a dataset. Let $\theta(.)$ be a functional that can be applied to the population distribution to obtain an unknown parameter of interest, or applied to the data to obtain an estimator.
For an estimator $\theta:\mathcal{X}^n \to \mathbb{R}^d$, define its (global) $\ell_2$-sensitivity as
\[
\Delta_2(\theta)\;:=\;\sup_{D\sim D'} \bigl\|\theta(D)-\theta(D')\bigr\|_2,
\]
where $D\sim D'$ denotes neighboring datasets that differ in exactly one record.
\begin{definition}[Mean-like estimators] \label{def:gs-bounded}
We call $\theta$ \emph{mean-like} if $\Delta_2(\theta)\le \ell_n/n$ for some sequence $\{\ell_n\}_{n \ge 1}$
(typically, $\ell_n = C$ for a constant $C$ or $\ell_n=\mathrm{polylog}(n)$).
\end{definition}

In DP literature, one often captures statistics with large (possibly unbounded) global sensitivity by using local sensitivity, defined as
\ba{
\LS(\theta;D)\;:=\;\sup_{D'\sim D}\bigl\|\theta(D)-\theta(D')\bigr\|_2. \label{eq:definition_local_sensitivity}
}
Since the local sensitivity can itself leak privacy, one may instead compute a $\beta$-smooth upper bound  $\SS_{\beta}(\theta;\cdot):\mathcal{X}^n\to\mathbb{R}_+$. This function upper-bounds the local sensitivity and varies smoothly across neighbors:
\ba{
\LS(\theta;D)\le \SS_{\beta}(\theta;D)
\qquad\text{and}\qquad
\SS_{\beta}(\theta;D)\le e^{\beta}\,\SS_{\beta}(\theta;D')\ \ \text{for all }D\sim D' \label{eq:definition_smooth_sensitivity}
}
\citep{nissim2007smooth}. Equivalently, the map $D\mapsto \log \SS_{\beta}(\theta;D)$ has global sensitivity at most $\beta$.

\begin{definition}[$\textup{S}$-friendly estimators] \label{def:ss-bounded}
We call an estimator $\theta$ \emph{smooth-sensitivity-friendly} if there exists a parameter $\beta>0$ and an efficiently computable
$\beta$-smooth sensitivity upper bound $\SS_{\beta}(\theta;\cdot)$ such that, with high probability over the data,
$\SS_{\beta}(\theta;D)$ is small even when $\Delta_2(\theta)$ is large (or unbounded). 
\end{definition}

We pose the following question for both \textbf{mean-like} (Definition~\ref{def:gs-bounded}) and \textbf{S-friendly}  (Definition~\ref{def:ss-bounded}) estimators:

\textit{Can we design a differentially private confidence interval for a fixed coverage level, based on subsampling or $m$-out-of-$n$ resampling, that uses $B$ resamples for statistical accuracy while incurring only \emph{logarithmic} growth of privacy loss in $B$?}

To answer the above question, we propose a novel approach: rather than making each resample private or performing a private iterative search, we add noise \emph{directly to the resampling quantile} that determines the confidence set. 
 A key technical point is that, under sampling with replacement, the sensitivity of the resampling-based quantile is governed by a single overlap quantity: the \emph{maximum} number of times a modified data point can reappear across the resamples. 
Inspired by this, we propose a simple (white-box) alteration to the resampling methods that yields the same utility with a stronger privacy guarantee. The advantage of this method is that, for a large class of statistics, it yields \textit{nonasymptotic} GDP privacy guarantees.


Our contributions are as follows:
\begin{itemize}
  \item We provide an algorithm for privatizing confidence regions (CR) resulting from both the subsampling and the $m$-out-of-$n$ resampling schemes. Our method only leads to  privacy loss logarithmic in $B$.
  \item Our approach is most effective in settings where the estimator has small global sensitivity (Definition~\ref{def:gs-bounded}), or admits a small (data-dependent) smoothed sensitivity bound (Definition~\ref{def:ss-bounded}). This enables constructing private CRs for a broad class of mean-like functionals including $U$-statistics and empirical risk minimizers for strongly convex losses, as well as nonstandard targets such as quantiles and degenerate $U$-statistics.
 Degenerate $U$-statistics are a canonical example where classical asymptotic normality may fail, and the usual nonparametric (Efron) bootstrap can be inconsistent~\citep{Bretagnolle1983,arcones_gine1992}. In such cases, consistent inference can still be obtained via $m$-out-of-$n$ resampling or subsampling, which motivates our work.

  \item We give utility guarantees that separate the non-private resampling error from the privacy-induced error and discuss how the privacy term compares to prior private bootstrap approaches that compose across $B$ resamples or rely on DP threshold-search procedures.
\end{itemize}

\begin{table}[t]
\centering
\small
\renewcommand{\arraystretch}{1.2}
\setlength{\tabcolsep}{3.5pt}
\begin{tabular}{@{}l|l|l|l|l|l@{}}
\toprule
\textbf{Method}
& \textbf{Mean-like}
& \makecell{\textbf{Quantiles}\\ ($d=1$)}
& \makecell{\textbf{Degenerate}\\\textbf{$U$-stats ($d=1$)}}
& \makecell{\textbf{\#}\\\textbf{Resamples}}
& \makecell{\textbf{Nonasymp.}\\\textbf{Privacy?}} \\
\midrule
\textbf{Scale} $\tau_n$ & $\sqrt{n}$&$\sqrt{n}$& $n$&---
& --- \\[1.25ex]\hline
\textbf{Nonprivate sq. radius $r^2$}
& $\displaystyle \frac{q_{1-\alpha}^2}{n}+O\!\left(\frac{1}{nB}\right)$
& $\displaystyle \frac{q_{1-\alpha}^2}{n}+O\!\left(\frac{1}{nB}\right)$
& $\displaystyle \frac{q_{1-\alpha}^2}{n^2}+O\!\left(\frac{1}{n^2B}\right)$
& $0$
& --- \\[1.5ex]
\textbf{Our work}
& $\displaystyle r^2+\frac{\log^2(Bn)d}{mn\mu^2}$
& $\displaystyle r^2+\frac{\log^2(Bn)}{mn\mu^2}$
& $\displaystyle r^2+\frac{\log^2(Bn)}{mn^2\mu^2}$
& $B$
& Yes \\[1.5ex]
\textbf{\cite{chadha2024resampling}}
& $\displaystyle r^2+\tilde{O}\bb{\frac{\sqrt{d}}{\varepsilon n^{3/2}}+\frac{d}{\varepsilon^2 n^2}}$
& Empirical
& $\times$
& $B\sqrt{n}$
& Yes \\[1.25ex]

\textbf{\cite{Dette2025GaussianDP}}
& $\displaystyle r^2+\frac{Bmd}{n^3\mu^2}$
& $\times$
& $\times$
& $B$
& No \\
\bottomrule
\end{tabular}
\caption{Comparison of the mean squared radius of the confidence region across estimator classes, and computational considerations for WR-sampling. We note that~\cite{shoham2025black}'s algorithm is similar to that of~\cite{Dette2025GaussianDP} and so we exclude it for brevity. For simplicity, for mean-like statistics we assume the global sensitivity to be $\Theta(\ell/n)$ for $\ell = \sqrt{d}$. The results of~\cite{chadha2024resampling} for quantile estimation are empirical; Moreover, their paper assumes the resamples to be private already.}
\label{tab:rw_comparison}
\end{table}





\subsection{Comparison with existing works}
In Table~\ref{tab:rw_comparison}, we compare our method to existing work.
When the estimator admits an asymptotically normal distribution, bootstrap and BLB both consistently estimate its sampling distribution~\citep{hall1992bootstrap,kleiner2014scalable}. \textit{Interior} sample quantiles (i.e., for quantiles between, say, the $0.05$ and $0.95$ quantiles) are asymptotically normal under mild regularity conditions \citep[e.g.][]{Vaart_1998}. 

Likewise, the method of \cite{chadha2024resampling} to quantify uncertainty privately applies to a large family of mean-like estimators and quantiles. Since BLB-Quant privatizes each resample, the length of the confidence interval depends on the privacy budget. Also, while this method achieves the lower bound for Gaussian means~\citep{Karwa2017FiniteSD}, it can be computationally expensive, requiring $O(B\sqrt{n})$ resample evaluations (cf.\ Table~\ref{tab:rw_comparison}).

\cite{Dette2025GaussianDP} analyze private $m$-out-of-$n$ resampling for estimators with global sensitivity of the form $O(\ell/n)$. Let \(q_{1-\alpha}\) be the target non-private resampling quantile of the pivot
\(\tau_n d(\widehat\theta_n,\theta)\), so the leading squared confidence radius (non-private) is
\(4q_{1-\alpha}^2/\tau_n^2\). 

Ignoring truncation at zero, with some regularity conditions on the resampling distribution (see Lemma~\ref{lem:quantile_finiteB} in the Appendix), the non-private confidence radius $r(\alpha)$ satisfies
\[
\mathbb E[r(\alpha)^2]
=
\frac{4q_{1-\alpha}^2}{\tau_n^2}
+
O\left(\frac{1}{B\tau_n^2}\right).
\]

\textbf{Our method.}
Algorithm~\ref{alg:gdp_quantile_subsample} forms the empirical resampling quantile
\(Q_{n,m}\), privatizes it once,
\(
\widetilde Q_{n,m}
=
\max\{Q_{n,m}+\sigma_m Z,0\}
\) for $
Z\sim N(0,1)$
and reports
\(
\widetilde\theta_n
\pm
\frac{\widetilde Q_{n,m}}{\tau_n}.
\)
For comparison, we isolate the contribution of the noisy quantile. For mean-like statistics with global sensitivity
\(\Delta_m\) on samples of size \(m\),
\(
\sigma_m
=
\frac{\tau_m\kappa_\star \Delta_m}{\mu},
\)
up to universal constants using Theorem~\ref{thm:global_guarantee}. Therefore
\[
\mathbb E_Z\!\left[\mathrm{Len}_{\mathrm{ours}}(\alpha)^2\right]
\approx
\frac{4q_{1-\alpha}^2}{\tau_n^2}
+O\bb{\frac{1}{B\tau_n^2}}+
\frac{4\tau_m^2\kappa_\star^2\Delta_m^2}
{\tau_n^2\mu^2}.
\]

Since \(\kappa_\star=O(\log(Bn))\) under with-replacement sampling and
\(\kappa_\star=1\) under sampling without replacement, our procedure replaces
the \(\sqrt B\)-type accumulation from repeatedly privatizing resample
statistics by a logarithmic dependence on \(B\) in the squared-length penalty.
In the $\sqrt{n}$-regime, where \(\tau_n=\sqrt n\), \(\tau_m=\sqrt m\), and
\(\Delta_m=\ell/m\), the privacy term becomes
\[
\Delta_{\mathrm{ours}}
\approx
\frac{4\kappa_\star^2\ell^2}{mn\mu^2}.
\]
Thus, for \(m\) close to \(n\) our approach pays only through \(\kappa_\star^2\), whereas the approaches of~\cite{Dette2025GaussianDP} and~\cite{shoham2025black} pay a factor
linear in \(B\).
Appendix section~\ref{sec:appendix:compare} has more details on the entries of Table~\ref{tab:rw_comparison}, and Appendices~\ref{app:quantiles} as well as~\ref{app:degenerate} explain the table entries for confidence lengths for quantiles and degenerate U-statistics.

\begin{table*}[t]
\centering
\label{tab:resampling_conditions}
\small
\setlength{\tabcolsep}{4pt}
\renewcommand{\arraystretch}{1.2}
\begin{tabularx}{\textwidth}{@{}L{0.13\textwidth} L{0.16\textwidth} L{0.19\textwidth} L{0.23\textwidth} Y@{}}
\toprule
\textbf{Category} & \textbf{Condition} & \textbf{Role} & \textbf{If the condition fails} & \textbf{Practical recommendation} \\
\midrule
\textbf{Classical resampling}
& Classical resampling validity
& Makes $Q_{n,m}$ approximate the target quantile
& The non-private CI may be miscalibrated
& Use estimator-specific resampling theory \\
\midrule
& Choice of $m$
& Controls the quality of the resampling approximation
& If $m$ is too small, subsampled estimators are poor in quality; if $m$ is too close to $n$, classical resampling validity fails
& To start, use $m \approx n^{\eta}$, e.g.\ $\eta \in [0.7,0.8]$ (this works well for $m$-out-of-$n$ and subsampling in~\cite{kleiner2014scalable}) \\
\midrule
& Choice of $B$
& Larger $B$ reduces Monte Carlo error
& If $B$ is small, the estimated CIs may be more variable
& Increase $B$ subject to computational constraints \\
\midrule
& Density bounded below near the target quantile (e.g.\ median)
& Controls nearby order-statistic spacings and hence the smooth sensitivity
& Private noise may be large
& Low-density regions and tail quantiles are more challenging \\
\midrule
\textbf{Privacy}
& Smooth sensitivity scales as $o(1/\tau_n)$
& Makes the private noise negligible
& Still private, but nominal coverage not guaranteed
& Use global/smooth sensitivity bounds as applicable \\
\midrule
\textbf{Applicability}
& Tractable and small global or smooth sensitivity
& Enables one-shot privatization of the final resampling quantile
& Our framework may not apply directly
& Use a problem-specific DP method (e.g.\ DP-SGD) \\
\bottomrule
\end{tabularx}
\caption{Conditions and practical recommendations for private resampling confidence intervals.}
\end{table*}

%% file: related.tex
\section{Related Work}
\label{sec: Related Works} 

\par Resampling poses additional challenges under differential privacy, since privatizing statistics computed on many resamples leads to privacy losses that accumulate under composition.~\cite{smith_2011} shows that for a broad class of asymptotically normal estimators, one can construct differentially private estimators that converge to the same asymptotic distribution as their non-private counterparts. As a result, differential privacy can be achieved with no asymptotic loss in statistical efficiency, including for parametric estimation and regression problems.  Several methods have been proposed in ~\cite{vu_slavkovic_2009,williams_mcsherry_2010,karwa_slavkovic_2012,karwa_et_al_2014,karwa_et_al_2015,karwa_slavkovic_2016,kazan2023test,wang_et_al_2015,kr} to account for the combined uncertainty due to sampling variability and the additional noise introduced for privacy in differentially private algorithms. However, the utility of these approaches—measured, for example, by the expected length of confidence intervals or the power of hypothesis tests—is generally evaluated empirically, or their conservativeness is established only under restrictive asymptotic assumptions.  Finite-sample guarantees for conservative differentially private inference have been established in a limited number of settings. In particular, ~\cite{sheffet_2017} studies confidence intervals for ordinary least-squares regression coefficients, which may be interpreted as a multivariate Gaussian generalization of the problem considered here, and ~\cite{cb} consider hypothesis testing for discrete distributions. 

~\cite{bt} showed that, under zero-concentrated differential privacy, bootstrap-based estimation of standard errors can be performed without additional privacy cost beyond that of releasing the point estimator. This allows the construction of conservative confidence intervals via post-processing of bootstrap replicates.  ~\cite{Karwa2017FiniteSD} study the construction of differentially private confidence intervals for the mean of a normal distribution, providing explicit finite-sample coverage guarantees for both known and unknown variance cases. They also establish lower bounds on the expected length of any DP confidence interval, showing near-optimal privacy–utility tradeoffs without assuming bounded data.     ~\cite{ferrando_wang_sheldon_2020} fit a (DP) estimator once, then simulate synthetic datasets from the fitted parametric model, re-run the (private) estimator on simulated data, and form a CI from the bootstrap distribution.  They explicitly argue that it captures uncertainty from both sampling and the DP mechanism, and prove asymptotic consistency in broad settings (including a special adaptation for linear regression).   

%% file: Notation/_notation.tex
\section{Preliminaries}
\label{sec:notation}
\input{Notation/differential}
\input{Notation/subsampling}

%% file: Notation/differential.tex
\subsection{$f$-Differential Privacy}
\label{sec:notation_dp}
Given a positive integer $n$ and domain $\cX$, a dataset $D$ of size $n$ is a finite collection of $n$ elements from $\cX$. Two datasets are \textit{neighbors} if they differ in exactly one element. Throughout this paper, we consider randomized algorithms whose input is a dataset $D \in \cX^n$ and output lies in a measurable space $\cY$.  A mechanism (or algorithm) is a randomized function $\mathcal{A} : \cX^n \to \cY.$ Formally, such an algorithm induces, for each dataset $D$, an output distribution $\mathcal{A}(D)$ on $\cY$.

The notion of differential privacy we consider is $f$-differential privacy~\citep{dong2022gaussian}, which is based on hypothesis testing between pairs of output distributions. 

Suppose $P$ and $Q$ are two probability distributions on $\cY$. Given an observation $Y$ drawn from either $P$ or $Q$, consider the hypothesis testing problem with null and alternative hypotheses
\[
H_0: Y \sim P
\;\; ; \;\;
H_1: Y \sim Q,
\]
respectively. A \textbf{test} is any (possibly randomized) function $\phi:\cY\to \{0,1 \}$. The \textbf{type I error} $\alpha_\phi := \E_{Y\sim P}[\phi(Y)]$ of a test $\phi$ is the probability that the test outputs $1$ when $Y\sim P$, thereby incorrectly concluding that $Y$ was drawn from $Q$, and the \textbf{type II error} $\beta_\phi := \E_{Y\sim Q}[1-\phi(Y)]$ of the test $\phi$ is the probability that the test outputs $0$ when $Y\sim Q$, thereby incorrectly concluding that $Y$ was drawn from $P$.



\begin{definition}[Trade-Off function]
\label{def:tradeoff}
Given two probability distributions $P$ and $Q$ on the same measurable space $\cX$, the \emph{trade-off function} between $P$ and $Q$ is the function $T(P,Q):[0,1] \to [0,1]$ defined as
\[T(P, Q)(\alpha) = \inf_{\phi : \alpha_{\phi} \le \alpha} \beta_\phi. \]

A function $f:[0,1] \to [0,1]$ is called a \textbf{trade-off function} if $f = T(P,Q)$ for some distributions $P$ and $Q$ on the same measurable space. 
\end{definition}

By an abuse of notation, for random variables $X$ and $Y$, we write $T(X,Y)$ for the trade-off function between their distributions.
\begin{proposition}[\cite{dong2022gaussian}]\label{prop:tradeoff_characterization}
A function $f:[0,1] \to [0,1]$ is a trade-off function (between some two distributions $P$ and $Q$) if and only if $f$ is convex, continuous, and non-increasing in $[0,1]$, and satisfies $f(\alpha) \le 1-\alpha$ for $\alpha \in [0,1]$.
\end{proposition}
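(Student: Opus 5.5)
The necessity direction rests on the Neyman--Pearson structure of $T(P,Q)$ together with a compactness argument for the endpoint $\alpha=0$. The sufficiency direction rests on an explicit construction in which $P$ is uniform on $[0,1]$ and $Q$ is built from $f$.

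\emph{Necessity.} Let $f=T(P,Q)$.
\begin{itemize}
\item \emph{Monotonicity.} As $\alpha$ grows, the feasible set $\{\phi:\alpha_\phi\le\alpha\}$ grows, so the infimum can only decrease. Hence $f$ is non-increasing.
\item \emph{The bound $f(\alpha)\le 1-\alpha$.} Use the randomized constant test $\phi\equiv\alpha$. It has $\alpha_\phi=\alpha$ and $\beta_\phi=1-\alpha$.
\item \emph{Convexity.} Take $\alpha_1,\alpha_2$, $\lambda\in[0,1]$, and near-optimal tests $\phi_1,\phi_2$ at levels $\alpha_1,\alpha_2$. The mixture $\lambda\phi_1+(1-\lambda)\phi_2$ is again a randomized test. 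Both errors are linear in $\phi$, so the mixture has level $\lambda\alpha_1+(1-\lambda)\alpha_2$ and type II error $\lambda\beta_{\phi_1}+(1-\lambda)\beta_{\phi_2}$. Letting the near-optimality slack go to zero gives convexity.
\item \emph{Continuity on $(0,1]$.} Convexity gives continuity on $(0,1)$. At $\alpha=1$, we have $0\le f(\alpha)\le 1-\alpha\to 0=f(1)$.
\end{itemize}
The delicate point, which I expect to be the main obstacle, is right-continuity at $\alpha=0$. I would handle it by viewing tests as elements $\phi\in L^\infty(\mu)$ with $0\le\phi\le1$, where $\mu=P+Q$. This set is weak-$*$ compact by Banach--Alaoglu. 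The maps $\phi\mapsto\alpha_\phi$ and $\phi\mapsto\beta_\phi$ are weak-$*$ continuous, since they integrate $\phi$ against $dP/d\mu,\,dQ/d\mu\in L^1(\mu)$.

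Now take $\alpha_k\downarrow0$ and tests $\phi_k$ with $\alpha_{\phi_k}\le\alpha_k$ and $\beta_{\phi_k}\le f(\alpha_k)+1/k$. A weak-$*$ cluster point $\phi$ has $\alpha_\phi=0$ and $\beta_\phi\le\lim_k f(\alpha_k)$. Therefore $f(0)\le\lim_{\alpha\downarrow0}f(\alpha)$. Monotonicity gives the reverse inequality, so $f$ is continuous at $0$.

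\emph{Sufficiency.} Let $f$ be convex, continuous, non-increasing, with $f(\alpha)\le1-\alpha$; in particular $f(1)=0$. Take $P=\mathrm{Unif}[0,1]$. Define $Q$ on $[0,1]$ by
\[
Q(\{0\})=1-f(0),\qquad Q([0,t])=1-f(t)\quad (t\in[0,1]).
\]
This is a valid distribution function: it is non-decreasing and right-continuous because $f$ is continuous and non-increasing, and it reaches $1$ at $t=1$.

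On $(0,1]$, the function $f$ is convex, hence absolutely continuous on compact subintervals. So $Q$ restricted to $(0,1]$ has density $-f'(t)$ with respect to $P$, where $f'$ is the right derivative. By convexity $-f'$ is non-increasing, so the likelihood ratio $dQ/dP$ is non-increasing in $t$. The atom at $0$ has likelihood ratio $+\infty$, since $P(\{0\})=0$.

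By the Neyman--Pearson lemma, the most powerful level-$\alpha$ test rejects where the likelihood ratio is largest. Here that means rejecting on $[0,\alpha]$. This test has type I error $P([0,\alpha])=\alpha$ and type II error $1-Q([0,\alpha])=f(\alpha)$. Hence $T(P,Q)=f$.

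The only care needed is with ties in the likelihood ratio, where $f$ is affine. There, any test rejecting a $P$-mass-$\alpha$ subset of the tied region achieves the same power, so the interval $[0,\alpha]$ remains optimal.
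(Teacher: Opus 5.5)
This proposition is quoted from Dong, Roth, and Su and is not proved in the paper. Your argument is correct and follows the standard proof of this characterization: mixtures of tests and the constant test $\phi\equiv\alpha$ give necessity, and for sufficiency a uniform $P$ is paired with a $Q$ whose distribution function is $1-f$, so that $dQ/dP=-f'$ is non-increasing (with an infinite likelihood ratio at the atom) and the Neyman--Pearson lemma makes $[0,\alpha]$ the optimal rejection region. Your weak-$*$ compactness argument for right-continuity at $\alpha=0$ is the one step that needs real care, and you handle it correctly. Working with a cluster point rather than a subsequence is the right choice, since $L^1(P+Q)$ need not be separable.
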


Proposition~\ref{prop:tradeoff_characterization} characterizes the functions that can arise as trade-off functions, and hence the functions that can serve as benchmarks for indistinguishability. To apply this hypothesis-testing framework to privacy, consider a randomized algorithm $\mathcal{A}$ whose input is a dataset. Each dataset $D$ induces a distribution over outputs, represented by the random variable $\mathcal{A}(D)$. For neighboring datasets $D$ and $D'$, the trade-off function $T(\mathcal{A}(D),\mathcal{A}(D'))$ describes how well an observer can distinguish the corresponding output distributions.

\begin{definition}[$f$-differential privacy]
Let $f:[0,1] \to [0,1]$ be a trade-off function. An algorithm $\mathcal{A}$ is $f$-\emph{DP} if for all neighboring datasets $D$ and $D'$ and all $\alpha \in [0,1]$,
\[
T(\mathcal{A}(D), \mathcal{A}(D'))(\alpha) \ge f(\alpha).
\]
\end{definition}

In other words, if $f = T(P,Q)$ for distributions $P$ and $Q$, then distinguishing between (the outputs of) any two neighboring datasets is at least as difficult as distinguishing between the distributions $P$ and $Q$ based on a single sample. 

$f$-differential privacy captures the standard notion of central privacy~\cite{dwork2006differential, dong2022gaussian}.

\begin{definition}[$(\eps, \delta)$-DP]
For any $\eps \ge 0$ and $\delta \in [0,1]$, algorithm $\mathcal{A}$ is $(\eps, \delta)$-DP if and only if it is $f_{\eps, \delta}$-DP, where
\bas{
f_{\eps, \delta}(\alpha) := \max \{1-\delta-e^{\eps}\alpha, e^{-\eps}(1-\delta-\alpha), 0\} \;\;\;\; \forall \alpha \in [0,1].
}
\end{definition}



\subsubsection{Gaussian Differential Privacy}
\label{subsec:gdp}

Gaussian differential privacy~\cite{dong2022gaussian} (GDP) measures privacy by comparing the distinguishability of the outputs on neighboring datasets to a canonical
Gaussian hypothesis-testing problem. Let $\mu \ge 0$, and consider the problem of distinguishing between the distributions
\[
    P = \mathcal N(0,1)
    \qquad\text{and}\qquad
    Q = \mathcal N(\mu,1).
\]
The corresponding trade-off function $G_\mu := T\bigl(\mathcal N(0,1),\mathcal N(\mu,1)\bigr)$ has the explicit form
\[
    G_\mu(\alpha)
    =
    \Phi\!\left(\Phi^{-1}(1-\alpha)-\mu\right)
\]
for all $\alpha \in [0,1]$, where $\Phi$ denotes the CDF of the standard normal distribution.\footnote{For each $\alpha \in [0,1]$, the test $\phi_{\alpha}$ that rejects the null if and only if the observed sample exceeds $\Phi^{-1}(1-\alpha)$ has type I error $\alpha$ and achieves the minimum type II error $\Phi(\Phi^{-1}(1-\alpha)-\mu)$.}

An algorithm $\mathcal{A}$ is said to be $\mu$-GDP if distinguishing the outputs of $\mathcal{A}$ on
neighboring datasets is no easier than distinguishing
$\mathcal N(0,1)$ from $\mathcal N(\mu,1)$ using a single sample.

\begin{definition}[Gaussian Differential Privacy]
\label{def:pure_gdp}
For any $\mu \ge 0$, an algorithm $\mathcal{A}$ is said to be
$\mu$-Gaussian differentially private ($\mu$-GDP) if it is $G_{\mu}$-GDP. In other words, for every pair of
neighboring datasets $D$ and $D'$ and all $\alpha \in [0,1]$,
\[
    T\bigl(\mathcal A(D),\mathcal A(D')\bigr)(\alpha)
    \geq \Phi(\Phi^{-1}(1-\alpha)-\mu).
\]
\end{definition}

\textbf{Privacy composition.} Gaussian differential privacy admits an exact composition rule. Let
\[
Y_i = \mathcal{A}_i(D;Y_1,\ldots,Y_{i-1}), \qquad i=1,\ldots,k,
\]
be the outputs of $k$ mechanisms $\mathcal{A}_1, \dots, \mathcal{A}_k$ and suppose that, for every fixed realization $y_{1:(i-1)}$ of the previous outputs, the conditional mechanism $D \mapsto \mathcal{A}_i(D;y_1,\ldots,y_{i-1})$
is $\mu_i$-GDP. Then, the mechanism $\mathcal{A}$ that releases the joint transcript $Y_{1:k}=(Y_1,\ldots,Y_k)$ is $\mu$-GDP with
\ba{
\mu=\sqrt{\mu_1^2+\cdots+\mu_k^2}\,. \label{eq:pure_gdp_composition}
}

\begin{proposition}[Equivalence between GDP and DP]
For any $\mu > 0$, an algorithm $\mathcal{A}$ is $\mu$-GDP if and only if it is $(\eps, \delta_{\mu}(\eps))$-DP for all $\eps \ge 0$, where \[\delta_{\mu}(\eps) = \Phi\bb{-\frac{\eps}{\mu} + \frac{\mu}{2}} - e^{\eps} \Phi\bb{-\frac{\eps}{\mu} - \frac{\mu}{2}}.\]
\end{proposition}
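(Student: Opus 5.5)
The plan is to pass through the primal–dual correspondence between trade-off functions and families of $(\eps,\delta)$-DP guarantees, as in \cite{dong2022gaussian}. A trade-off function $f$ is the convex conjugate envelope of the lines it supports. An algorithm is $f_{\eps,\delta}$-DP for every $\eps\ge 0$ exactly when $T(\mathcal A(D),\mathcal A(D'))\ge \sup_{\eps\ge0} f_{\eps,\delta(\eps)}$. So I would prove two facts: (i) $\sup_{\eps\ge 0} f_{\eps,\delta_\mu(\eps)} = G_\mu$ pointwise on $[0,1]$, and (ii) a pointwise inequality $T\ge f$ holds iff $T\ge f_{\eps,\delta}$ for all the $(\eps,\delta)$ pairs that $f$ dominates. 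Together these give the equivalence.

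\emph{Forward direction.} Suppose $\mathcal A$ is $\mu$-GDP. It suffices to show $G_\mu\ge f_{\eps,\delta_\mu(\eps)}$ for each $\eps\ge0$. Since $G_\mu$ is symmetric ($G_\mu^{-1}=G_\mu$), it is enough to check the single line $1-\delta-e^{\eps}\alpha$, with $\delta=\delta_\mu(\eps)$; the other line follows by reflecting. I would define $h(\alpha)=G_\mu(\alpha)+e^{\eps}\alpha$ and minimize it over $\alpha$. Parametrize $\alpha=1-\Phi(t)$, so $G_\mu(\alpha)=\Phi(t-\mu)$. The first-order condition $\varphi(t-\mu)=e^{\eps}\varphi(t)$ gives $t=\eps/\mu+\mu/2$. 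Substituting,
\[
\min_\alpha h = \Phi\!\left(\tfrac{\eps}{\mu}-\tfrac{\mu}{2}\right)+e^{\eps}\,\Phi\!\left(-\tfrac{\eps}{\mu}-\tfrac{\mu}{2}\right) = 1-\delta_\mu(\eps).
\]
Hence $G_\mu(\alpha)\ge 1-\delta_\mu(\eps)-e^{\eps}\alpha$. Also $G_\mu\ge0$ trivially, which gives $T\ge G_\mu\ge f_{\eps,\delta_\mu(\eps)}$.

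\emph{Reverse direction.} Now assume $\mathcal A$ is $(\eps,\delta_\mu(\eps))$-DP for all $\eps\ge0$, so $T\ge\sup_\eps f_{\eps,\delta_\mu(\eps)}$. I must show this supremum equals $G_\mu$. The computation above shows each line $1-\delta_\mu(\eps)-e^{\eps}\alpha$ is tangent to $G_\mu$ at $\alpha_\eps=1-\Phi(\eps/\mu+\mu/2)$, with slope $-e^{\eps}$. As $\eps$ ranges over $[0,\infty)$, these slopes cover all of $(-\infty,-1]$. By convexity and continuity of $G_\mu$ (Proposition~\ref{prop:tradeoff_characterization}), $G_\mu$ equals the supremum of its supporting tangent lines. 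The tangent lines with slopes in $[-1,0)$ are exactly the reflected lines $e^{-\eps}(1-\delta-\alpha)$, again by symmetry. Hence $\sup_{\eps} f_{\eps,\delta_\mu(\eps)}=G_\mu$, so $T\ge G_\mu$ and $\mathcal A$ is $\mu$-GDP.

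\emph{Main obstacle.} The step I expect to need the most care is the supporting-line envelope argument. I must check that the tangency points $\alpha_\eps$ sweep the whole interval $(0,1/2]$ where the slope is at most $-1$, and that $G_\mu$ is differentiable there, so every supporting line is realized. I also need the symmetry reduction $G_\mu=G_\mu^{-1}$ and the endpoints $\alpha\in\{0,1\}$, where $\max\{\cdot,0\}$ is active, handled explicitly. The calculus identity is routine; the convex-duality bookkeeping is where errors could slip in.
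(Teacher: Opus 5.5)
The paper does not prove this proposition; it quotes it from \cite{dong2022gaussian}. Your argument is correct and is essentially the primal--dual proof given in that reference. There, $\delta(\eps)=1+G_\mu^*(-e^{\eps})$ comes from the convex conjugate of $G_\mu$; you compute the same quantity by hand from the first-order condition $\varphi(t-\mu)=e^{\eps}\varphi(t)$, and your value $1-\delta_\mu(\eps)$ for the minimum is right.

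Your ``main obstacle'' paragraph needs two small corrections.
\begin{itemize}
\item The points where $G_\mu$ has slope at most $-1$ form $(0,\Phi(-\mu/2)]$, not $(0,1/2]$. The point $\Phi(-\mu/2)$ is the fixed point of $G_\mu$, where the slope is exactly $-1$. By symmetry, $G_\mu$ maps this interval onto $[\Phi(-\mu/2),1)$, which is exactly the range your reflection step has to cover.
\item At $\alpha=0$, no single $\eps$ attains $G_\mu(0)=1$. Either use $\delta_\mu(\eps)\to 0$ as $\eps\to\infty$, or appeal to continuity of $T$.
\end{itemize}

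Finally, your fact (ii) is unnecessary: the identity $\sup_{\eps\ge 0} f_{\eps,\delta_\mu(\eps)}=G_\mu$ alone gives both directions.
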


\begin{proposition}[Gaussian mechanism]
\label{prop:gaussian_mechanism}
Suppose $\mathcal{A} : \cX^n \to \mathbb{R}^d$ has $\ell_2$-global sensitivity at most $\Delta$, that is, $\norm{\mathcal{A}(D) - \mathcal{A}(D')}_2 \le \Delta$ for all neighboring datasets $D$ and $D'$. Then, for any $\mu > 0$, the mechanism $\tilde{\mathcal{A}}$ that samples $Z \sim \mathcal{N}\bb{0, \id_d}$ and outputs
\[\tilde{\mathcal{A}}(S) = \mathcal{A}(S) + \frac{\Delta}{\mu} \cdot Z\]
is $\mu$-GDP.
\end{proposition}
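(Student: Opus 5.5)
The plan is to reduce to the scalar case $d=1$ and then lift to general $d$. Fix neighboring datasets $D,D'$ and write $\delta=\mathcal{A}(D)-\mathcal{A}(D')$, so that $\|\delta\|_2\le\Delta$. The outputs are $\mathcal{N}(\mathcal{A}(D),(\Delta/\mu)^2\id_d)$ and $\mathcal{N}(\mathcal{A}(D'),(\Delta/\mu)^2\id_d)$. The trade-off function is invariant under any bijective measurable map applied to both distributions. So I first translate by $-\mathcal{A}(D)$, then scale by $\mu/\Delta$, and finally rotate so that $\delta$ points along the first coordinate axis. This reduces the problem to comparing $\mathcal{N}(0,\id_d)$ with $\mathcal{N}(v e_1,\id_d)$, where $v=\mu\|\delta\|_2/\Delta\le\mu$.

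Next I remove the irrelevant coordinates. Under both distributions, coordinates $2,\dots,d$ are i.i.d.\ $\mathcal{N}(0,1)$ and independent of the first coordinate. They carry no information: a randomized test using all coordinates can be simulated by a test that sees only the first coordinate and generates the rest itself. Hence the trade-off function equals $T(\mathcal{N}(0,1),\mathcal{N}(v,1))=G_v$.

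It then remains to compute $G_v$ and show it dominates $G_\mu$. By the Neyman--Pearson lemma, the likelihood ratio is monotone in $y$, so the optimal level-$\alpha$ test rejects when $y>\Phi^{-1}(1-\alpha)$. Its type II error is $\Phi(\Phi^{-1}(1-\alpha)-v)$, which matches the formula for $G_\mu$ stated in the paper's footnote. This expression is nonincreasing in $v$, so $v\le\mu$ gives $G_v\ge G_\mu$ pointwise.

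Since $D,D'$ were arbitrary neighbors, the mechanism is $\mu$-GDP. The only mildly delicate step is justifying that the trade-off function is invariant under bijections and unaffected by independent, identically distributed nuisance coordinates. Both facts follow from a post-processing argument: a test for the transformed pair is also a test for the original pair, and vice versa. I do not expect a real obstacle here.
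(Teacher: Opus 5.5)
Your proof is correct, and it is the standard argument. The paper gives no proof of this proposition and treats it as a known fact from \cite{dong2022gaussian}, so there is no competing route to compare against.

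The paper does use exactly your reasoning implicitly, in the proof of Lemma~\ref{lem:ptr_gdp_privacy}. There it asserts that $\mathcal{N}(\theta, b^2\mu_2^{-2}\id_d)$ versus $\mathcal{N}(\theta', b^2\mu_2^{-2}\id_d)$ has trade-off function $G_{\mu'}$ with $\mu'=\mu_2\|\theta-\theta'\|_2/b\le\mu_2$, and then tacitly uses $G_{\mu'}\ge G_{\mu_2}$. Your reduction supplies both of those unstated steps: invariance of the trade-off under affine bijections, discarding the independent $\mathcal{N}(0,1)$ coordinates via the simulation argument, Neyman--Pearson, and monotonicity of $G_v$ in $v$. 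For the nuisance-coordinate step you only need the direction your simulation argument gives, namely that the full-dimensional trade-off is at least the one-dimensional one. The other direction, via projection, is true but not required.

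Two cosmetic points:
\begin{itemize}
\item If $\Delta=0$, the rescaling by $\mu/\Delta$ is undefined. In that case the two output laws coincide, so $T(\alpha)=1-\alpha\ge G_\mu(\alpha)$ trivially.
\item After translating by $-\mathcal{A}(D)$, the second mean is $-\delta$, so rotating $\delta$ onto $e_1$ leaves the mean at $-ve_1$. Either rotate $-\delta$ onto $e_1$ instead, or note that $T(\mathcal{N}(0,1),\mathcal{N}(-v,1))=G_v$ by reflection.
\end{itemize}
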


%% file: Notation/subsampling.tex
\subsection{Assumptions on subsampling and $m$-out-of-$n$ sampling}
For all our utility results, we will need the following standard assumptions from~\cite{RESAMPLING,politis1999subsampling}. Let $\tau_n$ be a scaling factor (typically, it is equal to $\sqrt{n}$). Let $F_n(t):=P(\tau_n d(\hatn,\theta)\leq t)$ and $F(t) := \lim_{n \to \infty} F_n(t)$ denote the CDF of the empirical distribution and the limiting distribution respectively. For $x,y\in \mathbb{R}^d$, we use $d(x,y)=\|x-y\|_2$.
\begin{assumption}\label{ass:limit}
    We assume that 
    \begin{enumerate}
        \item $F$ is non-degenerate and continuous.
        \item $\sup_t |F_n(t)-F(t)|\rightarrow 0.$ 
        \item $\tau_m/\tau_n\rightarrow 0$, $m/n\rightarrow 0$, and $B\rightarrow\infty$.
        \item (Resampling validity.) $\Pr(\tau_n d(\hatn,\theta)\leq \qnm)=1-\alpha+o_n(1)$, where $\qnm$ denotes the scaled quantile defined in Algorithm~\ref{alg:gdp_quantile_subsample}. In particular, $Q_{n,m} \stackrel{p}{\rightarrow} F^{-1}(1-\alpha).$

    \end{enumerate}
\end{assumption}
These assumptions can be verified using straightforward probabilistic arguments for both $m$-out-of-$n$ sampling and subsampling. For completeness, we provide a detailed discussion in Appendix~\ref{sec:resampDetails}.


%% file: Algorithms/_algorithms.tex
\input{Algorithms/subsampling_methods}
\input{Algorithms/global}
\input{Algorithms/smoothed}
\input{Algorithms/applications}

%% file: Algorithms/subsampling_methods.tex
\section{Private confidence regions for mean-like estimators}

In this section, we develop a one-shot procedure for constructing differentially private confidence regions for the mean-like estimators introduced in Definition~\ref{def:gs-bounded}. Our construction applies to both sampling without replacement and $m$-out-of-$n$ sampling with replacement. As in standard resampling-based inference, we compute the estimator on the full dataset and on $B$ resamples of size $m$, and use an empirical quantile of the resulting resampling errors to determine the radius of the confidence region. The main difference is that we do not privatize the individual resample estimates. Instead, we add noise only to the final resampling quantile, together with a separate perturbation of the full-sample
estimator.

A technical issue that arises is controlling the sensitivity of this empirical quantile. Under sampling without replacement, a changed observation can appear at most once in each resample. Under sampling with replacement, the same observation may occur multiple times, and the sensitivity is governed by the maximum multiplicity of any original observation within the resamples. We first give several examples of mean-like estimators. We then establish a high-probability bound on the maximum multiplicity under with-replacement sampling and introduce a multiplicity-conditioned sampling procedure. Finally, we present the resulting confidence-region algorithm $\dpgs$ and establish its privacy and utility guarantees.


Several standard classes of estimators satisfy the sensitivity condition $\Delta_2(\theta_n) \leq \ell_n/n$ in Definition~1. We state three representative examples.

\begin{enumerate}

    \item[(i)] \emph{Bounded averages and their Lipschitz transforms.} Suppose
    \[
    \theta_n(D)=\frac{1}{n}\sum_{i=1}^n \psi_n(X_i)\in\mathbb{R}^d \quad\text{and}\quad \|\psi_n(x)\|_2\le R_{n,d}\ \ \forall x,
    \]
    where $D$ is the dataset $\{X_1, X_2, \dots, X_n\}$. Then, for any neighboring datasets $D$ and $D'$, ${\lVert \theta_n(D)-\theta_n(D')\rVert}_2 \le {2 R_{n,d}}/{n}.$ It follows that
    \[
    \Delta_2(\theta_n) \le \frac{2R_{n,d}}{n}.
    \]
    For instance, if $\psi$ is the identity function (capturing the mean-estimation problem) and $\|X_i\|_\infty\leq c_n$ for all $i$, then $R_{n,d}= 2 c_n \sqrt{d}$. This can also be applied to light-tailed distributions by first clipping the data: for instance, if it is known that the $X_i$'s are sub-Gaussian with variance proxy $\sigma_{\rm {proxy}}^2$, we can clip each coordinate at $\sigma_{\rm {proxy}} \sqrt{\log n}$ to get $R_{n,d}=\Theta(\sigma_{\rm {proxy}}^2\sqrt{d\log n})$.
    
    $\ell_2$-Lipschitz functions of averages also have global sensitivity of the same form with the sensitivity multiplied by the Lipschitz constant; if $g$ is $L$-Lipschitz in the $\ell_2$ norm, then
    \[
        \Delta_2(g\circ\theta_n)
        \leq
        \frac{2L R_{n,d}}{n}.
    \]

    \item[(ii)] \emph{Strongly convex regularized $M$-estimators.} Let $\ell(\cdot;x)$ be a convex loss function which is $G$-Lipschitz in $y$ (in the $\ell_2$ norm). Then, the $M$-estimator
    \[
        \theta(D)
        =
        \arg\min_{y\in\mathbb{R}^d}
        \left\{
            \frac{1}{n}\sum_{i=1}^n \ell(y;X_i)
            +
            \frac{\lambda}{2}\|y\|_2^2
        \right\},
    \]
    has global sensitivity $\Delta_2(\theta(D)) \le {2G}/{\lambda n}$ (see Lemma~7 in~\cite{Chaudhuri2011DPERM}).

    \item[(iii)] \emph{$U$-statistics with a bounded kernel.}
    Let
    \[
        \theta_n(D)
        =
        \binom{n}{r}^{-1}
        \sum_{1\leq i_1<\cdots<i_r\leq n}
        h_n(X_{i_1},\ldots,X_{i_r})
    \]
    be a $U$-statistic of order $r$, where $r$ may depend on $n$.
    Suppose the range of $h_n$ has $\ell_2$-diameter at most $b_n$, that is,
    $\sup_{z,z'\in\mathcal{X}^r} \|h_n(z)-h_n(z')\|_2 \leq b_n.$
    
    Note that changing one observation affects at most
    $\binom{n-1}{r-1}$ of the summands. Therefore,
    \[
        \Delta_2(\theta_n)
        \leq {\binom{n-1}{r-1}}{\binom{n}{r}}^{-1} b_n =
        \frac{r b_n}{n}.
    \]
    Thus, $\theta_n$ is mean-like with $\ell_n = r b_n$.
\end{enumerate}


\subsection{Bounding the multiplicity for sampling with replacement}
\label{sec:bounding_multiplicity}
Under with-replacement sampling, an index $j\in[n]$ may appear multiple times in a subsample $S$. Even if $D$ and $D'$ differ only on the data point $X_j$, the resampled datasets $D[S] = \{X_i\}_{i \in S}$ and $D'[S]$ can differ in as many as $m$ entries; if $\kappa(S)$ is the maximum multiplicity of any data point in $S$ and $\Delta_m$ is the global sensitivity of $\theta$ on datasets of size $m$, then the worst-case change on one subsample can be $\Delta_m\,\kappa(S)$, which is large in the worst case.

To control the sensitivity uniformly over $B$ subsamples $S_1,\dots,S_B$, we work on the event that all $\kappa(S_i)\leq \kappa_*$, for $i\in [B]$.
Under this event, $D[S_i]$ and $D'[S_i]$ differ in at most $\kstar$ data points.

\begin{proposition}[Multiplicity bound for sampling with replacement]
\label{prop:kstar_choice}
Let $m<n$, $B\in\mathbb Z_+$, and $\rho\in(0,1)$. For each $i \in [B]$, suppose we draw $S_i = \{S_{i,t}\}_{t \in [m]},$ where $S_{i,t}$ are $\iid$ draws from $[n]$, independently across $i$. Let
\[
    \kappa(S_i)
    :=
    \max_{j\in[n]}
    \#\{t\in[m]:S_i(t)=j\},
\]
and $\mathcal{E}_{\kappa} :=\cap_{i=1}^B\{\kappa(S_i)\leq\kappa\}.$ Then, the constant
\ba{
    \kappa_\star(n,m,B,\rho)
    :=
    \min\left\{
        m,\,
        \left\lceil
            \log \frac{Bn}{\rho} \cdot \bb{W_0\bb{\frac{n}{me} \log \frac{Bn}{\rho}}}^{-1}
        \right\rceil-1
    \right\} \label{def:kstar}
}
satisfies $\Pr(\mathcal E_{\kappa_\star}^{\,c})\leq\rho.$\footnote{The function $W_0$ denotes the principal branch of the Lambert $W$ function. For any positive real number $x$, $W_0(x)$ is the unique positive real number such that $W_0(x)e^{W_0(x)} = x.$}
\end{proposition}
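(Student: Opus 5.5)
The plan is a union bound over all $B$ resamples and all $n$ indices, followed by a tail bound for a single binomial count. Fix $i\in[B]$ and $j\in[n]$, and let $N_{ij}:=\#\{t\in[m]:S_{i,t}=j\}$. Then $N_{ij}\sim\mathrm{Bin}(m,1/n)$, and
\[
\Pr(\mathcal E_\kappa^{\,c})\le \sum_{i=1}^B\sum_{j=1}^n \Pr(N_{ij}\ge \kappa+1)= Bn\,\Pr(N_{11}\ge \kappa+1).
\]
If $\kappa_\star=m$, the event $\mathcal E_{\kappa_\star}$ holds trivially, because each multiplicity is at most $m$. So we may assume $\kappa_\star<m$. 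In that case it suffices to show $\Pr(N_{11}\ge k)\le \rho/(Bn)$ with $k:=\kappa_\star+1=\lceil L/W_0(nL/(me))\rceil$, where $L:=\log(Bn/\rho)$.

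For the single binomial tail we use the elementary bound
\[
\Pr(\mathrm{Bin}(m,1/n)\ge k)\le \binom{m}{k}n^{-k}\le \frac{m^k}{k!\,n^k}\le \Big(\frac{em}{kn}\Big)^k .
\]
The first inequality is a union bound over the $k$-subsets of trials that all land on $j$, and the last uses $k!\ge (k/e)^k$. It remains to check that $(em/(kn))^k\le e^{-L}$, which is equivalent to
\[
k\log\frac{kn}{em}\ge L .
\]

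The main obstacle is verifying this last inequality from the Lambert-$W$ definition, and we handle it by monotonicity. Write $c:=n/(em)$ and set $k_0:=L/W$ with $W:=W_0(cL)$. The defining identity $We^W=cL$ gives $e^W=cL/W=ck_0$, so $W=\log(ck_0)$. Hence $k_0\log(ck_0)=L$, meaning $k_0$ solves the equation with equality.

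Next, the function $g(x)=x\log(cx)$ is increasing wherever $cx\ge 1/e$, since $g'(x)=\log(cx)+1$. We have $ck_0=e^{W}\ge 1$ because $W>0$. Since $k=\lceil k_0\rceil\ge k_0$, monotonicity gives $g(k)\ge g(k_0)=L$, which is the required bound.

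A small point to watch is the edge case where $k_0$ is an integer, so that $k=k_0$ and the bound holds with equality. This is harmless because the target is $\le\rho$, not strict. Summing over the $Bn$ pairs then yields $\Pr(\mathcal E_{\kappa_\star}^{\,c})\le Bn\cdot\rho/(Bn)=\rho$.
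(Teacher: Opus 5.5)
Your proof is correct and establishes the proposition for the Lambert-$W$ constant exactly as stated. The skeleton matches the paper's. You both use a union bound over the $Bn$ pairs $(i,j)$ and the tail bound $\Pr(\mathrm{Bin}(m,1/n)\ge k)\le (em/(kn))^k$. The paper quotes this as a Chernoff bound, while you derive it from a union bound over $k$-subsets of trials.

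The routes differ in the last step. The appendix proof never uses $W_0$. It takes $k=\lceil 4A_0/M\rceil$ with $A_0=\max\{L,1\}$ and $M=\log(n/m)+\log A_0$. It then shows $k\log\bigl(kn/(em)\bigr)\ge A_0$ by a two-case lower bound on $M+\log 4-\log M-1$.

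That choice gives an elementary closed form whose order $\Theta\bigl(L/(\log(n/m)+\log L)\bigr)$ is visible at a glance, but it is looser by a constant factor. Since $k\log(ck)\ge L>0$ forces $ck>1$, the same monotonicity you use gives $k\ge k_0$. Hence $\lceil 4A_0/M\rceil\ge\lceil k_0\rceil=\kappa_\star+1$, so the appendix argument only controls the event that some multiplicity reaches $\lceil 4A_0/M\rceil$. That does not imply the stated bound at $\kappa_\star$.

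Your argument justifies the sharper threshold that the algorithm actually uses. It rests on three pieces: $k_0=L/W_0(cL)$ solves $k_0\log(ck_0)=L$ with equality; $x\mapsto x\log(cx)$ is monotone on $[k_0,\infty)$; and the shift $k=\kappa_\star+1$ accounts for the $-1$ in the definition.
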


To get an idea for the size of $\kstar$ in Proposition~\ref{prop:kstar_choice}, let $A := \log (Bn/\rho)$ and $M := \log (An/m).$ Then, $W_0(An/me) = M-1-\log(M-1) + o(1)$. For $M \ge 2,$ this implies
\[
\kstar = \Theta\bb{\frac{\log (Bn/\rho)}{\log (n/m) + \log \log (Bn/\rho)}},
\]
which is \textit{sub-logarithmic} in both $B$ and $n$.

The proof of Proposition~\ref{prop:kstar_choice} is deferred to Appendix~\ref{sec:resampDetails}. Algorithm~\ref{alg:subsample_mult} generates $B$ $m$-out-of-$n$ with-replacement indices $S_1, \dots, S_B$ and returns failure iff some $j \in [n]$ is sampled more than $\kappa$ times in some $S_i$. By Proposition~\ref{prop:kstar_choice}, the algorithm fails with probability at most $\rho$ if $\kappa \ge \kappa_\star(n,m,B,\rho)$.

\begin{algorithm}[H]
\caption{\texttt{\textbf{M}ultiplicity-\textbf{C}onditioned $m$-out-of-$n$ \textbf{S}ampling}}
\label{alg:subsample_mult}
\KwIn{Sample size $n$, resample size $m$, number of resamples $B$, threshold $\kappa$}


\For{$i \gets 1$ \KwTo $B$}{

   Draw the multi-set $S_i=\{S_{i1},\ldots,S_{im}\}$, where $S_{i1},\ldots,S_{im}\stackrel{\mathrm{iid}}{\sim}\mathrm{Unif}([n])$
    
    $\kappa_i \gets \max_{j \in [n]} \bigl|\{t \in [m] : S_{it} = j\}\bigr|$
    
    \If{$\kappa_i > \kappa$}{
        \Return $\bot$\;
    }
}
\Return $(S_1,\dots,S_B)$\;
\end{algorithm}

%% file: Algorithms/global.tex
\subsection{Algorithm for mean-like estimators}
\label{sec:global_algorithm}

We now present \dpgs, a procedure for constructing differentially private confidence sets using subsampling-based quantile estimation when the statistic $\theta$ has bounded global $\ell_2$-sensitivity.

Let $\widehat{\theta}_n=\theta(D)$ be the estimator computed on the full dataset, and let $\Delta_n$ and $\Delta_m$ denote the
global sensitivities of $\theta$ on datasets of sizes $n$ and $m$,
respectively. $\dpgs$ proceeds as follows:

\begin{enumerate}
    \item
    Draw $B$ resamples $S_1,\ldots,S_B$ of size $m$ each. For sampling
    without replacement, set $\kstar=1$. For sampling with
    replacement, set $\kstar = \kstar(n,m,B,\rho)$ (equation~\eqref{def:kstar}) and accept the resamples only when the multiplicity of every observation in every resample is at most $\kstar$.

    \item
    Compute the resample estimates $\theta_i^\star=\theta(D[S_i])$ for $i = 1,2,\dots, B.$ These estimates are used only in the algorithm and are \textit{not released}.

    \item
    Compute the $\tau_m$-scaled empirical resampling quantile
    \[
        Q_{n,m}
        =
        \tau_m \cdot
        \texttt{Quantile}_{1-\alpha}
        \bb{
            \norms{\theta_1^\star-\widehat{\theta}_n}_2,
            \norms{\theta_2^\star-\widehat{\theta}_n}_2,
            \ldots,
            \norms{\theta_B^\star-\widehat{\theta}_n}_2
        }.
    \]

    \item
    Privatize the center and the radius by releasing
    \[
        \widetilde{\theta}_n
        =
        \widehat{\theta}_n+\nu_n Z_2,
        \qquad
        \widetilde{Q}_{n,m}
        =
        \max\{Q_{n,m}+\sigma_m Z_1,0\},
    \]
    where $Z_1\sim N(0,1)$ and $Z_2\sim N(0,\id_d)$ are independent, and the
    noise scales $\nu_n$ and $\sigma_m$ are calibrated to the sensitivities
    of the full-sample estimator and the resampling quantile, respectively.

    \item
    Return the confidence region
    \[
        \mathcal{C}_{n,\alpha}
        =
        \left\{
            u\in\mathbb{R}^d:
            \norms{u-\widetilde{\theta}_n}_2
            <
            \frac{\widetilde{Q}_{n,m}}{\tau_n}
        \right\}.
    \]
\end{enumerate}

The sensitivity of each resampling distance
$\norms{\theta_i^\star-\widehat{\theta}_n}_2$ is at most
$\kstar\Delta_m+\Delta_n$. Consequently, the empirical quantile has
the same sensitivity bound, since any order statistic is $1$-Lipschitz with
respect to the maximum change in its inputs. Thus, up to the allocation of
the privacy budget between the center and the radius, the required noise
scale for the quantile is proportional to $\tau_m\bigl(\kstar\Delta_m+\Delta_n\bigr).$

\begin{algorithm}[ht]
\caption{\label{alg:gdp_quantile_subsample}\texttt{ Confidence Regions via Global Sensitivity (\dpgs)}
}
\KwIn{Dataset $D$, GDP parameter $\mu$, functional $\theta(\cdot)$, confidence parameter $\alpha$, global sensitivity $\Delta(\cdot)$, scale function $\tau(\cdot)$, number of subsamples $B$, subsample size $m$, failure probability $\rho$, subsampling mode $\textsf{Mode} \in \{\WR, \WOR\}$)}


\noindent
\begin{minipage}[t]{0.49\linewidth}
\vspace{0pt}
\If{\textsf{Mode} = \WR}{
    $\kstar \gets \kappa_\star(n,m,B,\rho)$
    of equation~\eqref{def:kstar}

    $(S_1,S_2,\dots,S_B)
    \gets \texttt{MCS}(n,m,B,\kstar)$

    \If{$(S_1,S_2,\dots,S_B)=\bot$}{
        \Return $\bot$
    }
}
\end{minipage}
\hfill
\begin{minipage}[t]{0.49\linewidth}
\vspace{0pt}
\If{\textsf{Mode} = \WOR}{
    $\kstar \gets 1$

    $(S_1,S_2,\dots,S_B)
    \gets \texttt{Subsample}(n,m,B)$
}
\end{minipage}

\vspace{0.75ex}


$\hatn \gets \theta(D)$ \Comment{\textit{Estimator on full dataset}}


\For{$i \gets 1$ \KwTo $B$}{
  $\theta^*_i \gets \theta(D[S_i])$ \Comment{\textit{Resample estimates}}
}

$\qnm \gets \tau_m
        \texttt{Quantile}_{1-\alpha}
        \bb{
            \norms{\theta_1^\star-\widehat{\theta}_n}_2,
            \ldots,
            \norms{\theta_B^\star-\widehat{\theta}_n}_2
        }$ \Comment{\textit{Empirical quantile (scaled)}}

\vspace{0.75ex}


$Z_1 \sim \mathcal{N}(0, 1) ; Z_2 \sim \mathcal{N}(0, \id_d)$

$\tiln \gets \hatn +\nu_n Z_2,$ where $\nu_n = \frac{\sqrt{2} \Delta_n}{\mu}$

$\qnmp \gets \max(\qnm + \sigma_m Z_1,0),$ where $\sigma_m = \dfrac{\sqrt{2} \tau_m (\kstar \Delta_m + \Delta_n)}{\mu}$

\Return $(\tiln,\qnmp)$
\end{algorithm}

\begin{theorem}[Privacy and utility of \dpgs]
\label{thm:global_guarantee}
Algorithm~\ref{alg:gdp_quantile_subsample} is $\mu$-GDP.  The output $(\tiln, \qnmp)$ of Algorithm~\ref{alg:gdp_quantile_subsample} satisfies
\[
\Pr\bb{\norms{\theta - \tiln}_2 < \frac{1}{\tau_n}\qnmp} = 1-\alpha + o_n(1),
\]
as long as assumptions~\ref{ass:limit} hold, the failure probability $\rho = o_n(1)$, and $\gamma = o_n(1)$, where $\gamma := \sigma_{m}\sqrt{2\log (1/\rho)} +  \tau_n\nu_n(\sqrt{d}+\sqrt{2\log (1/\rho)}).$ 
Here, $\sigma_m=\sqrt{2}\tau_m\mu^{-1} (\kstar\Delta_m+\Delta_n)$ and $\nu_n = \sqrt{2}\mu^{-1}\Delta_n.$

The confidence radius has a private overhead with scale $\tau_n^{-1} \sigma_m = \Theta(\tau_m \tau_n^{-1}\mu^{-1} (\kstar\Delta_m+\Delta_n))$.
\end{theorem}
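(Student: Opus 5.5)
For privacy, the plan is to treat the random index sets $(S_1,\dots,S_B)$ as data-independent and condition on them. The output of \texttt{MCS} depends only on $(n,m,B,\kstar)$, and the same holds for \texttt{Subsample}. The $\bot$ event is therefore a data-independent post-processing branch: its probability is identical under $D$ and $D'$, and it contributes nothing to the trade-off function. Since a mixture over data-independent randomness of $\mu$-GDP mechanisms (with the same mixing weights for $D$ and $D'$) is again $\mu$-GDP by joint convexity of the trade-off comparison, it suffices to fix an accepted $(S_1,\dots,S_B)$ and prove that $(\tiln,\qnmp)$ is $\mu$-GDP.

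With the resamples fixed, take neighbors $D\sim D'$ differing at index $j$. Then $D[S_i]$ and $D'[S_i]$ differ in at most $\kstar$ entries, because every multiplicity is at most $\kstar$. Chaining $\kstar$ single-entry swaps, the triangle inequality gives $\norms{\theta(D[S_i])-\theta(D'[S_i])}_2\le \kstar\Delta_m$. Together with $\norms{\hatn(D)-\hatn(D')}_2\le\Delta_n$ and the reverse triangle inequality, each distance $\norms{\theta_i^\star-\hatn}_2$ changes by at most $\kstar\Delta_m+\Delta_n$. Order statistics are $1$-Lipschitz in the $\ell_\infty$ norm, so $\qnm$ has sensitivity $\tau_m(\kstar\Delta_m+\Delta_n)$. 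The Gaussian mechanism (Proposition~\ref{prop:gaussian_mechanism}) then makes the release of $\qnm+\sigma_m Z_1$ a $(\mu/\sqrt2)$-GDP mechanism. The clipping $\max(\cdot,0)$ is post-processing. Likewise, $\tiln$ is $(\mu/\sqrt2)$-GDP. Composition \eqref{eq:pure_gdp_composition} gives $\sqrt{\mu^2/2+\mu^2/2}=\mu$.

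For utility, first work on the intersection of three events. The first is that \texttt{MCS} succeeds, which has probability at least $1-\rho$ by Proposition~\ref{prop:kstar_choice}. The second is $|\sigma_m Z_1|\le\sigma_m\sqrt{2\log(1/\rho)}$. The third is $\tau_n\nu_n\norms{Z_2}_2\le \tau_n\nu_n(\sqrt d+\sqrt{2\log(1/\rho)})$, which follows from Gaussian norm concentration. Each fails with probability $O(\rho)$. On this event, $|\qnmp-\qnm|\le\sigma_m\sqrt{2\log(1/\rho)}$; here we use $\qnm\ge0$ so that clipping only helps. Also $\tau_n|\norms{\theta-\tiln}_2-\norms{\theta-\hatn}_2|\le \tau_n\nu_n\norms{Z_2}_2$. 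Hence, up to $O(\rho)$,
\[
\Pr\bigl(\tau_n\norms{\theta-\hatn}_2\le \qnm-2\gamma\bigr)-O(\rho)\le \Pr\bigl(\tau_n\norms{\theta-\tiln}_2<\qnmp\bigr)\le \Pr\bigl(\tau_n\norms{\theta-\hatn}_2< \qnm+2\gamma\bigr)+O(\rho).
\]
Conditioning on success of \texttt{MCS} changes the law of the resamples by at most total variation $O(\rho)$. So Assumption~\ref{ass:limit} still applies to $\qnm$ up to $o(1)$.

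The main obstacle is showing that the $\pm2\gamma$ shift is negligible, since we cannot simply use resampling validity at a shifted threshold. The plan is to use $\qnm\stackrel{p}{\to}q:=F^{-1}(1-\alpha)$ and $\gamma\to0$. Together these give that $\qnm\pm2\gamma$ lies in $[q-\eta,q+\eta]$ with probability $1-o(1)$ for any fixed $\eta>0$. Then $\Pr(\tau_n\norms{\theta-\hatn}_2\le q\pm\eta)=F_n(q\pm\eta)\to F(q\pm\eta)$ by uniform convergence. Continuity of $F$ at $q$ gives $F(q\pm\eta)\to1-\alpha$ as $\eta\downarrow0$. Sending $n\to\infty$ and then $\eta\to0$ yields $1-\alpha+o_n(1)$. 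The overhead claim follows because $\E|\qnmp-\qnm|\le\sigma_m\E|Z_1|$, giving a radius overhead of order $\sigma_m/\tau_n$ with $\sigma_m$ as defined.
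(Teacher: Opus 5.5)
Your proposal is correct and takes essentially the same route as the paper. For privacy, both use data-independent resampling, the per-pivot sensitivity bound $\kappa_\star\Delta_m+\Delta_n$, $\ell_\infty$-Lipschitzness of order statistics, two $(\mu/\sqrt{2})$-GDP Gaussian mechanisms and composition; for utility, both use Gaussian concentration, a random-cutoff sandwich based on $Q_{n,m}\xrightarrow{p}F^{-1}(1-\alpha)$ and continuity of $F$, and an $O(\rho)$ correction for the multiplicity check. The only cosmetic difference is that you absorb the noise into a deterministic $\pm 2\gamma$ shift of the threshold on a high-probability event, whereas the paper applies Slutsky's theorem to the perturbed pivot $\bar T_n$ and cutoff $\bar Q_{n,m}$ before running the same $\liminf/\limsup$ argument.
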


The proof of Theorem~\ref{thm:global_guarantee} can be found in Appendix~\ref{app:global}.



\rd 
\bk

%% file: Algorithms/smoothed.tex
\section{Private confidence regions for S-friendly estimators}
\label{sec:smooth_sensitivity}

\subsection{Approximate Gaussian Differential Privacy}
\label{sec:approximate_GDP}
$\mu$-GDP requires the output distributions on neighboring datasets
to be at least as difficult to distinguish as the Gaussian pair
$\mathcal N(0,1)$ and $\mathcal N(\mu,1)$. We consider an approximate
version in which one of the output distributions may first be perturbed by
a small amount $\delta$ in total variation.

For $\mu\geq 0$ and $\delta\in[0,1)$, define the trade-off
function
\ba{
    G_{\mu,\delta}(\alpha)
    :=
    \max\{G_\mu(\alpha)-\delta,0\},
    \qquad \alpha\in[0,1]. \label{def:G_mu_delta}
}
The following proposition gives an interpretable, equivalent definition of this trade-off function.

\begin{proposition}[Equivalent characterizations of approximate GDP]
\label{prop:approx_gdp_equiv}
Let $P$ and $Q$ be probability distributions on the same measurable space.
Then $T(P,Q)\geq \widetilde G_{\mu,\delta}$
if and only if there exists a probability distribution $R$ such that
\[
    T(P,R)\geq G_\mu
    \qquad\text{and}\qquad
    \operatorname{TV}(Q,R)\leq\delta.
\]



\end{proposition}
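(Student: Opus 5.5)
We take $\widetilde G_{\mu,\delta}$ in the statement to mean $G_{\mu,\delta}$ from \eqref{def:G_mu_delta}. We prove the two implications separately. The ``if'' direction is a one-line coupling argument. For the ``only if'' direction, we first build an explicit extremal pair of distributions and then transfer it to general $(P,Q)$ using Blackwell's theorem.

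\emph{Sufficiency.} Suppose $R$ satisfies $T(P,R)\ge G_\mu$ and $\TV(Q,R)\le\delta$. Take any test $\phi$ with $\E_P\phi\le\alpha$. Since $0\le\phi\le 1$, we have $|\E_Q\phi-\E_R\phi|\le\TV(Q,R)\le\delta$. Hence
\[
1-\E_Q\phi\;\ge\;1-\E_R\phi-\delta\;\ge\;G_\mu(\alpha)-\delta .
\]
Type II errors are also nonnegative. Taking the infimum over such $\phi$ gives $T(P,Q)(\alpha)\ge\max\{G_\mu(\alpha)-\delta,0\}=G_{\mu,\delta}(\alpha)$.

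\emph{Necessity, step 1: a canonical pair.} Work on the space $\R\cup\{\ast\}$ and set $P'=\mathcal N(0,1)$ and $R'=\mathcal N(\mu,1)$. Let $c$ be defined by $\Phi(c-\mu)=\delta$. Define $Q'$ from $R'$ by moving all of $R'$'s mass on $(-\infty,c)$, which totals $\delta$, onto the atom $\ast$. Then $\TV(Q',R')=\delta$ and $T(P',R')=G_\mu$.

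Next we compute $T(P',Q')$ by Neyman--Pearson. The atom $\ast$ has infinite likelihood ratio because $P'(\ast)=0$. On $[c,\infty)$ the ratio $dQ'/dP'$ equals $e^{\mu x-\mu^2/2}$, which is increasing. On $(-\infty,c)$ the ratio is $0$. So the optimal tests always reject on $\ast$ and otherwise reject on $\{x>t\}$.

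\begin{itemize}
\item For a threshold $t\ge c$ the type I error is $\alpha=1-\Phi(t)$. The type II error is $Q'([c,t])=\Phi(t-\mu)-\delta=G_\mu(\alpha)-\delta$.
\item For larger $\alpha$, the test rejects all of $[c,\infty)$. Its type II error is $0$, and rejecting any part of $(-\infty,c)$ gains nothing.
\end{itemize}

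Together these give $T(P',Q')=G_{\mu,\delta}$ exactly. Checking this Neyman--Pearson computation carefully, including the randomization at the boundary and the value at $\alpha=0$ (which is $1-\delta$), is routine but is the step that needs the most care.

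\emph{Necessity, step 2: transfer via Blackwell.} Assume $T(P,Q)\ge G_{\mu,\delta}=T(P',Q')$. By Blackwell's theorem in the form used in \cite{dong2022gaussian}, pointwise domination of trade-off functions implies there is a Markov kernel $K$ with $KP'=P$ and $KQ'=Q$. Set $R:=KR'$. The data-processing inequality for trade-off functions gives $T(P,R)=T(KP',KR')\ge T(P',R')=G_\mu$. Contraction of total variation under Markov kernels gives $\TV(Q,R)\le\TV(Q',R')=\delta$.

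The main obstacle is this transfer step: confirming that the version of Blackwell's theorem we cite applies to general measurable spaces, where $P'$ has no mass on $\ast$. If needed, the atom can be replaced by a point far to the right, which has a huge likelihood ratio, followed by a limiting argument.
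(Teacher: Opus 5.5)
Your proposal is correct and follows the paper's proof essentially step for step. Both use the same lower-tail-to-atom construction from $\mathcal N(\mu,1)$, the same Neyman--Pearson computation giving $G_{\mu,\delta}$, the Blackwell--Sherman--Stein transfer with $R$ defined as the image of $\mathcal N(\mu,1)$ under the kernel, and the same test-expectation/total-variation argument for the converse. The paper cites Blackwell's theorem directly without addressing the measurability concern you raise.
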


\begin{definition}[Approximate Gaussian differential privacy]\label{def:approx_gdp}
A mechanism $\mathcal M$ is said to satisfy
$(\mu,\delta)$-\emph{Gaussian differential privacy}
($( \mu,\delta)$-GDP) if, for every pair of neighboring datasets
$D$ and $D'$,
\[
    T\bigl(\mathcal M(D),\mathcal M(D')\bigr)(\alpha)
    \geq
    G_{\mu,\delta}(\alpha)
    \qquad
    \text{for all }\alpha\in[0,1].
\]
\end{definition}

Since $G_{\mu,0}=G_\mu,$ $(\mu,0)$-GDP coincides with ordinary $\mu$-GDP. The class of $(\mu,\delta)$-GDP mechanisms is closed under composition.

\begin{proposition}[Adaptive composition]
\label{prop:approx_gdp_composition}
Let $\mathcal{A}_i : \mathcal{X} \times \mathcal{Y}_1 \times \dots \times \mathcal{Y}_{i-1} \to \mathcal{Y}_{i}$ for $i \in [k]$ be mechanisms such that
$\mathcal{A}_i(\cdot, y_1, y_2, \dots, y_{i-1})$ is $(\mu_i, \delta_i)$-GDP for every $y_1 \in Y_1, \dots, y_{i-1} \in Y_{i-1}$ for all $i \in [k]$. Then, the $k$-fold adaptive composition $\mathcal{A} : X \to Y_1 \times \dots \times Y_k$ that jointly releases the outputs of all $k$ mechanisms is $(\mu, \delta)$-GDP, where
\[
\mu^2 = \mu_1^2 + \mu_2^2 + \dots + \mu_k^2 \,\, \text{ and } \,\, 1-\delta = (1-\delta_1)(1-\delta_2) \dots (1-\delta_k).
\]
\end{proposition}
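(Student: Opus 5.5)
We prove Proposition~\ref{prop:approx_gdp_composition} by induction on $k$, so the core task is the two-fold case: if $\mathcal{A}_1$ is $(\mu_1,\delta_1)$-GDP and, for every $y_1$, $\mathcal{A}_2(\cdot,y_1)$ is $(\mu_2,\delta_2)$-GDP, then the joint release is $(\sqrt{\mu_1^2+\mu_2^2},\,1-(1-\delta_1)(1-\delta_2))$-GDP. The inductive step then follows by treating $(\mathcal{A}_1,\dots,\mathcal{A}_{k-1})$ as one mechanism with output space $\mathcal{Y}_1\times\dots\times\mathcal{Y}_{k-1}$. The $\mu$'s add in squares and the products $(1-\delta_i)$ multiply, so the induction is purely algebraic once the base case is in hand.

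\textbf{Base case via the coupling characterization.} Fix neighbors $D,D'$. By Proposition~\ref{prop:approx_gdp_equiv} there is a distribution $R_1$ with $T(\mathcal{A}_1(D),R_1)\ge G_{\mu_1}$ and $\TV(\mathcal{A}_1(D'),R_1)\le\delta_1$. For each $y_1$ there is likewise $R_2^{y_1}$ with $T(\mathcal{A}_2(D,y_1),R_2^{y_1})\ge G_{\mu_2}$ and $\TV(\mathcal{A}_2(D',y_1),R_2^{y_1})\le \delta_2$. The main technical nuisance here is measurable selection of $y_1\mapsto R_2^{y_1}$. To handle it, we would use the explicit construction behind Proposition~\ref{prop:approx_gdp_equiv}, which builds $R$ from $P,Q$ by a density-ratio clipping and is therefore measurable in $y_1$.

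Define the joint distribution $R=R_1\otimes R_2^{\cdot}$: draw $y_1\sim R_1$, then $y_2\sim R_2^{y_1}$. We then need two facts about $R$.

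\emph{(a) Trade-off bound.} $T(\mathcal{A}(D),R)\ge G_{\mu_1}\otimes G_{\mu_2}=G_{\sqrt{\mu_1^2+\mu_2^2}}$. This is exactly the adaptive composition theorem for $f$-DP of \cite{dong2022gaussian}, applied to the pair of product-kernels $(P_1\otimes P_2^{\cdot})$ versus $(R_1\otimes R_2^{\cdot})$. Their proof only uses per-coordinate trade-off bounds that hold uniformly in the conditioning value, so it applies to an arbitrary pair of kernels, not only to mechanisms on neighboring datasets. We also use that the Gaussian tensor product identity is exact.

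\emph{(b) TV bound.} We need $\TV(\mathcal{A}(D'),R)\le 1-(1-\delta_1)(1-\delta_2)$. For this we use maximal couplings. Couple $Y_1'\sim\mathcal{A}_1(D')$ with $Y_1^R\sim R_1$ so that they agree with probability at least $1-\delta_1$. Conditionally on agreement at value $y_1$, couple $\mathcal{A}_2(D',y_1)$ with $R_2^{y_1}$ so that they agree with probability at least $1-\delta_2$. The full pair then agrees with probability at least $(1-\delta_1)(1-\delta_2)$, and the coupling inequality gives the bound. Measurability of the conditional maximal coupling is again routine via density formulas.

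\textbf{Conclusion and main obstacle.} Combining (a) and (b) with the ``if'' direction of Proposition~\ref{prop:approx_gdp_equiv} yields $T(\mathcal{A}(D),\mathcal{A}(D'))\ge G_{\mu,\delta}$ in the two-fold case, and induction finishes the proof. I expect the main obstacle to be step (a): justifying that $f$-DP composition applies with the \emph{surrogate} second distribution $R$ rather than an actual mechanism output. If that argument proves awkward, the fallback is to prove (a) directly through the Neyman--Pearson/likelihood-ratio representation. Under $G_\mu$ domination, each coordinate's privacy-loss distribution is dominated by the Gaussian one, and conditional domination composes by a standard stochastic-ordering argument.
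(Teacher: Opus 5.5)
Your proposal is correct and follows essentially the same route as the paper: reduce to $k=2$, take surrogates $R_1$ and $R_2^{y_1}$ from Proposition~\ref{prop:approx_gdp_equiv}, bound $T(\mathcal{A}(D),R)$ by exact GDP composition, and bound $\TV(\mathcal{A}(D'),R)\le \delta_1+\delta_2-\delta_1\delta_2$ by gluing a maximal coupling of the first coordinates with conditional maximal couplings on the agreement event. The only inaccuracy is in your measurability remark: the paper constructs $R$ in Proposition~\ref{prop:approx_gdp_equiv} through the Blackwell--Sherman--Stein theorem as $R=\mathcal{K}(R_0)$, not by an explicit density-ratio clipping, so measurability of $y_1\mapsto R_2^{y_1}$ would need its own argument (the paper passes over this point silently).
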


The proofs of Propositions~\ref{prop:approx_gdp_equiv} and~\ref{prop:approx_gdp_composition} are deferred to Appendix~\ref{app:ptr_gdp_proof}.



\subsection{Smooth sensitivity and Propose-Test-Release (PTR)}
\label{sec:ptr_gdp}


\begin{algorithm}[H]
\caption{\texttt{PTR-GDP}}
\label{alg:ptr_gdp}
\KwIn{Dataset $D$, statistic $\theta(\cdot)$, smoothness parameter $\beta$, smooth sensitivity $\SS_{\beta}$, privacy parameters $\mu$ and $\delta$}

Compute the smooth sensitivity bound $\SS_{\beta}(D)$

Draw $Z_1 \sim \mathcal N(0,1), Z_2 \sim \mathcal N(0, \id_d)$

$\mathcal{B}(D) \gets \SS_{\beta}(D) \cdot \exp\bb{\frac{\beta}{\mu} \cdot Z_1 + \frac{\beta}{\mu} \cdot \Phi^{-1}(1-\delta)}, \;\;\; \tilde{\theta}(D) \gets \theta(D) + \frac{\mathcal{B}(D)}{\mu} \cdot Z_2$

\Return $\tilde{\theta}(D)$

    
\end{algorithm}

Let $f : \cX^n \to \R$ be a statistic with some local sensitivity bound $\LS_f$ (equation~\ref{eq:definition_local_sensitivity}). Recall that a function $\SS_{\beta}$ is said to be a $\beta$-smooth upper bound on if
\[
\LS_{f}(D) \le \SS_{\beta}(D)
\quad\text{and}\quad
\SS_{\beta}(D) \le e^{\beta}\, \SS_{\beta}(D') \quad \text{for all adjacent } D \sim D',
\]
In this section, we adapt the Propose-Test-Release (PTR) framework~\citep{dwork2009differential} for GDP,
leveraging the facts that smooth sensitivity bounds become Lipschitz after taking logarithms. The smooth sensitivity can be much smaller than the global sensitivity on typical datasets. While the smooth sensitivity may not be efficiently computable in general, there are many estimators such as the sample quantile or $U$-statistics where one can compute it efficiently~\citep{nissim2007smooth, fletcher2017differentially, chaudhuri2024differentially}.

Algorithm~\ref{alg:ptr_gdp} is a PTR-style algorithm that uses Gaussian noise and yields an \emph{approximate} $\mu$-GDP guarantee, as long a smooth upper bound $\SS_\beta$ for $\LS$ is computable. 
The PTR step above is an analog of the private sensitivity framework of \cite[Lemma~4.4]{Karwa2011graph}. We state the corresponding GDP version below.
\begin{lemma}[PTR-GDP is private]
\label{lem:ptr_gdp_privacy}
Suppose $\mu_1, \mu_2 > 0$ and let $\mathcal B : \mathcal{X}^n \to (0, \infty)$ be a $\mu_1$-GDP mechanism such that $\Pr\!\left[
        \mathcal B(D) \geq \LS(\theta, D)
    \right]
    \geq 1-\delta$
for every dataset $D$. Then, the output \[\bb{\mathcal{B}(D),~\theta(D) + \frac{\mathcal{B}(D)}{\mu_2} \cdot \mathcal{N}(0,\id_d)}\] is $(\sqrt{\mu_1^2+\mu_2^2},\delta)\text{-GDP}.$ In particular, Algorithm~\ref{alg:ptr_gdp} is $(\sqrt{2} \mu, \delta)\text{-GDP}.$
\end{lemma}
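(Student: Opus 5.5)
Fix neighboring datasets $D\sim D'$ and write $b=\mathcal B(D)$, $b'=\mathcal B(D')$. The plan is to condition on the released value of $\mathcal B$ and reduce to the Gaussian mechanism plus a $\delta$-mass correction. Concretely, I will build an auxiliary distribution $R$ on the output pair that sits within $\delta$ of the true output distribution $\mathcal M(D')$ in total variation. I will then show $T(\mathcal M(D),R)\ge G_{\sqrt{\mu_1^2+\mu_2^2}}$ by exact adaptive composition~\eqref{eq:pure_gdp_composition}. Finally, Proposition~\ref{prop:approx_gdp_equiv} converts this into $T(\mathcal M(D),\mathcal M(D'))\ge G_{\sqrt{\mu_1^2+\mu_2^2},\delta}$.

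\textbf{Step 1 (good events).} Define the good events $E_D=\{\mathcal B(D)\ge \LS(\theta,D)\}$ and $E_{D'}$ similarly, each of probability at least $1-\delta$. On $E_{D'}$, the definition of local sensitivity gives $\|\theta(D)-\theta(D')\|_2\le \LS(\theta,D')\le b'$.

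\textbf{Step 2 (modified mechanism on $D'$).} Define the modified mechanism $\mathcal M'(D')$ as follows. Release $b'$ as before. Then release $\theta(D')+\frac{b'}{\mu_2}Z$ when $b'\ge \LS(\theta,D')$, and otherwise release $\theta(D)+\frac{b'}{\mu_2}Z$. Let $R$ be its output law. Since $\mathcal M'(D')$ and $\mathcal M(D')$ agree except on $E_{D'}^c$, coupling gives $\TV(\mathcal M(D'),R)\le\delta$.

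\textbf{Step 3 (each stage is GDP).} Compare $\mathcal M(D)$ with $R$ in two stages.
\begin{itemize}
\item First coordinate: $T(\mathcal B(D),\mathcal B(D'))\ge G_{\mu_1}$ by hypothesis.
\item Second coordinate, conditional on the first output equal to $s>0$: the two conditional laws are $N(\theta(D),(s/\mu_2)^2\id_d)$ and $N(c,(s/\mu_2)^2\id_d)$, where $c\in\{\theta(D'),\theta(D)\}$ and $\|\theta(D)-c\|_2\le s$ by construction. Their trade-off function is therefore at least $G_{\mu_2\|\theta(D)-c\|/s}\ge G_{\mu_2}$, by Proposition~\ref{prop:gaussian_mechanism}, i.e., a one-dimensional projection argument.
\end{itemize}

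\textbf{Step 4 (compose).} Adaptive composition of the two stages gives $T(\mathcal M(D),R)\ge G_{\sqrt{\mu_1^2+\mu_2^2}}$. Applying the ``if'' direction of Proposition~\ref{prop:approx_gdp_equiv} yields the first claim.

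\textbf{Main obstacle.} Composition is stated for mechanisms on a dataset, whereas here the second stage of $R$ is not literally the mechanism run on $D'$. The dependence on which branch was taken also makes the second stage depend on the hidden event, not only on $s$. I will handle this directly. For fixed $s$, both branches of the conditional law of $R$ are Gaussians with identical covariance, centered within distance $s$ of $\theta(D)$. So the pointwise-in-$s$ bound holds, and the composition theorem of \cite{dong2022gaussian} only needs such conditional trade-off bounds. If the branch mixing is an issue, the fallback is to make the branch choice a deterministic function of $s$. Since $\LS(\theta,D')$ is a fixed number, the branch is indeed determined by $s$, which resolves it.

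\textbf{Step 5 (Algorithm~\ref{alg:ptr_gdp}).} Check the hypotheses with $\mu_1=\mu_2=\mu$. The map $\log\mathcal B(D)=\log\SS_\beta(D)+\frac{\beta}{\mu}Z_1+\frac{\beta}{\mu}\Phi^{-1}(1-\delta)$ is a Gaussian mechanism with noise $\beta/\mu$ on a statistic of sensitivity $\beta$. It is therefore $\mu$-GDP, and post-processing by $\exp$ preserves this. Also $\Pr[\mathcal B(D)<\SS_\beta(D)]=\Pr[Z_1<-\Phi^{-1}(1-\delta)]=\delta$, and $\SS_\beta\ge\LS$. The noise scale in Algorithm~\ref{alg:ptr_gdp} is $\mathcal B/\mu$, matching $\mu_2=\mu$. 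This gives $(\sqrt2\mu,\delta)$-GDP; the output $\tilde\theta$ alone is a post-processing of the pair.
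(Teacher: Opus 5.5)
Your proposal is correct and follows essentially the same route as the paper. Your modified mechanism $\mathcal M'(D')$ is exactly the paper's auxiliary distribution $R$: it branches on whether the released $b'$ exceeds the fixed number $\LS(\theta,D')$. The paper likewise bounds $\TV(\mathcal M(D'),R)\le\Pr(\mathcal B(D')<\LS(\theta,D'))\le\delta$, composes the $G_{\mu_1}$ first-coordinate bound with the conditional $G_{\mu_2}$ Gaussian bound, and concludes via the ``if'' direction of Proposition~\ref{prop:approx_gdp_equiv}; your verification for Algorithm~\ref{alg:ptr_gdp} also matches the paper's.
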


The proof of Lemma~\ref{lem:ptr_gdp_privacy} can be found in Appendix~\ref{app:ptr_gdp_proof}.

\subsection{Private confidence regions for S-friendly estimators in Definition~\ref{def:ss-bounded}}

\begin{algorithm}[ht]
\caption{\label{alg:smoothed_quantile_subsample}\texttt{Confidence Regions via \textbf{S}moothed \textbf{S}ensitivity (\dpss)}
}
 \KwIn{Dataset $D$, statistic $\theta(\cdot)$, confidence parameter $\alpha$, smoothness parameter $\beta$, smooth sensitivity $\SS_{\beta}$, privacy parameters $\mu$ and $\delta$, failure probability $\rho$, subsampling method $\textsc{Mode} \in \{\WR, \WoR\}$}

\noindent
\begin{minipage}[t]{0.49\linewidth}
\vspace{0pt}
\If{\textsf{Mode} = \WR}{
    $\kstar \gets \kappa_\star(n,m,B,\rho)$
    of equation~\eqref{def:kstar}

    $(S_1,S_2,\dots,S_B)
    \gets \texttt{MCS}(n,m,B,\kstar)$

    \If{$(S_1,S_2,\dots,S_B)=\bot$}{
        \Return $\bot$
    }
}
\end{minipage}
\hfill
\begin{minipage}[t]{0.49\linewidth}
\vspace{0pt}
\If{\textsf{Mode} = \WOR}{
    $\kstar \gets 1$

    $(S_1,S_2,\dots,S_B)
    \gets \texttt{Subsample}(n,m,B)$
}
\end{minipage}

\vspace{0.75ex}

$\hatn \gets \theta(D)$

$\tiln \gets \text{\texttt{PTR-GDP}}(D, \theta, \beta, \SS_{\beta}, \mu/2, \delta/2)$ 

\For{$i \gets 1$ \KwTo $B$}{
  $\theta^*_i \gets \theta(D[S_i])$
}

$\qnm \gets \tau_m
        \texttt{Quantile}_{1-\alpha}
        \bb{
            \norms{\theta_1^\star-\widehat{\theta}_n}_2,
            \ldots,
            \norms{\theta_B^\star-\widehat{\theta}_n}_2
        }$ 

Define the function
$s(D) := \max_{i \in [B]} \SS_{\beta}(D[S_i]) \cdot {(e^{\kstar \beta}-1)}/{(e^{\beta}-1)} + \SS_{\beta}(D)$

$\qnmp \gets \text{\texttt{PTR-GDP}}(D, \qnm, \kstar \beta, \tau_m \cdot s(D), \mu/2, \delta/2)$

$\qnmp \gets \max\{\qnmp, 0\}$

\Return $(\tiln, \qnmp)$
\end{algorithm}

Let $\theta: \mathcal{X}^n \to \mathbb{R}^d$ be an estimator and $\SS_{\beta}$ be a $\beta$-smooth upper bound on the local sensitivity of $\theta$.
Consider a (multi)-set of indices $S_i$ of size $m$ drawn uniformly from $[n]$. Let 
$s_i = \SS_{\beta}(\theta, D[S_i])$ 
be the smooth sensitivity for the dataset $D[S_i]$ (where the $m$ samples in $D[S_i]$ are viewed as distinct elements when computing the smooth sensitivity, regardless of the sampling method).

\begin{lemma}\label{lem:smooth_each_subsample}
Let \[s(D) := \max_{i \in [B]} \SS_{\beta}(\theta, D[S_i]) \cdot \frac{e^{\kstar \beta}-1}{e^{\beta}-1} + \SS_{\beta}(\theta, D).\]
Then, $\tau_m s(D)$ is a $\kstar \beta$-smooth upper bound on the local sensitivity of $\qnm$ of Algorithm~\ref{alg:smoothed_quantile_subsample}.
\end{lemma}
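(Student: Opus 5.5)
We treat the index sets $S_1,\dots,S_B$ as fixed: they are drawn independently of the data, and for $\WR$ we condition on the output of Algorithm~\ref{alg:subsample_mult} being different from $\bot$. Then $\qnm$ and $s(D)$ are deterministic functions of $D$, and there are two things to check, following the definition of a smooth upper bound in \eqref{eq:definition_smooth_sensitivity}. First, $\LS(\qnm;D)\le \tau_m s(D)$. Second, $s(D)\le e^{\kstar\beta}s(D')$ for all neighbors $D\sim D'$.

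\textbf{Step 1: a chaining bound on each resample.} Fix neighbors $D\sim D'$ that differ at index $j$. Since every multiplicity is at most $\kstar$, the size-$m$ datasets $D[S_i]$ and $D'[S_i]$ differ in $k\le\kstar$ positions. Here positions of $D[S_i]$ are treated as distinct records, as stipulated before the lemma. Build a path $E_0=D[S_i],E_1,\dots,E_k=D'[S_i]$ in which consecutive datasets are neighbors of size $m$, obtained by replacing the copies of $X_j$ by $X_j'$ one at a time. The triangle inequality, $\LS\le\SS_\beta$, and iterating the $\beta$-smoothness of $\SS_\beta$ along the path give
\[
\norms{\theta(D[S_i])-\theta(D'[S_i])}_2\le\sum_{t=0}^{k-1}\SS_\beta(\theta,E_t)\le \SS_\beta(\theta,D[S_i])\sum_{t=0}^{k-1}e^{t\beta}\le \SS_\beta(\theta,D[S_i])\,\frac{e^{\kstar\beta}-1}{e^{\beta}-1}.
\]
For the full sample we have directly $\norms{\hatn(D)-\hatn(D')}_2\le \LS(\theta;D)\le\SS_\beta(\theta,D)$.

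\textbf{Step 2: local sensitivity of the quantile.} By the reverse triangle inequality, each distance $\norms{\theta_i^\star-\hatn}_2$ changes by at most the sum of the two bounds in Step 1. That sum is at most $s(D)$ after taking the maximum over $i$. An empirical order statistic is $1$-Lipschitz with respect to the sup-norm change of its inputs, and $\qnm$ is this order statistic multiplied by $\tau_m$. Hence $|\qnm(D)-\qnm(D')|\le\tau_m s(D)$. Since $D'$ was an arbitrary neighbor, this gives $\LS(\qnm;D)\le\tau_m s(D)$.

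\textbf{Step 3: smoothness of $s$.} Using the same path as in Step 1, at most $\kstar$ applications of $\beta$-smoothness give $\SS_\beta(\theta,D[S_i])\le e^{\kstar\beta}\SS_\beta(\theta,D'[S_i])$ for every $i$. Taking the maximum over $i$ preserves this inequality. Also $\SS_\beta(\theta,D)\le e^{\beta}\SS_\beta(\theta,D')\le e^{\kstar\beta}\SS_\beta(\theta,D')$, because $\kstar\ge1$. The constant factor $(e^{\kstar\beta}-1)/(e^\beta-1)$ and the scaling by $\tau_m$ do not depend on the data. Adding the two terms therefore yields $\tau_m s(D)\le e^{\kstar\beta}\tau_m s(D')$, which completes the argument.

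\textbf{Main obstacle.} The delicate step is Step 1. One must ensure that the intermediate datasets $E_t$ are legitimate size-$m$ inputs on which $\SS_\beta$ is defined. One must also ensure that the geometric sum is anchored at $D[S_i]$, which is the dataset whose smooth sensitivity appears in $s(D)$, rather than at $D'[S_i]$. Both points follow from treating repeated copies of $X_j$ as distinct records that are swapped one at a time. In the $\WOR$ case $\kstar=1$, so the factor $(e^{\kstar\beta}-1)/(e^\beta-1)$ equals $1$ and the path has a single step.
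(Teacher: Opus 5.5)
Your proposal is correct and follows the paper's argument: you use the same chaining along a path of at most $\kappa_\star$ single-position swaps, with the geometric factor anchored at $D[S_i]$, followed by the reverse triangle inequality and the $1$-Lipschitz property of order statistics in the sup-norm. Your Step~3, the smoothness of $s$, is the same check the paper carries out, though the paper places it in the privacy proof of Theorem~\ref{thm:smooth_utility} rather than in the lemma's own proof.
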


The proof of Lemma~\ref{lem:smooth_each_subsample} is in Appendix~\ref{app:alg4}.

\begin{theorem}\label{thm:smooth_utility}
\dpss (Algorithm~\ref{alg:smoothed_quantile_subsample}) is $(\mu, \delta)$-GDP. Suppose for any $N \in \mathbb{Z}_{+}$ and $\eta \in (0,1),$ the smooth sensitivity function $\SS_{\beta}$ satisfies $\SS_{\beta}(\theta, D) \le \Gamma_{N,\eta}$ with probability at least $1-\eta$ for datasets $D$ of size $N$. Then, with the choice of the smoothness parameter
\ba{
\beta := \frac{1}{2\kstar} \min\bb{1, \frac{\mu}{4 \sqrt{\log (B/(\delta \rho))}}}, \label{eq:beta_choice}
}
the output $(\tiln, \qnmp)$ of Algorithm~\ref{alg:smoothed_quantile_subsample} satisfies
\[
\Pr( d(\theta, \tiln) < \tau_n^{-1} \qnmp) = 1-\alpha + o_n(1)
\]
provided Assumption~\ref{ass:limit} holds, the failure probability $\rho = o_n(1)$, and 
\[
\Gamma_{n,\rho} = o\bb{\frac{\mu}{\tau_n(\sqrt{d} + \sqrt{\log(1/\rho)})}}, \qquad \Gamma_{m,\rho/B} = o\bb{\frac{\mu}{\tau_m \kstar \sqrt{\log(1/\rho)}}}.
\]
\bk
Moreover, the confidence radius
\[
\tau_n^{-1}\qnmp = \max\bb{\underbrace{\tau_n^{-1}\qnm}_{\text{non-private bootstrap radius}} \;\; + \;\; \underbrace{\sigma_{\text{priv}} \cdot G}_{\text{private overhead}}, \,\, 0},
\]
where $G \sim \mathcal{N}(0,1)$ and $\sigma_{\text{priv}} \le 2 \tau_m \tau_n^{-1}\mu^{-1}(\Gamma_{m,\rho/B}\kstar e^2 + \Gamma_{n,\rho} e)$ with probability at least $1-O(\rho).$

\end{theorem}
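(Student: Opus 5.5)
Privacy and utility are handled separately. Privacy comes by composition; utility comes by showing that both the private noise terms are $o_p(1)$ and then invoking Assumption~\ref{ass:limit}.

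\textbf{Privacy.} First I would condition on the index sets $S_1,\dots,S_B$. These are drawn independently of the data: in \WR{} mode the rejection step of Algorithm~\ref{alg:subsample_mult} depends only on indices, so returning $\bot$ leaks nothing. For fixed $S_1,\dots,S_B$, I would then check the hypotheses of Lemma~\ref{lem:ptr_gdp_privacy} for each PTR-GDP call. Inside Algorithm~\ref{alg:ptr_gdp}, $\log\mathcal B(D)=\log \SS_\beta(D)+\tfrac{\beta}{\mu}Z_1+\text{const}$. Since $\log\SS_\beta$ has sensitivity $\beta$, this is a Gaussian mechanism and hence $\mu$-GDP by Proposition~\ref{prop:gaussian_mechanism}. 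Because $\Pr(\tfrac\beta\mu Z_1\ge -\tfrac\beta\mu\Phi^{-1}(1-\delta))\ge 1-\delta$, we get $\mathcal B(D)\ge \SS_\beta(D)\ge \LS(D)$ with probability at least $1-\delta$. For the first call, $\SS_\beta$ is a $\beta$-smooth bound for $\theta$. For the second call, Lemma~\ref{lem:smooth_each_subsample} says $\tau_m s(D)$ is a $\kstar\beta$-smooth bound on $\LS(\qnm)$ for fixed $S_i$'s. Each call is therefore $(\mu/\sqrt2,\delta/2)$-GDP. Proposition~\ref{prop:approx_gdp_composition} then gives $(\mu,\delta')$-GDP with $1-\delta'=(1-\delta/2)^2$, so $\delta'\le\delta$. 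Truncating at $0$ is post-processing.

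\textbf{Utility.} Let $\mathcal E$ be the intersection of the following events:
\begin{itemize}
\item the multiplicity event $\mathcal E_{\kstar}$ (probability at least $1-\rho$, Proposition~\ref{prop:kstar_choice});
\item $\SS_\beta(\theta,D)\le\Gamma_{n,\rho}$;
\item $\max_i \SS_\beta(\theta,D[S_i])\le \Gamma_{m,\rho/B}$, by a union bound over $B$ resamples (this needs each $D[S_i]$ to behave like an i.i.d.\ size-$m$ sample, which should be justified on $\mathcal E_{\kstar}$ or assumed);
\item the Gaussian exponents $|Z_1|\le\sqrt{2\log(1/\rho)}$ in both calls;
\item $\|Z_2\|_2\le \sqrt d+\sqrt{2\log(1/\rho)}$.
\end{itemize}
Together, $\Pr(\mathcal E)\ge 1-O(\rho)$.

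On $\mathcal E$, the choice \eqref{eq:beta_choice} gives $\kstar\beta\le 1/2$. Hence $(e^{\kstar\beta}-1)/(e^\beta-1)\le \kstar e^{\kstar\beta}\le \kstar e$. The multiplicative factor $\exp\bigl(\tfrac{2\kstar\beta}{\mu}(|Z_1|+\Phi^{-1}(1-\delta/2))\bigr)$ must also be bounded by a constant. This is where the $\sqrt{\log(B/(\delta\rho))}$ in $\beta$ is used: $\tfrac{\kstar\beta}{\mu}\sqrt{\log(1/(\delta\rho))}\lesssim 1/8$. Together these give $\mathcal B\le 2\tau_m e^{2}(\kstar\Gamma_{m,\rho/B}+\Gamma_{n,\rho})$ up to constants. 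This yields the stated form $\tau_n^{-1}\qnmp=\max(\tau_n^{-1}\qnm+\sigma_{\rm priv}G,0)$, with $\sigma_{\rm priv}=\tau_n^{-1}\mathcal B/(\mu/2)$ and $G$ Gaussian independent of $\mathcal B$.

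It then follows that $\tau_n\|\tiln-\hatn\|_2\lesssim \tau_n\Gamma_{n,\rho}(\sqrt d+\sqrt{\log(1/\rho)})/\mu=o(1)$. Also, $|\qnmp-\qnm|\lesssim \tau_m(\kstar\Gamma_{m,\rho/B}+\Gamma_{n,\rho})\sqrt{\log(1/\rho)}/\mu$. This is $o(1)$ by the second hypothesis and by the first one, using $\tau_m\le\tau_n$. Finally, as in Theorem~\ref{thm:global_guarantee}, I would sandwich
\[\{\tau_n d(\hatn,\theta)\le \qnm-\epsilon_n\}\subseteq\{\tau_n d(\tiln,\theta)<\qnmp\}\cup\mathcal E^c\]
and the reverse inclusion with $+\epsilon_n$, where $\epsilon_n\to0$. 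Continuity of $F$ (Assumption~\ref{ass:limit}) together with $\qnm\to_p F^{-1}(1-\alpha)$ gives coverage $1-\alpha+o_n(1)$.

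The main obstacle is the step bounding the log-normal multiplier: choosing constants in $\beta$ so that $\exp(\cdot)\le e$ uniformly on $\mathcal E$, and correctly accounting for $\kstar\beta$ (not $\beta$) in the second PTR call.
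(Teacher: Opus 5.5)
Your proposal is correct and follows essentially the same route as the paper: privacy comes from Lemma~\ref{lem:ptr_gdp_privacy} applied to the $\beta$-smooth bound for $\widehat\theta_n$ and to the $\kstar\beta$-smooth bound $\tau_m s(D)$ of Lemma~\ref{lem:smooth_each_subsample}, combined via Proposition~\ref{prop:approx_gdp_composition}. Utility comes from bounding the log-normal multipliers by $e$ through the choice of $\beta$, showing both private perturbations are $o_p(1)$, and sandwiching the coverage event; the paper phrases this last step via auxiliary outputs, Slutsky, and Lemma~\ref{lem:coverage_estimated_quantile}, and it takes your flagged with-replacement caveat for $\Gamma_{m,\rho/B}$ for granted in its union-bound step as well. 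Your loose ``$2\tau_m e^2(\cdots)$ up to constants'' tightens directly, from your own two bounds, to the exact $\tau_m(\kstar e^2\Gamma_{m,\rho/B}+e\Gamma_{n,\rho})$ needed for the stated $\sigma_{\text{priv}}$.
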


Theorem~\ref{thm:smooth_utility} shows that under mild assumptions on the smooth sensitivity of an estimator of interest, the radius of the confidence region output by Algorithm~\ref{alg:smoothed_quantile_subsample} can be controlled using a high probability upper bound on the smoothed sensitivity of the estimator. 
The proof of Theorem~\ref{thm:smooth_utility} can be found in Appendix~\ref{app:alg4}.




%% file: Algorithms/applications.tex
\section{Applications}
\label{sec:applications}

\subsection{Confidence intervals for population quantiles}
\label{sec:quantile}
Although sample quantiles are consistent, their global sensitivity can be large, making private resampling methods that require small global sensitivity difficult to apply directly. We use $\dpss$ to obtain private confidence intervals for population quantiles, by bounding the smooth sensitivity of the $1-\zeta$ sample quantile in terms of the spacings between appropriate order statistics in the neighborhood of the quantile. The privacy guarantee follows from Theorem~\ref{thm:smooth_utility}, and the utility guarantee reduces to controlling these spacings. The proofs are given in Appendix Section~\ref{app:quantiles}.

\begin{lemma}[Smooth sensitivity of a sample quantile]
\label{lem:ss_quantile}
Let $D=(x_1,\dots,x_n)\in[\ell,u]^n$ and let $x_{(1)}\le \cdots \le x_{(n)}$ be the order statistic of $D$.
Fix a quantile level $1-\zeta\in(0,1)$ and let $h:=\big\lceil (1-\zeta)n\big\rceil$. Then, for any $\beta > 0,$ there exists a $\beta$-smooth upper bound on the local sensitivity of $q_{1-\zeta} = x_{(h)}$. Moreover, for any $C>0$ and $k^\star \ge \max\{\beta^{-1} (\log (C^{-1}(u-\ell)) +3\log n), 0\},$
the smooth sensitivity is bounded as \[\SS_{\beta}^{(1-\zeta)}(\theta, D)
\ \le\
x_{(h+k^\star+1)}-x_{(h-k^\star-1)} + {C}/{n^3}.\]
\end{lemma}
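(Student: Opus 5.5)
We will use the Nissim--Raskhodnikova--Smith construction: define
\[
\SS_\beta(D) := \max_{k\ge 0} e^{-\beta k} A^{(k)}(D), \qquad A^{(k)}(D) := \max_{0\le t\le k+1}\bigl(x_{(h+t)}-x_{(h+t-k-1)}\bigr),
\]
with the conventions $x_{(i)}:=\ell$ for $i\le 0$ and $x_{(i)}:=u$ for $i>n$. Here $A^{(k)}(D)$ is the local sensitivity at distance $k$, namely $\max_{D':d(D,D')\le k}\LS(D')$, or at least an upper bound for it.

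\textbf{Step 1 (smooth upper bound).} Changing one entry of $D$ moves the $h$-th order statistic within $[x_{(h-1)},x_{(h+1)}]$. Hence $\LS(D)\le \max(x_{(h+1)}-x_{(h)},\,x_{(h)}-x_{(h-1)})\le A^{(0)}(D)$. Changing $k$ entries shifts the index of every order statistic by at most $k$, so $A^{(k)}(D')\le A^{(k+1)}(D)$ whenever $D'\sim D$. Therefore
\[
\SS_\beta(D')=\max_k e^{-\beta k}A^{(k)}(D')\le \max_k e^{-\beta k}A^{(k+1)}(D)\le e^{\beta}\SS_\beta(D).
\]
The index-shifting bookkeeping uses the convention $x_{(i)}=\ell$ or $u$ outside $[1,n]$. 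This gives existence, and the bound is computable in $O(n^2)$ time.

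\textbf{Step 2 (tail bound).} Split the maximum over $k$ at $k^\star$.

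For $k\le k^\star$, every window $[h+t-k-1,\,h+t]$ with $0\le t\le k+1$ lies inside $[h-k^\star-1,\,h+k^\star+1]$. By monotonicity of the order statistics, $e^{-\beta k}A^{(k)}(D)\le x_{(h+k^\star+1)}-x_{(h-k^\star-1)}$.

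For $k>k^\star$, use $A^{(k)}\le u-\ell$. Then
\[
e^{-\beta k}(u-\ell)\le e^{-\beta k^\star}(u-\ell)\le \frac{C}{n^3},
\]
since $\beta k^\star\ge \log\bigl((u-\ell)/C\bigr)+3\log n$. Taking the maximum of the two cases bounds $\SS_\beta$ by the sum $x_{(h+k^\star+1)}-x_{(h-k^\star-1)}+C/n^3$, with room to spare.

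\textbf{Main obstacle.} The delicate step is getting the exact index ranges right. This means verifying that $A^{(k)}$ truly dominates the local sensitivity of every dataset within distance $k$, including the boundary padding. It also requires that the window defining $A^{(k)}$ sits inside the range $[h-k^\star-1,\,h+k^\star+1]$ used in the statement. If the paper's indexing turns out to need windows up to $h\pm(k+1)$, we will adjust the definition of $A^{(k)}$ so that $k\le k^\star$ still maps into that range.
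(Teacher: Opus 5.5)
Your proposal is correct and follows the paper's proof essentially step for step: the same Nissim--Raskhodnikova--Smith bound $\max_{k\ge 0}e^{-\beta k}\max_{0\le t\le k+1}(x_{(h+t)}-x_{(h+t-k-1)})$ with boundary padding, the same truncation of the terms with $k>k^\star$ via $e^{-\beta k^\star}(u-\ell)\le C/n^3$, and the same window-containment bound $x_{(h+k^\star+1)}-x_{(h-k^\star-1)}$ for $k\le k^\star$ (the paper also bounds the max by the sum). The one step to tighten is $A^{(k)}(D')\le A^{(k+1)}(D)$ for $D'\sim D$: bounding the two window endpoints independently by $x'_{(i)}\in[x_{(i-1)},x_{(i+1)}]$ only gives a window of length $k+3$, and hence only $2\beta$-smoothness, so you need the observation that a single replacement shifts all order statistics in the same direction, i.e.\ either $x'_{(i)}\in[x_{(i-1)},x_{(i)}]$ for every $i$ or $x'_{(i)}\in[x_{(i)},x_{(i+1)}]$ for every $i$, which enlarges each window by one index on one side only.
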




Suppose $\mathcal{D}$ is a distribution supported on $[\ell, u]$ such that its density $f$ satisfies $f(x) \ge \lambda$ for $x$ \textit{around} the $1-\zeta$ quantile (see the statement of Theorem~\ref{thm:ss_density_quantile_bound} for the precise characterization). By using Lemma~\ref{lem:ss_quantile}, we can show that 
for any positive integer $N$, the smooth sensitivity of the sample quantile with $N$ samples is bounded as $\SS_{\beta}^{(1-\zeta)}(\theta, D) \le \frac{6K_N}{N\lambda} + \frac{C}{N^3}$ with probability at least $1-O(\rho)$, where 
\[
K_N = \max\{\beta^{-1} (\log (C^{-1}(u-\ell)) +3\log N), 0\}.
\]
For such distributions $\mathcal{D}$, Theorem~\ref{thm:smooth_utility} applies with $\Gamma_{N,\rho} = \frac{6K_N}{N\lambda} + \frac{C}{N^3}$ and we can show that with high probability, the confidence radius $\tau_n^{-1} \qnmp$ of the algorithm is equal to the non-private confidence radius $\tau_n^{-1}\qnm$ obtained by resampling, plus a private overhead $\sigma_{\text{priv}} \cdot G$, where $G$ is a standard normal and \(\sigma_{\text{priv}} = \tilde{O}(1/\sqrt{mn}).\)
This leads to the bound on the radius in Table~\ref{tab:rw_comparison}.
See Appendix~\ref{app:quantiles} for the exact statement and calculations.

\subsection{Degenerate U-statistics}
\label{sec:degenerate}

Let \(X_1,\ldots,X_N \overset{\mathrm{i.i.d.}}{\sim} \mathcal P\), and let
\(h:\mathcal X^r\to \mathbb R\) be a bounded symmetric kernel of fixed order
\(r\). The corresponding \(U\)-statistic estimates $\theta := \mathbb E[h(X_1,\ldots,X_r)]$ by averaging \(h\) over all \(r\)-tuples of distinct observations:
\[
    U_N
    :=
    \binom{N}{r}^{-1}
    \sum_{1\le i_1<\cdots<i_r\le N}
    h(X_{i_1},\ldots,X_{i_r}).
\]
The variance of a \(U\)-statistic is generally governed by its first Hoeffding projection
\[
    h_1(x)
    :=
    \mathbb E[h(x,X_2,\ldots,X_r)]-\theta .
\]
When \(h_1(X_1)\) is nonzero with positive probability, the U statistic is
called \textit{non-degenerate} and $\operatorname{Var}(U_N) = \Theta(N^{-1}).$ Moreover, $\sqrt{N} \cdot (U_N -\theta)$ is asymptotically normal.

We say that \(h\) is first-order degenerate if
\[
    \mathbb E[h(X_1,\ldots,X_r)\mid X_1] = \theta
    \qquad \text{almost surely},
\]
or equivalently \(h_1(X_1)=0\) almost surely. In this case the first-order
variance contribution in the Hoeffding decomposition vanishes. Hence, for
fixed \(r\), $\operatorname{Var}(U_N)=O(N^{-2})$. Moreover, if the variance of the second-order projection\footnote{The second-order projection is the function $h_2(x,y) = \E[h(x,y,X_3,\dots,X_r)]-\theta-h_1(x)-h_2(y).$} is nonzero, then
\[
    \operatorname{Var}(U_N)=\Theta(N^{-2}).
\]
Thus, in the usual first-order degenerate but second-order non-degenerate
case, the appropriate asymptotic scaling is \(\tau_N=N\). In this case, $N \cdot (U_N-\theta)$ converges to a sum of centered chi-squared random variables.

Degenerate U-statistics arise in hypothesis tests such as Pearson goodness-of-fit tests~\citep{Gregory1977LargeST,Anderson1952AsymptoticTO} and uniformity tests~\citep{diakonikolas2016collision}, model misspecfication~\citep{li2020degenerate, linton2014}, and subgraph counts~\citep{reinert2010random}. In contrast to non-degenerate U-statistics, which converge to a normal distribution, degenerate U-statistics converge weakly to an infinite sum of chi-squared random variables~\citep{lee2019u}. This makes them a perfect candidate for $m$-out-of-$n$ subsampling, since the bootstrap has been shown to be an inconsistent estimator of their limiting distribution~\citep{Bretagnolle1983,arcones_gine1992}.

Our goal in this subsection is to show how $\dpss$ can be used for this class of statistics to obtain a private asymptotically valid confidence interval for \(\theta\). To obtain the required sensitivity control, we use the reweighted U-statistic $\widehat U_N^{\mathrm{rw}}$ of~\cite{chaudhuri2024differentially}. Suppose that $h$ has bounded
additive range, meaning that
\[
\sup_{x,x'} |h(x) - h(x')| \le C
\]
for some $C$.~\cite{chaudhuri2024differentially} show that the estimator $\widehat U_N^{\mathrm{rw}}$ admits a $\beta$-smooth
sensitivity bound satisfying
\begin{equation}
\Pr\left(
SS_\beta(\widehat U_N^{\mathrm{rw}};D_N)
\le \Gamma^U_{N,\eta}
\right)
\ge 1-\eta,
\qquad
\Gamma^U_{N,\eta}
=
O\left(
\frac{r^{3/2}C \log(1/\eta)}{N^{3/2}}
\right).
\label{eq:degen_ss_bound}
\end{equation}

Under Assumption~\ref{ass:limit} and the sensitivity conditions of Theorem~\ref{thm:smooth_utility},
$\dpss$ therefore produces an asymptotically valid confidence interval.
The following lemma records its confidence radius.

\begin{lemma}[Confidence radius for degenerate kernels]
\label{lem:degenerate-ustat-radius}
Let \(X_1,\ldots,X_N\) be IID samples from a distribution $\mathcal{P}$ over $\mathcal{X}$, and let
\(h:\mathcal X^r\to \mathbb R\) be a first-order degenerate, bounded symmetric kernel of order \(r\). Then, the smooth-sensitivity bound of equation~\eqref{eq:degen_ss_bound} holds with
$\Gamma^U_{n,\rho}$ for the original sample and
$\Gamma^U_{m,\rho/B}$ for each resample. Run $\dpss$ with $\tau_N=N$ and $\beta$ as in equation~\ref{eq:beta_choice}. Then, with probability at least $1-O(\rho)$, $\frac{\widetilde Q_{n,m}}{n}
=
\max\left\{
\frac{Q_{n,m}}{n}
+
\sigma_{\mathrm{priv}}G,\,
0
\right\},$
where $G\sim N(0,1)$ and
\ba{
\sigma_{\mathrm{priv}}
\le
\frac{2m}{n\mu}
\left(
\kappa_\star e^2\Gamma^U_{m,\rho/B}
+
e\Gamma^U_{n,\rho}
\right)
=
O\left(
\frac{\kappa_\star r^{3/2}C \log(1/\rho)}
{\mu n\sqrt m}
\right). \label{eq:degenerate_private_radius}
}
\end{lemma}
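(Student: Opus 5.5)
This lemma is a specialization of Theorem~\ref{thm:smooth_utility}, so the plan is to check its hypotheses for the reweighted $U$-statistic and then substitute $\tau_N = N$ and the bound \eqref{eq:degen_ss_bound}.

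\textbf{Step 1: the smooth-sensitivity hypothesis.} Theorem~\ref{thm:smooth_utility} needs, for every sample size $N$ and level $\eta$, a bound $\SS_{\beta}(\theta,D)\le \Gamma_{N,\eta}$ holding with probability at least $1-\eta$. The result of \cite{chaudhuri2024differentially}, recorded in \eqref{eq:degen_ss_bound}, provides this with $\Gamma_{N,\eta}=\Gamma^U_{N,\eta}$.

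We apply it twice:
\begin{itemize}
\item once to the full sample $D$ of size $n$, at level $\rho$;
\item once to each resample $D[S_i]$ of size $m$, at level $\rho/B$.
\end{itemize}

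A union bound over the $B$ resamples then gives
\[
\max_i \SS_{\beta}(D[S_i])\le \Gamma^U_{m,\rho/B}
\quad\text{and}\quad
\SS_{\beta}(D)\le\Gamma^U_{n,\rho}
\]
simultaneously, with probability $1-O(\rho)$.

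\textbf{Step 2 (main obstacle): resamples are not i.i.d.} Each $D[S_i]$ is an i.i.d.\ sample of size $m$ from $\mathcal P$ when $S_i$ is a uniform $m$-subset (WOR). Under WR it is not, because of repeated points.

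I would handle WR by conditioning on the multiplicity event $\mathcal E_{\kstar}$ of Proposition~\ref{prop:kstar_choice}. Two routes are available:
\begin{itemize}
\item note that the bound of \cite{chaudhuri2024differentially} is driven by the bounded range $C$ of $h$ together with a concentration argument that tolerates at most $\kstar$ duplicates, costing only constants; or
\item restrict to resamples with distinct indices, accepting a small loss in probability.
\end{itemize}
Either way, the $\beta$ used here is $\beta$ from \eqref{eq:beta_choice}, which is at most $1/(2\kstar)$. This keeps the constants in \eqref{eq:degen_ss_bound} uniform; I would check that their guarantee holds for all $\beta\lesssim 1$.

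\textbf{Step 3: read off the radius.} On the event of Step 1, Theorem~\ref{thm:smooth_utility} (whose PTR noise analysis, via Lemma~\ref{lem:smooth_each_subsample}, bounds the noise scale by $e^{\kstar\beta}$ factors, each at most $e$) gives the decomposition
\[
\tau_n^{-1}\qnmp=\max\bigl\{\tau_n^{-1}\qnm+\sigma_{\rm priv}G,\;0\bigr\}
\]
with
\[
\sigma_{\rm priv}\le 2\tau_m\tau_n^{-1}\mu^{-1}\bigl(\kstar e^2\Gamma^U_{m,\rho/B}+e\,\Gamma^U_{n,\rho}\bigr).
\]
Setting $\tau_m=m$ and $\tau_n=n$ gives the first inequality in \eqref{eq:degenerate_private_radius}.

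\textbf{Step 4: simplify.} Substitute $\Gamma^U_{m,\rho/B}=O\bigl(r^{3/2}C\log(B/\rho)/m^{3/2}\bigr)$. The resample term becomes
\[
\frac{2m}{n\mu}\,\kstar e^2\,\Gamma^U_{m,\rho/B}=O\!\left(\frac{\kstar r^{3/2}C\log(B/\rho)}{\mu n\sqrt m}\right).
\]
The full-sample term is
\[
\frac{2m}{n\mu}\,\Gamma^U_{n,\rho}=O\!\left(\frac{m\,r^{3/2}C\log(1/\rho)}{\mu n^{5/2}}\right),
\]
which is dominated by the resample term since $m\le n$.

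Absorbing $\log B$ into $\log(1/\rho)$ (natural when $\rho\le 1/B$, or written as $\log(B/\rho)$) yields the stated $O(\cdot)$. I expect this absorption convention to need stating explicitly.
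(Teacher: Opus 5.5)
Your Steps 1, 3 and 4 are the paper's proof. The appendix plugs the \cite{chaudhuri2024differentially} bound into Theorem~\ref{thm:smooth_utility} with $\tau_N=N$. Its only extra work is unpacking $\Gamma^U$ from the explicit Chaudhuri et al.\ formula, using $\xi\approx C\sqrt{(r/n)\log(2n/\rho)}$ for degenerate kernels and $L(D)=1$ with probability $1-\rho$. It ends with $\tilde O\bigl(r^{3/2}\log(Bn)/(n\sqrt m)\bigr)$, so your caveat that $\log B$ must be absorbed into the logarithm is correct and matches what the paper actually obtains.

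Step 2 raises something the paper does not address: it applies the bound to every resample as if it were an i.i.d.\ sample of size $m$. Your second route for handling WR would fail. Restricting to with-replacement resamples with distinct indices is not a small loss in probability. A single resample is duplicate-free with probability $\prod_{t<m}(1-t/n)\approx e^{-m^2/(2n)}$, and all $B$ of them with probability about $e^{-Bm^2/(2n)}$. This is close to $1$ only if $Bm^2=o(n)$, which fails, for example, for $m=\lceil n/\log n\rceil$ as in the experiments. Conditioning on distinctness also turns WR into WOR, so it does not treat WR at all.

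Only your first route is viable, and it needs an actual argument. For instance, you would show that on $\mathcal E_{\kstar}$ the $r$-tuples containing two copies of one observation form an $O(r^2\kstar/m)$ fraction of the terms of each H\'ajek projection. They then move it by only $O(r^2\kstar C/m)$, which is lower order than $\xi$. You would also need to check that the multiplicity weights do not spoil the concentration of the remaining terms.

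Your remark on $\beta$ points the wrong way. The explicit bound has remainder terms of order $r^3C/(\beta N^2)$ (the paper's $t_{n,\beta}=O_\beta(1/n^2)$), and these grow as $\beta\to0$. Here $\beta^{-1}=2\kstar\max\{1,4\sqrt{\log(B/(\delta\rho))}/\mu\}$ is large. The displayed $O(\cdot)$ therefore tacitly requires these remainders to be dominated, roughly $\kstar r^{3/2}=o(\mu\sqrt m)$ up to logarithmic factors. The condition $\beta\lesssim 1$ is not the relevant one.
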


The proof sketch is given in Appendix~\ref{app:degenerate}. For fixed
$r$, $C$, and privacy parameters, the non-private confidence radius is
of order $n^{-1}$ whereas the privacy overhead is of order
$\widetilde O((n\sqrt m)^{-1})$. Hence, as $m\to\infty$, the privacy
overhead is asymptotically smaller than the nonprivate statistical
error. The square the privacy overhead is
\[
\widetilde O\left(
\frac{\kappa_\star^2 r^3C^2}
{\mu^2mn^2}
\right),
\]
which is the corresponding entry in Table~\ref{tab:rw_comparison}.

%% file: experiments.tex
\section{Experiments}
\label{sec:experiments}

We evaluate private confidence intervals for three estimands: a bounded mean, a first-order degenerate U-statistic, and a population median. We compare BLBQuant~\cite{chadha2024resampling}, the GDP $m$-out-of-$n$ bootstrap~\cite{Dette2025GaussianDP}, and our method using sampling with replacement (WR) and without replacement (WOR).

All experiments use sample sizes
$n\in\{2000,5000,10000,20000,40000,50000\}$, with 400 independent datasets generated for each value of $n$. We use $B=500$ resamples, $N_{\mathrm{mc}}=100$ inner Monte Carlo repetitions for BLBQuant, nominal coverage $0.90$, and total privacy parameter $\mu=8$. We report empirical coverage, mean interval width over finite outputs, and mean runtime.

Privacy budgets are calibrated consistently across methods. The $m$-out-of-$n$ bootstrap assigns a privacy budget of $\mu/\sqrt{2}$ to each of its two blocks. The two pure-DP mechanisms used by BLBQuant are converted to GDP and composed to match the same total GDP parameter.

\begin{figure}[!htbp]
    \centering

    \begin{subfigure}[t]{0.30\linewidth}
        \centering
        \includegraphics[width=\linewidth]{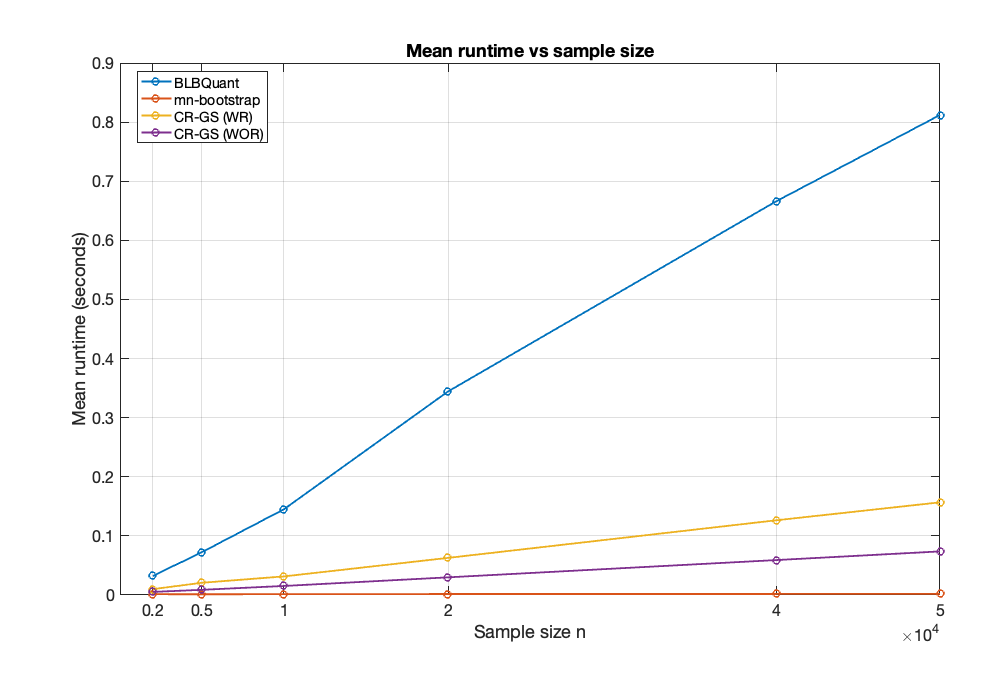}
        \caption{Bernoulli mean.}
        \label{fig:runtime-bernoulli}
    \end{subfigure}
    \hfill
    \begin{subfigure}[t]{0.30\linewidth}
        \centering
        \includegraphics[width=\linewidth]{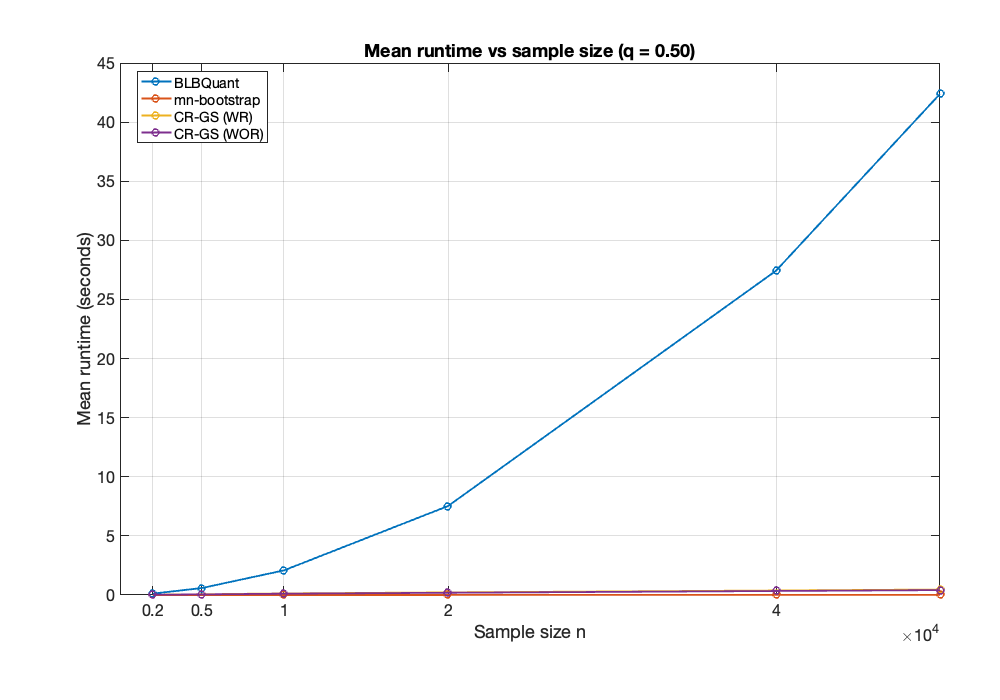}
        \caption{Population median.}
        \label{fig:runtime-median}
    \end{subfigure}
    \hfill
    \begin{subfigure}[t]{0.30\linewidth}
        \centering
        \includegraphics[width=\linewidth]{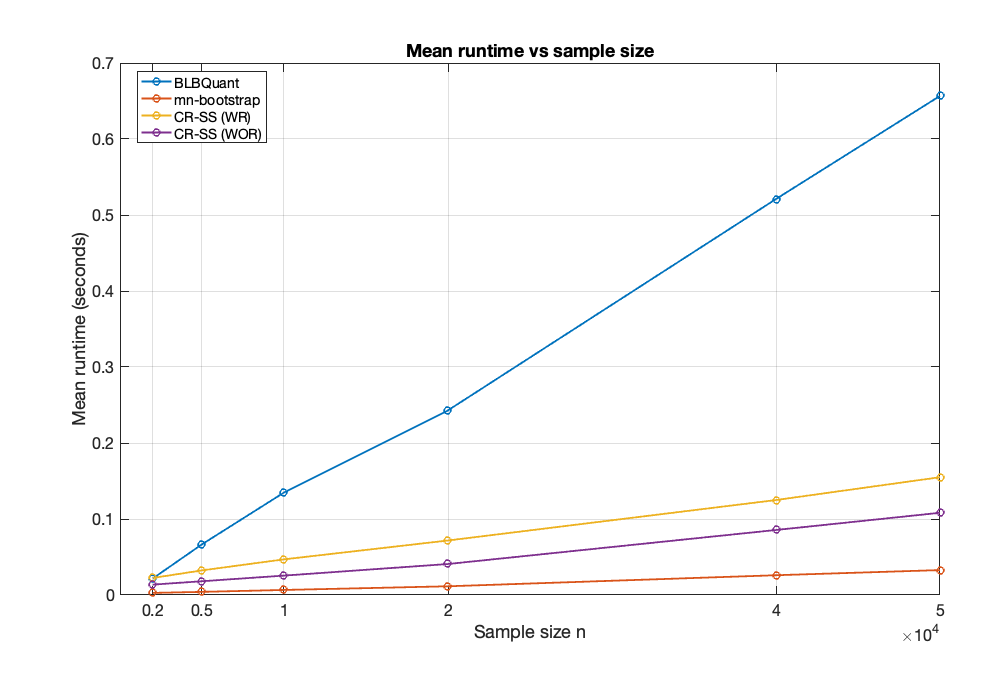}
        \caption{Degenerate U-statistic.}
        \label{fig:runtime-ustat}
    \end{subfigure}

    \caption{\textbf{Mean runtime across three estimation problems.}
    Results are averaged over 400 independent trials for
    \(n\in\{2000,5000,10000,20000,40000,50000\}\).
    We use \(B=500\), \(N_{\mathrm{mc}}=100\), and total privacy
    parameter \(\mu=8\).}
    \label{fig:runtime-main}
\end{figure}

\subsection{Mean estimation}
\label{sec:mean-experiments}

\begin{table}[!htbp]
\centering
\scriptsize
\setlength{\tabcolsep}{2pt}
\renewcommand{\arraystretch}{1.05} 

\begin{subtable}[t]{0.49\linewidth}
\centering
\begin{tabular}{@{}lcccc@{}}
\toprule
Algorithm & cov & \(10^3\!\times\!\mathrm{len}\) & time (ms) & \(n\) \\
\midrule
BLBQuant          & 0.93          & 16.39 & 32.21  & 2000 \\
Dette-Graw & 0.66          & 11.39 & 1.35   & 2000 \\
CR-GS (WR)        & \textbf{0.90} & 15.92 & 9.91   & 2000 \\
CR-GS (WOR)       & \textbf{0.90} & 14.90 & 5.23   & 2000 \\
\midrule
BLBQuant          & \textbf{0.90} & 10.26 & 72.53  & 5000 \\
Dette-Graw & 0.77          & 8.95  & 1.22   & 5000 \\
CR-GS (WR)        & \textbf{0.90} & 10.17 & 20.71  & 5000 \\
CR-GS (WOR)       & 0.86          & 9.46  & 8.78   & 5000 \\
\midrule
BLBQuant          & \textbf{0.89} & 7.18  & 145.11 & 10000 \\
Dette-Graw & 0.82          & 6.63  & 1.40   & 10000 \\
CR-GS (WR)        & \textbf{0.89} & 7.12  & 31.47  & 10000 \\
CR-GS (WOR)       & \textbf{0.88} & 6.76  & 15.31  & 10000 \\
\midrule
BLBQuant          & \textbf{0.92} & 5.08  & 344.29 & 20000 \\
Dette-Graw & \textbf{0.90} & 4.91  & 1.68   & 20000 \\
CR-GS (WR)        & \textbf{0.92} & 5.05  & 62.77  & 20000 \\
CR-GS (WOR)       & \textbf{0.91} & 4.78  & 29.88  & 20000 \\
\midrule
BLBQuant          & \textbf{0.90} & 3.56  & 665.86 & 40000 \\
Dette-Graw & \textbf{0.89} & 3.55  & 1.74   & 40000 \\
CR-GS (WR)        & \textbf{0.90} & 3.57  & 126.34 & 40000 \\
CR-GS (WOR)       & \textbf{0.88} & 3.39  & 59.07  & 40000 \\
\midrule
BLBQuant          & \textbf{0.91} & 3.18  & 812.03 & 50000 \\
Dette-Graw & \textbf{0.89} & 3.12  & 1.81   & 50000 \\
CR-GS (WR)        & \textbf{0.91} & 3.18  & 156.78 & 50000 \\
CR-GS (WOR)       & \textbf{0.89} & 3.04  & 73.74  & 50000 \\
\bottomrule
\end{tabular}
\caption{Bernoulli\((0.05)\) mean.}
\label{tab:bernoulli-results}
\end{subtable}
\hfill
\begin{subtable}[t]{0.49\linewidth}
\centering
\begin{tabular}{@{}lcccc@{}}
\toprule
Algorithm & cov & \(10^3\!\times\!\mathrm{len}\) & time (ms) & \(n\) \\
\midrule
BLBQuant          & \textbf{0.91} & 42.50 & 29.78  & 2000 \\
Dette-Graw & \textbf{0.91} & 42.54 & 1.24   & 2000 \\
CR-GS (WR)        & \textbf{0.89} & 42.38 & 9.01   & 2000 \\
CR-GS (WOR)       & 0.87          & 39.61 & 4.78   & 2000 \\
\midrule
BLBQuant          & \textbf{0.88} & 26.70 & 72.44  & 5000 \\
Dette-Graw & \textbf{0.89} & 26.81 & 1.27   & 5000 \\
CR-GS (WR)        & 0.87          & 26.71 & 20.08  & 5000 \\
CR-GS (WOR)       & 0.86          & 25.22 & 8.76   & 5000 \\
\midrule
BLBQuant          & \textbf{0.88} & 18.88 & 142.71 & 10000 \\
Dette-Graw & \textbf{0.89} & 18.99 & 1.28   & 10000 \\
CR-GS (WR)        & \textbf{0.88} & 18.93 & 31.01  & 10000 \\
CR-GS (WOR)       & 0.87          & 17.85 & 15.13  & 10000 \\
\midrule
BLBQuant          & \textbf{0.92} & 13.29 & 339.56 & 20000 \\
Dette-Graw & \textbf{0.92} & 13.44 & 1.60   & 20000 \\
CR-GS (WR)        & \textbf{0.92} & 13.41 & 62.33  & 20000 \\
CR-GS (WOR)       & \textbf{0.90} & 12.65 & 28.15  & 20000 \\
\midrule
BLBQuant          & \textbf{0.90} & 9.37  & 661.87 & 40000 \\
Dette-Graw & \textbf{0.90} & 9.49  & 1.73   & 40000 \\
CR-GS (WR)        & \textbf{0.90} & 9.47  & 125.23 & 40000 \\
CR-GS (WOR)       & \textbf{0.89} & 9.01  & 58.70  & 40000 \\
\midrule
BLBQuant          & \textbf{0.90} & 8.37  & 817.64 & 50000 \\
Dette-Graw & \textbf{0.90} & 8.50  & 1.85   & 50000 \\
CR-GS (WR)        & \textbf{0.90} & 8.47  & 156.44 & 50000 \\
CR-GS (WOR)       & \textbf{0.89} & 8.07  & 73.65  & 50000 \\
\bottomrule
\end{tabular}
\caption{Uniform\([-1,1]\) mean.}
\label{tab:uniform-results}
\end{subtable}

\caption{\textbf{Results for the bounded-mean experiments.}
Here, cov is empirical coverage, len is the mean interval width over
finite outputs, and time is the mean runtime in milliseconds.
The table reports \(10^3\) times the mean interval width.
Bold coverage values lie in \([0.88,0.92]\) after rounding.}
\label{tab:mean-results}
\end{table}


We consider two bounded distributions representing qualitatively different finite-sample regimes:
\[
    X\sim\operatorname{Bernoulli}(0.05),\qquad
    X\sim\operatorname{Unif}[-1,1].
\]
Their population means are $0.05$ and and $0$, respectively. For observations supported on $[a,b]$, the sample mean has global sensitivity $(b-a)/n$. Moreover, the sample mean is asymptotically normal as long as the variance is finite and non-zero, so the scaling parameter $\tau_n=\sqrt{n}$.

For CR-GS and BLBQuant, we set the miscoverage parameter to $\alpha=0.1$. The $m$-out-of-$n$ bootstrap procedure a $1-2\alpha_{\mathrm{DG}}$ interval, so we use $\alpha_{\mathrm{DG}}=0.05$. For CR-GS, the resample size is
$m_{\mathrm{GS}}=\left\lceil\frac{n}{\log n}\right\rceil.$
The WR variant uses multiplicity-check failure probability $\rho=1/n$. The WOR variant uses the same $m_{\mathrm{GS}}$ and $B$, but has $\kappa^\star=1$ and also does not require a multiplicity check. For $m$-out-of-$n$ bootstrap, we use their recommended resample size $m_{\mathrm{DG}} = \operatorname{round}\!\left\{ \frac{\log(1-1/B)}{\log(1-1/n)}\right\}$~\cite{Dette2025GaussianDP}.

Table~\ref{tab:mean-results} summarizes the results of both mean experiments. All methods approach the nominal coverage of $0.9$ when the sample size $n$ is large. For the Uniform experiment, all methods other than CR-GS without replacement achieve nominal coverage for smaller values of $n$ too, and for the Bernoulli experiment, where the mean is skewed, both BLBQuant and CR-GS achieve near-nominal coverage for smaller values of $n$.

\subsection{Degenerate U-statistic}
\label{sec:ustat-experiments}

\begin{table}[!htbp]
\centering
\scriptsize
\setlength{\tabcolsep}{2pt}
\renewcommand{\arraystretch}{0.92}

\begin{subtable}[t]{0.49\linewidth}
\centering
\begin{tabular}{@{}lrrrr@{}}
\toprule
Algorithm & cov & \(10^3\!\times\!\mathrm{len}\) & time & \(n\) \\
\midrule
BLBQuant          & 0.93          & 101.14 & 117.77   & 2000 \\
Dette-Graw & 0.06          & 45.04  & 1.30     & 2000 \\
CR-SS (WR)        & \textbf{0.88} & 164.35 & 33.38    & 2000 \\
CR-SS (WOR)       & 0.85          & 86.14  & 25.89    & 2000 \\
\midrule
BLBQuant          & 0.94          & 66.94  & 590.56   & 5000 \\
Dette-Graw & 0.12          & 44.15  & 1.45     & 5000 \\
CR-SS (WR)        & 0.81          & 78.84  & 65.41    & 5000 \\
CR-SS (WOR)       & 0.86          & 54.40  & 51.25    & 5000 \\
\midrule
BLBQuant          & 0.93          & 48.34  & 2083.63  & 10000 \\
Dette-Graw & 0.10          & 39.48  & 1.63     & 10000 \\
CR-SS (WR)        & 0.85          & 60.64  & 135.78   & 10000 \\
CR-SS (WOR)       & 0.87          & 39.29  & 113.22   & 10000 \\
\midrule
BLBQuant          & 0.97          & 38.55  & 7523.76  & 20000 \\
Dette-Graw & 0.08          & 28.88  & 2.26     & 20000 \\
CR-SS (WR)        & 0.82          & 35.91  & 221.00   & 20000 \\
CR-SS (WOR)       & \textbf{0.88} & 27.73  & 188.57   & 20000 \\
\midrule
BLBQuant          & 0.98          & 30.18  & 27427.77 & 40000 \\
Dette-Graw & 0.03          & 20.59  & 3.39     & 40000 \\
CR-SS (WR)        & 0.85          & 22.71  & 382.88   & 40000 \\
CR-SS (WOR)       & \textbf{0.88} & 19.54  & 338.46   & 40000 \\
\midrule
BLBQuant          & 0.97          & 26.93  & 42386.22 & 50000 \\
Dette-Graw & 0.04          & 18.43  & 3.87     & 50000 \\
CR-SS (WR)        & 0.85          & 20.52  & 465.20   & 50000 \\
CR-SS (WOR)       & 0.87          & 17.50  & 414.23   & 50000 \\
\bottomrule
\end{tabular}
\caption{Population median.}
\label{tab:median-results}
\end{subtable}
\hfill
\begin{subtable}[t]{0.49\linewidth}
\centering
\begin{tabular}{@{}lrrrr@{}}
\toprule
Algorithm & cov & \(10^4\!\times\!\mathrm{len}\) & time & \(n\) \\
\midrule
BLBQuant          & 1.00          & 11.24 & 21.77  & 2000 \\
Dette-Graw & 1.00          & 7.37  & 2.76   & 2000 \\
CR-SS (WR)        & \textbf{0.90} & 4.08  & 22.43  & 2000 \\
CR-SS (WOR)       & 0.81          & 1.48  & 13.65  & 2000 \\
\midrule
BLBQuant          & 1.00          & 4.90  & 66.71  & 5000 \\
Dette-Graw & 0.99          & 2.65  & 4.12   & 5000 \\
CR-SS (WR)        & 0.87          & 1.21  & 32.39  & 5000 \\
CR-SS (WOR)       & 0.84          & 0.58  & 18.11  & 5000 \\
\midrule
BLBQuant          & 1.00          & 2.63  & 134.81 & 10000 \\
Dette-Graw & 1.00          & 1.23  & 6.63   & 10000 \\
CR-SS (WR)        & 0.86          & 0.56  & 46.92  & 10000 \\
CR-SS (WOR)       & \textbf{0.88} & 0.29  & 25.52  & 10000 \\
\midrule
BLBQuant          & 1.00          & 1.38  & 242.96 & 20000 \\
Dette-Graw & 0.99          & 0.58  & 11.46  & 20000 \\
CR-SS (WR)        & 0.85          & 0.23  & 71.72  & 20000 \\
CR-SS (WOR)       & 0.93          & 0.14  & 40.79  & 20000 \\
\midrule
BLBQuant          & 1.00          & 0.73  & 521.04 & 40000 \\
Dette-Graw & 0.98          & 0.27  & 26.01  & 40000 \\
CR-SS (WR)        & 0.87          & 0.11  & 124.95 & 40000 \\
CR-SS (WOR)       & \textbf{0.89} & 0.07  & 85.54  & 40000 \\
\midrule
BLBQuant          & 1.00          & 0.59  & 656.71 & 50000 \\
Dette-Graw & 1.00          & 0.21  & 32.79  & 50000 \\
CR-SS (WR)        & 0.84          & 0.07  & 155.00 & 50000 \\
CR-SS (WOR)       & \textbf{0.92} & 0.06  & 108.17 & 50000 \\
\bottomrule
\end{tabular}
\caption{Degenerate U-statistic.}
\label{tab:ustat-results}
\end{subtable}

\caption{\textbf{Results for the median and degenerate U-statistic experiments.}
Here, cov is empirical coverage, len is mean interval width over finite
outputs, and time is mean runtime in milliseconds. The median panel
reports \(10^3\) times the interval width, while the U-statistic panel
reports \(10^4\) times the interval width. For CR-SS (WR) in the
U-statistic experiment, coverage is conditional on return. Bold
coverage values lie in \([0.88,0.92]\) after rounding.}
\label{tab:median-ustat-results}
\end{table}

We generate
$X_1,\ldots,X_n\stackrel{\mathrm{i.i.d.}}{\sim}\operatorname{Unif}[-1/2,1/2]$
and use the degree-two kernel $h(x,y)=xy$. The resulting U-statistic is
\[
    U_n
    =
    \binom{n}{2}^{-1}\sum_{i<j}X_iX_j
    =
    \frac{\left(\sum_iX_i\right)^2-\sum_iX_i^2}{n(n-1)}.
\]
Because $\mathbb{E}[X_i]=0$, the target parameter is
$\theta=(\mathbb{E}[X_i])^2=0$. The statistic is first-order degenerate, so the appropriate scaling is $\tau_n=n$.

For CR-SS, we use $m=\lceil {n}/{\log n} \rceil, \delta=10^{-6},$ and $\rho=1/n,$ together with the product-statistic-specific smooth-sensitivity bound. Because the statistic is degenerate, the resampling pivots are constructed on the $m$-scale, and the resulting interval radii are converted back to the $n$-scale.

Table~\ref{tab:median-ustat-results} summarizes the results of the experiment. We observe that $m$-out-of-$n$ bootstrap undercovers significantly, because the global sensitivity of the median is a constant, which is not consistent with what they expect in their experiments. BLBQuant overcovers, whereas our methods slightly undercover.

\subsection{Median}
\label{sec:median-experiments}

We generate
$X_1,\ldots,X_n\stackrel{\mathrm{i.i.d.}}{\sim}\operatorname{Unif}[-1,1]$.
The target parameter is the population median, which equals $0$. Since the empirical median is asymptotically normal, we use the scaling $\tau_n=\sqrt{n}$.

For CR-SS, we use $m=\lceil {n}/{\log n} \rceil, \delta=10^{-6},$ and $\rho=1/n,$ together with the quantile smooth-sensitivity bound. For WR, the miscoverage parameter is set to $0.09$, reserving $\rho=0.01$ for the multiplicity-check failure probability. For WOR, the miscoverage parameter is $0.10$. The global sensitivity of the empirical median on $[-1,1]$ is $2$, which is the value used by $m$-out-of-$n$ bootstrap.

Table~\ref{tab:median-ustat-results} summarizes the results of the experiment. We observe that $m$-out-of-$n$ bootstrap and BLBQuant both overcover, with a coverage close to 1. Since our methods adapt to the natural $\tau_n = n$ scaling of a degenerate U-statistic, we obtain coverages closer to the nominal 0.9 coverage. 

%% file: conclusion.tex
\section{Conclusion}
\label{sec:conclusion}
We provide GDP guarantees for private confidence regions for a large family of estimators arising from widely used resampling procedures such as subsampling and $m$-out-of-$n$ sampling. We provide a simple algorithm that privatizes the quantile instead of privatizing estimators computed on each resample. This leads to a logarithmic dependence on $B$ instead of the $\sqrt{B}$ dependence incurred in previous work. Our estimators \textit{do not need to be asymptotically normal} or have small global sensitivity, making it widely applicable. We provide both utility and \textit{nonasymptotic} privacy guarantees. Overall, we develop private uncertainty quantification methods for common resampling schemes that sidestep the computational and privacy costs of privatizing many intermediate resample statistics.

\textbf{AI use statement:} The authors used AI as a support tool while preparing this paper. AI assisted with some of the proofs, helped with the implementation, and polishing the writing. The authors reviewed and edited all AI-assisted content and take responsibility for the paper's accuracy, originality, and final content.


%% file: appendix.tex






\appendix
\addtocontents{toc}{\protect\setcounter{tocdepth}{1}}

\input{map}

\section{Auxiliary Results}
\label{app:auxiliary}

\begin{proposition}[$\ell_2$-norm concentration of standard Gaussian~\citep{wainwright2019high}]\label{prop:l2_concentration_gaussian} If $Z \sim \mathcal{N}(0, \id_d)$, then for any $\beta \in (0,1)$,
\[
\Pr(\norm{Z}_2 \le \sqrt{d} + \sqrt{2 \log (1/\beta)}) \ge 1-\beta.
\]
\end{proposition}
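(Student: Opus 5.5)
This is the standard Gaussian norm concentration bound, and I would prove it by combining a Lipschitz concentration inequality with a bound on the mean. The map $z \mapsto \norm{z}_2$ is $1$-Lipschitz with respect to the Euclidean norm on $\R^d$, because $\bigl|\norm{z}_2-\norm{z'}_2\bigr| \le \norm{z-z'}_2$ by the triangle inequality. The Gaussian concentration inequality for Lipschitz functions (the Tsirelson--Ibragimov--Sudakov inequality) then gives, for every $t\ge 0$,
\[
\Pr\bigl(\norm{Z}_2 \ge \E\norm{Z}_2 + t\bigr) \le e^{-t^2/2}.
\]

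Next I would bound the mean. By Jensen's inequality, $\E\norm{Z}_2 \le \sqrt{\E\norm{Z}_2^2} = \sqrt{d}$, since $\E\norm{Z}_2^2=\sum_{j=1}^d \E Z_j^2 = d$.

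Now set $t=\sqrt{2\log(1/\beta)}$, so that $e^{-t^2/2}=\beta$. Combining the two steps gives
\[
\Pr\bigl(\norm{Z}_2 > \sqrt{d}+\sqrt{2\log(1/\beta)}\bigr) \le \Pr\bigl(\norm{Z}_2 \ge \E\norm{Z}_2 + t\bigr) \le \beta,
\]
and taking complements yields the statement.

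The only nontrivial ingredient is the Lipschitz concentration inequality, which I would cite from \citep{wainwright2019high} rather than reprove. If a self-contained argument were wanted, I would derive it from the Gaussian log-Sobolev inequality via the Herbst argument, or more simply from Maurey--Pisier interpolation. That route gives a slightly worse constant in the exponent, so recovering the sharp $\sqrt{2\log(1/\beta)}$ really requires the sharp form of the inequality; this is the only place where care is needed.
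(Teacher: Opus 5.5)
Your proof is correct and is the standard argument behind the cited result: concentration of the $1$-Lipschitz function $\norm{Z}_2$ around its mean, combined with Jensen's bound $\E\norm{Z}_2 \le \sqrt{d}$ (the paper gives no proof of its own and simply cites the reference). One clarification: the Herbst argument from the Gaussian log-Sobolev inequality already yields the sharp tail $e^{-t^2/2}$, so only the Maurey--Pisier interpolation route loses a constant in the exponent.
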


\begin{proposition}[Local Sensitivity of Order Statistics]\label{lem:local_order_statistic}
Let $1 \le k \le B$. Let $\mathcal{A}_1, \mathcal{A}_2, \dots, \mathcal{A}_B : \mathcal{X}^m \to \mathbb{R}$ be arbitrary mechanisms and define $g_k(D)$ to be the $k$th largest element in $\{\mathcal{A}_1(D), \dots, \mathcal{A}_B(D)\}$. Then,
\[
\LS(g_k,D) \le \max_{i \in [B]} \LS(\mathcal{A}_i,D).
\]
\end{proposition}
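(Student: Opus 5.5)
The plan is to reduce the claim to a deterministic statement about order statistics of two real vectors, then take a supremum over neighbors. Fix $D$ and an arbitrary neighbor $D'\sim D$, and write $a_i := \mathcal{A}_i(D)$ and $b_i := \mathcal{A}_i(D')$ for $i\in[B]$. By definition of local sensitivity (equation~\eqref{eq:definition_local_sensitivity}, specialized to real-valued maps), we have $|a_i-b_i|\le \LS(\mathcal{A}_i,D)$ for every $i$. Set $L:=\max_{i\in[B]}\LS(\mathcal{A}_i,D)$, so that $\max_i|a_i-b_i|\le L$. It then suffices to prove the elementary fact that if two vectors $a,b\in\R^B$ satisfy $\|a-b\|_\infty\le L$, then their $k$th largest entries differ by at most $L$. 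Taking the supremum over $D'\sim D$ gives $\LS(g_k,D)\le L$.

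For the order-statistic fact I will use a counting argument. Let $a_{[k]}$ and $b_{[k]}$ denote the $k$th largest entries. There are at least $k$ indices $i$ with $a_i\ge a_{[k]}$. For each such $i$ we have $b_i\ge a_i-L\ge a_{[k]}-L$, so at least $k$ entries of $b$ are $\ge a_{[k]}-L$, which forces $b_{[k]}\ge a_{[k]}-L$. Exchanging the roles of $a$ and $b$ gives $a_{[k]}\ge b_{[k]}-L$. Together these yield $|a_{[k]}-b_{[k]}|\le L$. Ties cause no difficulty, since the $k$th largest entry is characterized as the largest $t$ such that at least $k$ entries are $\ge t$.

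There is no serious obstacle here. The one point to be careful about is that the same index set, and hence the same pairing of $a_i$ with $b_i$, is used for $D$ and $D'$. This holds because the mechanisms $\mathcal{A}_i$ are fixed maps, or in the application are conditioned on the fixed resamples $S_1,\dots,S_B$. The bound is then uniform over neighbors, and the conclusion follows.
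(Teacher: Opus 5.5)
Your proposal is correct and follows the paper's proof essentially step for step. The paper uses the same counting argument: at least $k$ of the values at $D$ are $\ge g_k(D)$, so at least $k$ values at $D'$ are $\ge g_k(D)-\delta$ with $\delta=\max_i|\mathcal{A}_i(D)-\mathcal{A}_i(D')|$. It then swaps $D$ and $D'$, which is valid because $\delta$ is symmetric just like your $\|a-b\|_\infty$, and takes the supremum over neighbors.
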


\begin{proof}
Fix a neighboring dataset $D' \sim D$, and let $\delta := \max_{i \in [B]}
\left|\mathcal A_i(D)-\mathcal A_i(D')\right|.$
At least $k$ of the values $\mathcal A_i(D)$ are at least $g_k(D)$.
Their corresponding values at $D'$ are therefore at least
$g_k(D)-\delta$, so $g_k(D') \ge g_k(D)-\delta.$

Interchanging $D$ and $D'$ gives $g_k(D) \ge g_k(D')-\delta$. Therefore, $|g_k(D)-g_k(D')|\le \delta \le \max_{i \in [B]} LS(\mathcal A_i,D).$
Taking the supremum over all $D' \sim D$ yields
\[
LS(g_k,D) \le \max_{i \in [B]} LS(\mathcal A_i,D).
\]
\end{proof}

\section{More details on subsampling validity results for m-out-of-n and subsampling}
\label{sec:resampDetails}
Let $F_n(t):=\Pr(\tau_n d(\hatn,\theta)\leq t)$ and let $F(t) := \lim_{n \to \infty} F_n(t)$ denote the CDF of the limiting distribution.
\begin{assumption}\label{ass:limit:appendix}
    We assume that 
    \begin{enumerate}
        \item $F$ is non-degenerate and continuous.
        \item $\sup_t |F_n(t)-F(t)|\rightarrow 0.$ 
        \item $q_n=c_{1-\alpha}+o(1)$ where $q_n$ is the true $1-\alpha$ quantile of the distribution of $\tau_n d(\hatn,\theta)$ and $c_{1-\alpha}$ is the $1-\alpha$ quantile of the limiting distribution.
        \item $\tau_m/\tau_n\rightarrow 0$, $m/n\rightarrow 0$, $B\rightarrow\infty$.
        
        \item (Resampling validity) $\Pr(\tau_n d(\widehat{\theta}_n,\theta)\leq \qnm)=1-\alpha+o_n(1)$, where $\qnm$ denotes the scaled quantile defined in Algorithm~\ref{alg:gdp_quantile_subsample}

    \end{enumerate}
\end{assumption}
\begin{remark}
     Let $J_n(P)$ denote the limit law of $\tau_n(\hatn-\theta)$.  The standard assumption for subsampling validity~\cite{PolitisRomanoWolf1999WeakConvergenceEmpiricalMeasures,PolitisRomanoWolf2001SubsamplingTheory} is that there exists a limit law $J(P)$ so that $J_n(P)$ converges to $J(P)$ weakly as $n\rightarrow \infty$. Note that the continuous mapping theorem shows that this immediately implies that there exists a limit law of the sampling distribution of $\tau_n\|\hatn-\theta\|$ (Assumption~\ref{ass:limit:appendix} (1) and (2). All that remains to be checked is Assumption~\ref{ass:limit:appendix} (3) and (5). Lemma 21.2 in~\cite{vanderVaart1998Asymptotic} establishes that if $F^{-1}(.)$ is continuous at $1-\alpha$, the $1-\alpha$ quantile $q_n$ converges to the true quantile $c_{1-\alpha}$. As for Assumption~\ref{ass:limit:appendix} (5), note that Theorem 2.1 (iv) in~\cite{PolitisRomanoWolf2001SubsamplingTheory} directly yields (5) for infinite $B$. When $B\rightarrow \infty$, one can show condition (5) via the DKW inequality and quantile convergence (see Lemma~\ref{lem:coverage_estimated_quantile} in the Appendix). A similar argument also applies to $m$-out-of-$n$ resampling. For simplicity, we move it to the appendix.
\end{remark}

\begin{remark}[With-replacement $m$-out-of-$n$ validity]
Let $J_n(P)$ denote the law of $\tau_n(\hat\theta_n-\theta(P))$, and suppose $J_n(P)\Rightarrow J(P)$ for a
nondegenerate limit law $J(P)$ with continuous CDF $F$.
For with-replacement $m$-out-of-$n$ resampling, let $\hat\theta_{m}^{\ast}$ be the estimator computed on a size-$m$
bootstrap resample drawn i.i.d.\ from $\hat P_n$, and define the \emph{ideal} (infinite-resample) conditional CDF
\[
B_{m,n}(t)\;:=\;\Pr^\ast\!\Big(\tau_m\, d(\hat\theta_{m}^{\ast},\theta(P)) \le t \,\Big|\, D_n\Big),
\]
where $\Pr^\ast(\cdot\,|\,D_n)$ is probability over the resampling randomness conditional on the data.
Under the standard $m$-out-of-$n$ conditions $m\to\infty$, $m/n\to 0$, and $\tau_m/\tau_n\to 0$, together with the
regularity assumptions in Theorem~2 of \cite{RESAMPLING} (see in particular (3.10)--(3.14)),
the $m$-out-of-$n$ bootstrap is consistent in the sense that
\[
\sup_{t\in\mathbb{R}} \big| B_{m,n}(t) - F(t) \big| \;\xrightarrow{p}\; 0.
\]
Consequently, if $q_{n,m}(1-\alpha):=\inf\{t: B_{m,n}(t)\ge 1-\alpha\}$ and $F$ is continuous at $c_{1-\alpha}$,
then $q_{n,m}(1-\alpha)\to c_{1-\alpha}$, and
\[
\Pr\!\Big(\tau_n\, d(\hat\theta_n,\theta(P)) \le q_{n,m}(1-\alpha)\Big)\;=\;1-\alpha+o_n(1),
\]
by the standard quantile-continuity argument (e.g.\ Lemma~21.2 in \cite{Vaart_1998} or Theorem~1 in
\cite{bm}).
Finally, Assumption~\ref{ass:limit:appendix} uses the empirical quantile $Q_{n,m}$ from $B$ i.i.d.\ resamples.
Let $\widehat B_{m,n,B}$ denote the empirical CDF of $\{\tau_m d(\hat\theta_{m,i}^\ast,\theta(P))\}_{i=1}^B$.
Conditional on $D_n$, by the DKW inequality,
\[
\sup_t\big|\widehat B_{m,n,B}(t)-B_{m,n}(t)\big|=O_p(B^{-1/2}),
\]
so $Q_{n,m}-q_{n,m}(1-\alpha)=o_p(1)$ provided $B\to\infty$ (and the target quantile is a continuity point).

\end{remark}

\begin{lemma}[Coverage with an estimated quantile cutoff]
\label{lem:coverage_estimated_quantile}
Let $T_n$ be a real-valued statistic with $T_n \stackrel{d}{\rightarrow} T$, where $T$ has CDF $F$.
Fix $\alpha\in(0,1)$ and let $q := F^{-1}(1-\alpha)=\inf\{t:F(t)\ge 1-\alpha\}$.
Assume that $F$ is continuous at $q$ (equivalently, $\Pr(T=q)=0$).
Let $Q_n$ be any (possibly random) cutoff such that $Q_n \xrightarrow{p} q$.
Then
\[
\Pr\!\big(T_n \le Q_n\big)\ \to\ 1-\alpha,
\qquad\text{and}\qquad
\Pr\!\big(T_n \le Q_n\big)-\Pr\!\big(T_n \le q\big)\ \to\ 0.
\]
\end{lemma}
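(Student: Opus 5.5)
We prove the first claim by a sandwich argument, and then obtain the second from it together with continuity of $F$ at $q$.

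Fix $\eta>0$ such that $q\pm\eta$ are continuity points of $F$. This is always possible, since the discontinuity points of $F$ are countable. On the event $\{|Q_n-q|\le\eta\}$ we have the inclusions
\[
\{T_n\le q-\eta\}\subseteq\{T_n\le Q_n\}\subseteq\{T_n\le q+\eta\}.
\]
Consequently,
\[
\Pr(T_n\le q-\eta)-\Pr(|Q_n-q|>\eta)\;\le\;\Pr(T_n\le Q_n)\;\le\;\Pr(T_n\le q+\eta)+\Pr(|Q_n-q|>\eta).
\]

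\textbf{Limits.} Let $n\to\infty$. By convergence in probability, $\Pr(|Q_n-q|>\eta)\to0$. Because $q\pm\eta$ are continuity points of $F$, convergence in distribution gives $\Pr(T_n\le q\pm\eta)\to F(q\pm\eta)$. Hence
\[
F(q-\eta)\;\le\;\liminf_n \Pr(T_n\le Q_n)\;\le\;\limsup_n \Pr(T_n\le Q_n)\;\le\;F(q+\eta).
\]

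\textbf{Identifying $F(q)$.} Now let $\eta\downarrow0$ along continuity points. Since $F$ is continuous at $q$, both bounds converge to $F(q)$. It remains to show $F(q)=1-\alpha$.
\begin{itemize}
\item Right-continuity of $F$ and the definition $q=\inf\{t:F(t)\ge1-\alpha\}$ give $F(q)\ge1-\alpha$.
\item For every $t<q$ we have $F(t)<1-\alpha$. Letting $t\uparrow q$ and using continuity at $q$ gives $F(q)\le1-\alpha$.
\end{itemize}
Therefore $F(q)=1-\alpha$, and so $\Pr(T_n\le Q_n)\to1-\alpha$.

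\textbf{Second claim.} Since $q$ is itself a continuity point of $F$, convergence in distribution gives $\Pr(T_n\le q)\to F(q)=1-\alpha$. Subtracting this from the first claim shows $\Pr(T_n\le Q_n)-\Pr(T_n\le q)\to0$.

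The only delicate point in the argument is choosing $\eta$ so that $q\pm\eta$ are continuity points; the rest is routine.
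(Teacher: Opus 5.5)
Your proof is correct and follows essentially the same sandwich argument as the paper: you bound $\Pr(T_n\le Q_n)$ using $\Pr(T_n\le q\pm\eta)$ and $\Pr(|Q_n-q|>\eta)$, then send $n\to\infty$ and $\eta\downarrow 0$. You also take $q\pm\eta$ to be continuity points of $F$ (needed because $F$ is only assumed continuous at $q$) and verify $F(q)=1-\alpha$ explicitly, two steps the paper uses without justification.
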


\begin{proof}
Fix $\eta>0$. Using $\{T_n\le Q_n\}\subseteq \{T_n\le q+\eta\}\cup\{Q_n>q+\eta\}$ and
$\{T_n\le q-\eta\}\subseteq \{T_n\le Q_n\}\cup\{Q_n<q-\eta\}$ gives
\[
\Pr(T_n\le q-\eta)-\Pr(Q_n<q-\eta)
\ \le\ 
\Pr(T_n\le Q_n)
\ \le\
\Pr(T_n\le q+\eta)+\Pr(Q_n>q+\eta).
\]
Since $Q_n\xrightarrow{p}q$, the tail probabilities on the right vanish as $n\to\infty$.
By definition of convergence in distribution, $\Pr(T_n\le q\pm\eta)\to F(q\pm\eta)$.
Taking $\liminf$ and $\limsup$ yields
$F(q-\eta)\le \liminf_n \Pr(T_n\le Q_n)\le \limsup_n \Pr(T_n\le Q_n)\le F(q+\eta)$.
Letting $\eta\downarrow 0$ and using continuity of $F$ at $q$ gives $\Pr(T_n\le Q_n)\to F(q)=1-\alpha$.
Finally, $\Pr(T_n\le q)\to F(q)$ by definition, so the difference also converges to $0$.
\end{proof}


\subsection{Proof of Proposition~\ref{prop:kstar_choice}}

Fix $i\in[B]$ and $j\in[n]$. Since $S_i$ consists of $m$ i.i.d.\ draws from $[n]$ with replacement,
\[
\kappa_{i,j}\ \sim\ \mathrm{Bin}\!\left(m,\frac1n\right),
\qquad \mu := \E[\kappa_{i,j}] = \frac{m}{n}.
\]
By a union bound over all $(i,j)\in[B]\times[n]$,
\begin{equation}
\label{eq:union}
\Pr\!\big[\mathcal E^c_{\kstar} \big]
\ \le\ 
\sum_{i=1}^B\sum_{j=1}^n \Pr[\kappa_{i,j}\ge k]
\ =\ 
Bn \cdot \Pr\!\left[\mathrm{Bin}\!\left(m,\frac1n\right)\ge k\right].
\end{equation}

Let $X\sim \mathrm{Bin}(m,1/n)$ with mean $\mu=m/n$.
For any integer $k\ge 1$, a standard Chernoff bound gives
\begin{equation}
\label{eq:bin-tail}
\Pr[X\ge k]
\ \le\
\Big(\frac{e\mu}{k}\Big)^k
\ =\ 
\exp\!\Big(-k\log\frac{k}{e\mu}\Big).
\end{equation}

Recall
\[
A := \log\!\Big(\frac{Bn}{\rho}\Big),\qquad
A_0 := \max\{A,1\},\qquad
M := \log\!\Big(\frac{n}{m}\Big)+\log A_0,
\qquad
\kstar := \left\lceil \frac{4A_0}{M}\right\rceil.
\]
Let $k=\kstar$. Since $k\ge 4A_0/M$ and $\mu=m/n$, we have
\begin{align}
\log\frac{k}{e\mu}
&=
\log k + \log\frac{n}{m} - 1 \notag\\
&\ge
\log\Big(\frac{4A_0}{M}\Big)+\log\frac{n}{m}-1 \notag\\
&=
\big(\log A_0 + \log\frac{n}{m}\big) + \log 4 - \log M - 1 \notag\\
&=
M + \log 4 - \log M - 1.
\label{eq:lowerlog}
\end{align}
Define
\[
g(M) := M + \log 4 - \log M - 1 = (M-\log M) + (\log 4 - 1).
\]
We lower bound $g(M)$ in two regimes.

\smallskip
\noindent\emph{Case 1: $M \le 4\log 4$.}
Since $g'(M)=1-\frac{1}{M}$ and $g''(M)=\frac{1}{M^2}>0$, the function $g$ is convex
and attains its minimum at $M=1$, where $g(1)=\log 4$.
Hence $g(M)\ge \log 4$ for all $M>0$, and in particular for $M\le 4\log 4$,
\[
k\log\frac{k}{e\mu}
\ \ge\
\frac{4A_0}{M}\cdot g(M)
\ \ge\
\frac{4A_0}{M}\cdot \log 4
\ \ge\
A_0,
\]
where the last inequality uses $M\le 4\log 4$.

\smallskip
\noindent\emph{Case 2: $M > 4\log 4$.}
In this case $M\ge 2$, so $\log M \le M/2$, and thus
\[
g(M)= M + \log 4 - \log M - 1 \ \ge\ M - \log M - 1 \ \ge\ M/2 - 1.
\]
Since $M/2-1>M/4$,
\[
k\log\frac{k}{e\mu}
\ \ge\
\frac{4A_0}{M}\cdot (M/2-1)
\ \ge\
\frac{4A_0}{M}\cdot (M/4)
\ =\
A_0.
\]

\smallskip
Combining the two cases, we have shown
\begin{equation}
\label{eq:exp}
k\log\frac{k}{e\mu}\ \ge\ A_0\ \ge\ A.
\end{equation}
Plugging~\eqref{eq:exp} into~\eqref{eq:bin-tail} yields
\[
\Pr\!\left[\mathrm{Bin}\!\left(m,\frac1n\right)\ge k\right]
\le e^{-A}
= \frac{\rho}{Bn}.
\]
Finally, substitute this into~\eqref{eq:union} to obtain
\[
\Pr[\mathcal E^c_{\kstar}] \ \le\ Bn\cdot \frac{\rho}{Bn} \ =\ \rho,
\]
as claimed.

\section{Comparison with related works}\label{sec:appendix:compare}
\input{comparison}

\section{Privacy and Utility for \dpgs}
\label{app:global}

\begin{lemma}[Privacy Guarantee] \label{lem:global_privacy_guarantee}
Algorithm~\ref{alg:gdp_quantile_subsample} is $\mu$-GDP.
\end{lemma}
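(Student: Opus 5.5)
The plan is to view Algorithm~\ref{alg:gdp_quantile_subsample} as the composition of two Gaussian mechanisms, each $\mu/\sqrt{2}$-GDP, and then apply the composition rule~\eqref{eq:pure_gdp_composition}. The resampling randomness needs separate treatment. The index sets $S_1,\dots,S_B$, the threshold $\kstar$, and the failure event $\bot$ of \texttt{MCS} depend only on $(n,m,B,\rho)$ and not on $D$. So I will condition on a fixed realization of the index sets. The output $\bot$ is released with the same probability under $D$ and $D'$, so it leaks nothing. It then suffices to show that, for every fixed admissible tuple $(S_1,\dots,S_B)$ with $\kappa(S_i)\le\kstar$ for all $i$, the map $D\mapsto(\tiln,\qnmp)$ is $\mu$-GDP. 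The unconditional mechanism is a mixture over these tuples, with the same mixing weights for $D$ and $D'$. By joint convexity of the testing problem (equivalently, post-processing of the pair (indices, output), where the indices carry no information about $D$), the trade-off bound passes to the mixture.

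\textbf{Center.} The first step handles the center $\tiln$. Since $\theta$ has global sensitivity $\Delta_n$ on size-$n$ datasets and $\nu_n=\sqrt{2}\Delta_n/\mu$, Proposition~\ref{prop:gaussian_mechanism} shows that $\tiln$ is $(\mu/\sqrt2)$-GDP.

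\textbf{Radius.} The second step bounds the global sensitivity of $\qnm$ for a fixed admissible tuple. Let $D\sim D'$ differ in index $j$. Because $j$ appears at most $\kstar$ times in $S_i$, the resampled datasets $D[S_i]$ and $D'[S_i]$ differ in at most $\kstar$ entries. Moving between them by a chain of at most $\kstar$ single-entry changes and applying the triangle inequality gives $\norms{\theta(D[S_i])-\theta(D'[S_i])}_2\le\kstar\Delta_m$. In the \WOR{} case we have $\kstar=1$ trivially. By the reverse triangle inequality,
\[
\bigl|\norms{\theta_i^\star-\hatn}_2-\norms{\theta_i'^\star-\hatn'}_2\bigr|\le\kstar\Delta_m+\Delta_n .
\]
Proposition~\ref{lem:local_order_statistic} (its proof bounds $|g_k(D)-g_k(D')|$ by the maximum coordinate change, uniformly over pairs) then shows that the empirical quantile changes by at most this amount. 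Therefore $\qnm$ has sensitivity at most $\tau_m(\kstar\Delta_m+\Delta_n)$, and with $\sigma_m$ as chosen, $\qnm+\sigma_mZ_1$ is $(\mu/\sqrt2)$-GDP. Truncation at $0$ is post-processing.

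\textbf{Composition.} Because $Z_1$ and $Z_2$ are independent, the joint release is the composition of the two mechanisms, which is $\sqrt{\mu^2/2+\mu^2/2}=\mu$-GDP.

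The main obstacle is the conditioning argument in the \WR{} case. One must make sure that rejecting via \texttt{MCS} does not create data dependence, and that the chain argument really uses at most $\kstar$ swaps. The chain argument requires $\Delta_m$ to bound the change in $\theta$ under any single-entry replacement of a size-$m$ multiset; viewing the $m$ entries as distinct records makes this hold. I would state both points explicitly before invoking the mixture/post-processing fact for trade-off functions.
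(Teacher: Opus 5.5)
Your proposal is correct and follows essentially the same route as the paper. The shared steps are: data-independent resampling, with $\bot$ leaking nothing; the Gaussian mechanism at level $\mu/\sqrt{2}$ for the center; a $\kstar\Delta_m+\Delta_n$ sensitivity bound for each pivot, passed to the $\tau_m$-scaled quantile through the order-statistic Lipschitz property; and GDP composition. Your explicit treatment of the mixture over index realizations (release the data-independent indices, compose, then drop them) is a bit more careful than the paper's one-line conditioning remark, but it is the same argument.
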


\begin{proof}
First, the probability that algorithm $\texttt{MCS}$ outputs $\bot$ is independent of the data. It suffices to show that the algorithm is $\mu$-GDP conditioned on the event that $(S_1, \dots, S_B) \neq \bot$.

Since $\hatn = \theta(D) \in \R^d$ has global sensitivity $\Delta_n,$ Proposition~\ref{prop:gaussian_mechanism} implies that the statistic $\tiln$ defined as \[\tiln(D) = \hatn(D) + \frac{\Delta_n}{\mu/\sqrt{2}} Z_2, \;\;\; Z_2 \sim \mathcal{N}(0, \id_d)\]
is $\mu/\sqrt{2}$-GDP. 

Moreover, each $d(\theta_i^{*}, \hatn)$ has global sensitivity $\kstar \Delta_m + \Delta_n$; indeed, each data point $X_j \in D$ appears at most $\kstar$ times in $D[S_i]$, so the global sensitivity of $\theta_i^{*}$ is bounded by $\kstar \Delta_m$.

By Proposition~\ref{lem:local_order_statistic}, the global sensitivity of $\texttt{Quantile}_{1-\alpha}(d(\theta_1^*,\hatn),\dots,d(\theta_B^*,\hatn))$ is also bounded above by $\kstar \Delta_m + \Delta_n.$ Therefore,
\[
\qnmp = \max\bb{\qnm + \frac{\sqrt{2} \tau_m (\kstar \Delta_m + \Delta_n)}{\mu} \cdot Z_1, 0},
\]
where $Z_1 \sim \mathcal{N}(0, 1),$ is $\mu/\sqrt{2}$-GDP. By basic composition (equation~\eqref{eq:pure_gdp_composition}), Algorithm~\ref{alg:gdp_quantile_subsample} is $\mu$-GDP.
\end{proof}
\begin{lemma}\label{lem:global_utility}
 Let \[\sigma_m=\frac{\sqrt{2}\tau_m (\kstar\Delta_m+\Delta_n)}{\mu} \qquad \text{and} \qquad \nu_n = \frac{\sqrt{2}\Delta_n}{\mu}\] 
 as in Algorithm~\ref{alg:gdp_quantile_subsample}. Suppose
    \begin{itemize}
    \item [(i)] assumption~\ref{ass:limit:appendix} holds,
    \item [(ii)] the failure probability $\rho = o_n(1),$ and 
    \item [(iii)] $\gamma = o_n(1)$, where
    \[
    \gamma := \sigma_{m}\sqrt{2\log (1/\rho)} +  \tau_n\nu_n(\sqrt{d}+\sqrt{2\log (1/\rho)}).
    \]
\end{itemize}
 Then, output $(\tiln, \qnmp)$ of Algorithm~\ref{alg:gdp_quantile_subsample} satisfies
\[
\Pr(\tau_n d(\theta, \tiln) \ge \qnmp) = \alpha + o_n(1).
\]
\end{lemma}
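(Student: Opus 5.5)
The plan is to reduce the private coverage event to the non-private one plus a vanishing perturbation, and then invoke Lemma~\ref{lem:coverage_estimated_quantile}. Set $T_n := \tau_n d(\theta,\hatn)$. By Assumption~\ref{ass:limit:appendix}(1)--(2), $T_n \stackrel{d}{\to} T$, where $T$ has continuous CDF $F$. By resampling validity (item (5), together with item (3)), $\qnm \xrightarrow{p} q := F^{-1}(1-\alpha)$. We also work conditionally on the event that \texttt{MCS} does not return $\bot$. This event has probability at least $1-\rho = 1-o_n(1)$ by Proposition~\ref{prop:kstar_choice}, and it is independent of the data and of $Z_1,Z_2$. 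Hence it changes any probability by at most $o_n(1)$. Under \WOR\ there is nothing to condition on.

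\textbf{Step 1 (perturbing the center).} By the triangle inequality,
\[
|\tau_n d(\theta,\tiln) - T_n| \le \tau_n \norms{\tiln-\hatn}_2 = \tau_n\nu_n\norms{Z_2}_2 .
\]
By Proposition~\ref{prop:l2_concentration_gaussian} with failure level $\rho$, this is at most $\tau_n\nu_n(\sqrt d+\sqrt{2\log(1/\rho)}) \le \gamma$ with probability at least $1-\rho$.

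\textbf{Step 2 (perturbing the radius).} By the standard Gaussian tail bound, $|\sigma_m Z_1|\le \sigma_m\sqrt{2\log(1/\rho)}\le\gamma$ with probability at least $1-2\rho$. The map $x\mapsto \max(x,0)$ is $1$-Lipschitz, so on this event $|\qnmp - \max(\qnm,0)|\le \gamma$. Moreover $\qnm\ge 0$ because it is a scaled quantile of norms, so in fact $|\qnmp-\qnm|\le\gamma$.

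\textbf{Step 3 (combining).} Define
\[
T_n' := \tau_n d(\theta,\tiln), \qquad Q_n' := \qnmp .
\]
The difference $W_n := T_n' - Q_n'$ satisfies $|W_n - (T_n-\qnm)| \le 2\gamma$ outside an event of probability $O(\rho)$. Since $\gamma=o_n(1)$ and $\rho=o_n(1)$, we get $W_n - (T_n - \qnm)\xrightarrow{p}0$. Now $T_n - \qnm \stackrel{d}{\to} T - q$ by Slutsky, using $\qnm\xrightarrow{p} q$ and the convergence of $T_n$. A second application of Slutsky gives $W_n \stackrel{d}{\to} T-q$. The limit has a continuous CDF at $0$ because $F$ is continuous. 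Therefore
\[
\Pr(T_n'\ge Q_n') = \Pr(W_n\ge 0)\to \Pr(T\ge q) = 1-F(q) = \alpha .
\]
Alternatively, this can be phrased directly through Lemma~\ref{lem:coverage_estimated_quantile} applied to $T_n'$ (which converges in distribution to $T$) and the cutoff $Q_n'\xrightarrow{p} q$. Here $T_n'\to T$ because it differs from $T_n$ by $o_p(1)$, and $Q_n'\to q$ because it differs from $\qnm$ by $o_p(1)$.

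\textbf{Main obstacle.} The Slutsky route in Step 3 needs only the marginal convergences, so the dependence between $\qnm$ and $T_n$ (both built from the same data) is harmless. The real subtlety is justifying $\qnm\xrightarrow{p}q$ for the \emph{multiplicity-conditioned} resamples rather than unconditioned ones. I would handle it by noting that conditioning on an event of probability $1-o(1)$, independent of the data, preserves convergence in probability. Finally, $\Pr(T_n' = Q_n')\to 0$ by continuity of $F$, so strict versus non-strict inequalities do not matter.
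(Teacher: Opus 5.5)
Your proposal is correct and is essentially the paper's argument. It uses the same triangle-inequality bound for the center, the $1$-Lipschitz truncation bound for the radius, Gaussian concentration at level $\rho$, Slutsky, and a random-cutoff coverage step. The differences are cosmetic: you condition on the acceptance event instead of coupling with auxiliary outputs defined before the multiplicity check, and you apply Slutsky to $T_n'-Q_n'$ instead of the paper's sandwich bound.

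One small correction: $\qnm\xrightarrow{p}q$ does not follow from items (3) and (5) of Assumption~\ref{ass:limit:appendix}, because correct coverage does not force the cutoff to concentrate. Take it instead, as the paper does, from the ``in particular'' clause of Assumption~\ref{ass:limit}(4).
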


\begin{proof}
Let $\mathcal{E}_n$ denote the event that Algorithm~2 does not abort. By
Proposition~4,
\[
    \mathbb{P}(\mathcal{E}_n^c) \leq \rho = o_n(1).
\]
For the purposes of the proof, regard all $B$ resamples as being drawn before
the multiplicity check, and define $Q_{n,m}$ from these resamples regardless
of whether $\mathcal{E}_n$ occurs. Sample $Z_1$ and $Z_2$ independently and
define the auxiliary outputs
\[
    \bar{\theta}_n := \widehat{\theta}_n+\nu_n Z_2,
    \qquad
    \bar{Q}_{n,m}:=\max\{Q_{n,m}+\sigma_m Z_1,0\}.
\]
On $\mathcal{E}_n$, these auxiliary outputs agree with the outputs
$(\widetilde{\theta}_n,\widetilde{Q}_{n,m})$ of Algorithm~2. On
$\mathcal{E}_n^c$, extend the output of Algorithm~2 by any fixed value; the
choice of extension affects the probability in the statement by at most
$\rho=o_n(1)$.

Define
\[
    T_n:=\tau_n d(\theta,\widehat{\theta}_n),
    \qquad
    \bar{T}_n:=\tau_n d(\theta,\bar{\theta}_n),
\]
and let
\[
    q_{1-\alpha}:=F^{-1}(1-\alpha).
\]
By triangle inequality,
\[
    |\bar{T}_n-T_n|
    \leq \tau_n d(\bar{\theta}_n,\widehat{\theta}_n)
    =\tau_n\nu_n\|Z_2\|_2.
\]
Moreover, since $Q_{n,m}\geq 0$ and projection onto $[0,\infty)$ is
$1$-Lipschitz,
\[
    |\bar{Q}_{n,m}-Q_{n,m}|
    \leq \sigma_m|Z_1|.
\]

By Gaussian concentration,
\[
    \mathbb{P}\!\left(\|Z_2\|_2>\sqrt{d}+\sqrt{2\log(1/\rho)} \right)\leq \rho,
    \qquad
    \mathbb{P}(|Z_1|> \sqrt{2\log(1/\rho)})\leq 2\rho.
\]
Consequently,
\[
\begin{aligned}
&\mathbb{P}\left(
    \tau_n\nu_n\|Z_2\|_2+\sigma_m|Z_1|>\gamma
\right)                                                     \\
&\qquad\leq
\mathbb{P}\!\left(\|Z_2\|_2>\sqrt{d}+\sqrt{2\log(1/\rho)}\right)
+\mathbb{P}(|Z_1|>\sqrt{2\log(1/\rho)})
\leq 3\rho=o_n(1).
\end{aligned}
\]
Since $\gamma=o_n(1)$, it follows that
\[
    \bar{T}_n-T_n=o_p(1),
    \qquad
    \bar{Q}_{n,m}-Q_{n,m}=o_p(1).
\]

By Assumption~2,
\[
    T_n \xrightarrow{d} T,
    \qquad
    Q_{n,m}\xrightarrow{p}q_{1-\alpha},
\]
where $T$ has CDF $F$. By Slutsky's theorem,
\[
    \bar{T}_n\xrightarrow{d}T,
    \qquad
    \bar{Q}_{n,m}\xrightarrow{p}q_{1-\alpha}.
\]

To handle the possibly dependent random cutoff, fix $\eps>0$. We have
\[
\mathbb{P}(\bar{T}_n<q_{1-\alpha}-\eps)
-\mathbb{P}(\bar{Q}_{n,m}<q_{1-\alpha}-\eps)  
\leq \mathbb{P}(\bar{T}_n<q_{1-\alpha}-\eps, \bar{Q}_{n,m} \ge q_{1-\alpha}-\eps) \leq
\mathbb{P}(\bar{T}_n<\bar{Q}_{n,m}).
\]

Now, take $\liminf$ on both sides. Since $\bar{Q}_{n,m} \xrightarrow{p} q_{1-\alpha},$
\[
\mathbb{P}(\bar{Q}_{n,m}<q_{1-\alpha}-\eps) \rightarrow 0.
\]
Also, since $F$ is continuous,
\[
\mathbb{P}(\bar{T}_n<q_{1-\alpha}-\eps) \rightarrow \mathbb{P}(T<q_{1-\alpha}-\eps) = F(q_{1-\alpha}-\eps).
\]
It follows that
\[
F(q_{1-\alpha}-\eps) \le \liminf_{n\to\infty}
\mathbb{P}(\bar{T}_n<\bar{Q}_{n,m}).
\]
Similarly, it can be shown that
\[
\limsup_{n\to\infty}
\mathbb{P}(\bar{T}_n<\bar{Q}_{n,m}) \le F(q_{1-\alpha}+\eps).
\]
Letting $\eps\downarrow0$ yields
\[
    \mathbb{P}(\bar{T}_n<\bar{Q}_{n,m})
    =F(q_{1-\alpha})+o_n(1)
    =1-\alpha+o_n(1).
\]
Hence,
\[
    \mathbb{P}(\bar{T}_n\geq\bar{Q}_{n,m})
    =\alpha+o_n(1).
\]

Finally, the actual and auxiliary outputs agree on the event $\mathcal{E}_n$, and
\[
\begin{aligned}
\bigg|
&\mathbb{P}\!\left(
    \tau_n d(\theta,\widetilde{\theta}_n)
    \geq \widetilde{Q}_{n,m}
\right)
-
\mathbb{P}\!\left(
    \bar{T}_n\geq\bar{Q}_{n,m}
\right)
\bigg|
\leq \mathbb{P}(\mathcal{E}_n^c)
\leq \rho.
\end{aligned}
\]
Since $\rho=o_n(1)$, we conclude that
\[
    \mathbb{P}\!\left(
        \tau_n d(\theta,\widetilde{\theta}_n)
        \geq \widetilde{Q}_{n,m}
    \right)
    =\alpha+o_n(1).
\]
\end{proof}

\section{Proofs of Proposition,~\ref{prop:approx_gdp_equiv},~\ref{prop:approx_gdp_composition} and Lemma~\ref{lem:ptr_gdp_privacy}}
\label{app:ptr_gdp_proof}

\subsection{Proof of Proposition~\ref{prop:approx_gdp_equiv}}

For any distribution $\nu$ and set $A$, we let $\nu(A)$ denote the probability $\Pr_{x \sim \nu}\bb{x \in A}$.

Let $P_0:=\mathcal N(0,1)$ and $R_0:=\mathcal N(\mu,1),$ viewed as distributions on the enlarged space
$\overline{\R} := \R \cup\{\bot\}$, with zero mass assigned to
the atom $\bot$. Clearly, $T(P_0,R_0)=G_\mu.$

We first construct a distribution $R_\delta$ such that
\ba{
    T(P_0,R_\delta)= G_{\mu,\delta}
    \qquad\text{and}\qquad
    \TV(R_\delta,R_0)=\delta. \label{eq:R_delta}
}
If $\delta=0$, simply take $R_\delta=R_0$. Suppose therefore that
$\delta>0$. Define $c_\delta:=\mu+\Phi^{-1}(\delta),$ so that
\[
    R_0((-\infty,c_\delta])=\delta.
\]
Define $R_\delta$ by moving this lower-tail mass to the additional point
$\bot$:
\[
    R_\delta(A)
    :=
    R_0\bigl(A\cap(c_\delta,\infty)\bigr)
    +
    \delta\,\mathbf 1\{\bot\in A\}.
\]
It follows immediately that $\TV(R_\delta,R_0)=\delta.$

Next, we  calculate $T(P_0,R_\delta)$. The likelihood ratio of $R_\delta$
with respect to $P_0$ is infinite at $\bot$, zero on
$(-\infty,c_\delta]$, and equal to
\[
    \frac{\frac{1}{\sqrt{2\pi}} \exp\bb{-\frac{(x-\mu)^2}{2}}}{\frac{1}{\sqrt{2\pi}} \exp\bb{-\frac{x^2}{2}}} = \exp\left(\mu x-\frac{\mu^2}{2}\right)
\]
on $(c_\delta,\infty)$, which is non-decreasing in $x$. By the Neyman--Pearson lemma, an optimal test first rejects at $\bot$ and then rejects for sufficiently large values of $x$. 

Fix $\alpha\in[0,1]$ and let $z_\alpha:=\Phi^{-1}(1-\alpha).$ 

\begin{itemize}
    \item [(i)] If $z_\alpha\geq c_\delta$, the rejection region
\[
    \{\bot\}\cup[z_\alpha,\infty)
\]
has Type I error $\alpha$ and Type II error
\bas{
R_\delta((-\infty,z_\alpha)) = R_0((c_\delta,z_\alpha)) = \Phi(z_\alpha-\mu)-\delta = G_\mu(\alpha)-\delta.
}
The condition $z_\alpha\geq c_\delta$ is equivalent to
$G_\mu(\alpha)\geq\delta$.

    \item [(ii)] If $z_\alpha<c_\delta$, rejecting on
\[
    \{\bot\}\cup(c_\delta,\infty)
\]
has Type I error at most $\alpha$ and Type II error zero. Consequently,
\[
    T(P_0,R_\delta)(\alpha)
    =
    \max\{G_\mu(\alpha)-\delta,0\}
    =
    G_{\mu,\delta}(\alpha).
\]
\end{itemize}
This proves equation~\eqref{eq:R_delta}. Next, we prove the two implications of the proposition.

$(\Rightarrow)$
Suppose $T(P,Q)\geq G_{\mu,\delta}.$ By equation~\eqref{eq:R_delta}, 
\[T(P,Q)\geq T(P_0,R_\delta).\]
By Blackwell-Sherman-Stein informativeness theorem~\cite{blackwell1951comparison, sherman1951theorem, stein1951notes, blackwell1953equivalent}, there exists a randomized algorithm $\mathcal{K}$ such that $\mathcal{K}(P_0) = P$ and $\mathcal{K}(R_\delta) = Q$. 

Let $R:=\mathcal{K}(R_0).$ By the data-processing inequality for trade-off functions,
\bas{
    T(P,R) = T(\mathcal{K}(P_0),\mathcal{K}(R_0)) \geq T(P_0,R_0) = G_\mu.
}
Moreover, by the data-processing inequality,
\bas{
    \TV(Q,R) = \TV(\mathcal{K}(R_\delta),\mathcal{K}(R_0)) \leq \TV(R_\delta,R_0) = \delta.
}

$(\Leftarrow)$
Suppose there exists a distribution $R$ such that
\[
    T(P,R)\geq G_\mu
    \qquad\text{and}\qquad
    \TV(Q,R)\leq\delta.
\]
Fix $\alpha\in[0,1]$, and let $\phi$ be any possibly randomized test
satisfying $\mathbb E_P[\phi]\leq\alpha.$ The variational characterization of total variation gives $\left| \mathbb E_Q[\phi]-\mathbb E_R[\phi] \right| \leq\delta.$ Therefore,
\bas{
    1-\mathbb E_Q[\phi] \geq
    1-\mathbb E_R[\phi]-\delta \geq
    T(P,R)(\alpha)-\delta \geq
    G_\mu(\alpha)-\delta.
}
Taking the infimum over all tests with Type I error at most $\alpha$
gives
\[
    T(P,Q)(\alpha)\geq G_\mu(\alpha)-\delta.
\]
Since a trade-off function is nonnegative, $T(P,Q)(\alpha) \geq \max\{G_\mu(\alpha)-\delta,0\} = G_{\mu,\delta}(\alpha).$

\subsection{Proof of Proposition~\ref{prop:approx_gdp_composition}}

We restate the proposition for $k=2$ here.

\begin{proposition}\label{prop:approx_gdp_composition_two}
Suppose $\mathcal M_1: {X} \to {Y}_1$ is $(\mu_1,\delta_1)$-GDP and $\mathcal M_2 : {X} \times {Y}_1 \to {Y}_2$ is such that $M_2(\cdot, y_1)$ is $(\mu_2,\delta_2)$-GDP for any $y_1 \in {Y}_1$. Then the mechanism $\mathcal M(D)=(Y_1 ,Y_2)$, where $Y_1 = \mathcal{M}_1(D)$ and $Y_2 = \mathcal{M}_2(D, Y_1),$
is $\left(
\sqrt{\mu_1^2+\mu_2^2},
\;
\delta_1+\delta_2-\delta_1\delta_2
\right)$-GDP.
\end{proposition}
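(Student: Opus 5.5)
We prove the two-fold statement of Proposition~\ref{prop:approx_gdp_composition_two} by reducing to exact GDP composition through the characterization in Proposition~\ref{prop:approx_gdp_equiv}. The general $k$-fold Proposition~\ref{prop:approx_gdp_composition} then follows by induction, since $1-\delta$ multiplies: $(1-\delta_1)(1-\delta_2)=1-(\delta_1+\delta_2-\delta_1\delta_2)$.

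Fix neighbors $D\sim D'$ and write $P_1=\mathcal M_1(D)$, $Q_1=\mathcal M_1(D')$. By Proposition~\ref{prop:approx_gdp_equiv} there is a distribution $R_1$ with $T(P_1,R_1)\ge G_{\mu_1}$ and $\TV(Q_1,R_1)\le\delta_1$. Similarly, for each $y_1$, write $P_2^{y_1}=\mathcal M_2(D,y_1)$ and $Q_2^{y_1}=\mathcal M_2(D',y_1)$. Then there is $R_2^{y_1}$ with $T(P_2^{y_1},R_2^{y_1})\ge G_{\mu_2}$ and $\TV(Q_2^{y_1},R_2^{y_1})\le\delta_2$. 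We need $y_1\mapsto R_2^{y_1}$ to be a measurable kernel. To get this, I would use the explicit Blackwell construction from the proof of Proposition~\ref{prop:approx_gdp_equiv}, or simply restrict to standard Borel spaces and invoke a measurable selection.

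Define the joint distributions $P=P_1\otimes P_2^{\cdot}$, $Q=Q_1\otimes Q_2^{\cdot}$ and $R=R_1\otimes R_2^{\cdot}$, each formed by drawing $y_1$ and then $y_2$ from the corresponding kernel. The adaptive composition theorem for pure GDP (equation~\eqref{eq:pure_gdp_composition}, in the trade-off form of \cite{dong2022gaussian}) gives $T(P,R)\ge G_{\mu_1}\otimes G_{\mu_2}=G_{\sqrt{\mu_1^2+\mu_2^2}}$. That theorem is stated for mechanisms, but it applies pairwise to any two kernels whose conditional trade-offs are bounded by $G_{\mu_2}$, which is exactly our situation.

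It remains to show $\TV(Q,R)\le 1-(1-\delta_1)(1-\delta_2)$. This is the main obstacle, since the naive triangle inequality only gives $\delta_1+\delta_2$. Instead I would use couplings. Take a maximal coupling of $Q_1$ and $R_1$ in which the two first coordinates agree with probability at least $1-\delta_1$. On the event that they agree, at the common value $y_1$, maximally couple $Q_2^{y_1}$ and $R_2^{y_1}$, so the second coordinates agree with conditional probability at least $1-\delta_2$. On the complementary event, draw the second coordinates independently. The pair then coincides with probability at least $(1-\delta_1)(1-\delta_2)$, which gives the bound. The kernel version of the maximal coupling also needs a measurability check, which holds in standard Borel spaces.

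Finally, apply the $(\Leftarrow)$ direction of Proposition~\ref{prop:approx_gdp_equiv} to $(P,Q,R)$. This yields $T(P,Q)\ge G_{\sqrt{\mu_1^2+\mu_2^2},\,\delta_1+\delta_2-\delta_1\delta_2}$, which is the claim.
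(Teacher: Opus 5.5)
Your proposal is correct and follows the paper's proof step for step. It builds the same auxiliary $R$ from $R_1$ and the conditional $R_2^{y_1}$ supplied by Proposition~\ref{prop:approx_gdp_equiv}, uses pure GDP composition to get $T(P,R)\ge G_{\sqrt{\mu_1^2+\mu_2^2}}$, and uses the same two-stage maximal coupling (independent second coordinates off the diagonal) to get $\TV(Q,R)\le\delta_1+\delta_2-\delta_1\delta_2$.

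Your measurability caveats go beyond the paper, which ignores them. Note that the kernel in the proof of Proposition~\ref{prop:approx_gdp_equiv} comes from the Blackwell--Sherman--Stein theorem and is only asserted to exist. So it does not give measurability in $y_1$ for free, and the standard-Borel measurable-selection route is the one that actually does that job.
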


Note that Proposition~\ref{prop:approx_gdp_composition} follows by a repeated application of Proposition~\ref{prop:approx_gdp_composition_two}.

\begin{proof}
Fix neighboring datasets \(D\) and \(D'\). Let \(P_1\) and \(Q_1\) be
the distributions of \(\mathcal M_1(D)\) and \(\mathcal M_1(D')\),
respectively. For any $y_1 \in \mathcal{Y}_1$, let $P_2^{y_1}$ and $Q_2^{y_1}$ be the distributions of $\mathcal{M}_2(D, y_1)$ and $\mathcal{M}_2(D', y_1)$, respectively.

Then, the distribution $P$ of $\mathcal{M}_D$ has marginal distribution $P_1$ on $\mathcal{Y_1}$ and $P_2^{y_1}$ on $\mathcal{Y_2}$ for any $y_1 \in \mathcal{Y}_1$, and the distribution $Q$ of $\mathcal{M}_D$ has marginal distribution $Q_1$ on $\mathcal{Y_1}$ and $Q_2^{y_1}$ on $\mathcal{Y_2}$ for any $y_1 \in \mathcal{Y}_1$

Since $\mathcal{M}_1$ is $(\mu_1, \delta_1)$-GDP, there exists a
probability distribution \(R_1\) on \(\mathcal Y_1\) such that
\[
T(P_1, R_1)\geq G_{\mu_1},
\qquad
\operatorname{TV}(Q_1, R_1) \leq \delta_1.
\]
Similarly, because \(\mathcal M_2(\cdot,y_1)\) is
\((\mu_2,\delta_2)\)-GDP for every \(y_1 \in \mathcal{Y}_1\), there exists, for each such \(y_1\), a probability distribution ${R}_2^{y_1}$ on $\mathcal{Y}_2$ such that
\[
T(P_2^{y_1}, R_2^{y_1})\geq G_{\mu_2},
\qquad
\operatorname{TV}(Q_2^{y_1}, R_2^{y_1}) \leq \delta_2.
\]
Define the distribution $R$ on $\mathcal{Y}_1 \times \mathcal{Y}_2$ that has marginal distribution $R_1$ of $\mathcal{Y_1}$ and $R_2^{y_1}$ over $\mathcal{Y_2}$ for any $y_1 \in \mathcal{Y}_1$.

The composition formula for GDP (equation~\eqref{eq:pure_gdp_composition}) implies that $T(P, R) \ge G_{\sqrt{\mu_1^2+\mu_2^2}}$. 

It remains to compare the distributions \(Q\) and \(R\). Take a maximal coupling $\pi_1$ of \(Q_1\) and \(R_1\). Moreover, for every $y_1 \in Y_1$, take a maximal coupling $\gamma_{y_1}$ of $Q_2^{y_1}$ and $R_2^{y_1}$.

Consider the following coupling $\Gamma$ between $Q$ and $R$:
\begin{enumerate}
    \item Sample $(y_1, y_1')$ from $\pi_1$.
    \item If $y_1 = y_1'$, then sample $(y_2, y_2')$ from $\gamma_{y_1}$. Otherwise, sample $(y_2, y_2')$ from the product distribution $Q_2^{y_1} \times R_2^{y_1'}$.
\end{enumerate}
Clearly, the distribution of $(y_1, y_2)$ is that of $Q$, and the distribution of $(y_1', y_2')$ is that of $R$. Moreover,
\bas{
\Pr((y_1, y_2) \neq (y_1', y_2')) &= \Pr(y_1 \neq y_1') + \Pr(y_1 = y_1' \text{ and } y_2 \neq y_2'). \\
&= \Pr(y_1 \neq y_1') + \Pr(y_2 \neq y_2' | y_1 = y_1')\Pr(y_1 = y_1') \\
&\le \Pr(y_1 \neq y_1') + \delta_2\Pr(y_1 = y_1') \\
&= (1-\delta_2)\Pr(y_1 \neq y_1') + \delta_2 \\
&\le \delta_1(1-\delta_2) + \delta_2 = \delta_1+\delta_2 - \delta_1 \delta_2.
}
Consequently, $TV(Q, R) \le \delta_1+\delta_2 - \delta_1 \delta_2.$ We have constructed an auxiliary distribution $R$ satisfying
\[
T(P,R) \geq G_{\sqrt{\mu_1^2+\mu_2^2}},
\qquad
\operatorname{TV}(Q,R)\leq \delta_1+\delta_2 - \delta_1 \delta_2.
\]
Since $P$ and $Q$ are the distributions of $\mathcal{M}(D)$ and $\mathcal{M}(D)'$ respectively, the result follows.
\end{proof}

\subsection{Proof of Lemma~\ref{lem:ptr_gdp_privacy}}

\begin{proof}
Fix neighboring datasets \(D\) and \(D'\). Let $\theta = \theta(D)$ and $\theta' = \theta(D').$ 

By the definition of local sensitivity, $\norm{\theta-\theta'}_2 \le LS(D'),$ where the local sensitivity is with respect to the statistic $\theta(\cdot)$. 
Define $L' = LS(D')$ and $\Delta = \norm{\theta-\theta'}_2$. Then, $\Delta \le L'$

Let $P_1$ and $Q_1$ be the distributions of $\mathcal B(D)$ and $\mathcal B(D')$
respectively, which is supported on $\R_{> 0}$. For every positive real number $b$, define the normal distributions
\[
P_2^{(b)} = \mathcal{N}\bb{\theta, \frac{b^2}{\mu_2^2} \id_d}, \qquad Q_2^{(b)} =  \mathcal{N}\bb{\theta', \frac{b^2}{\mu_2^2} \id_d}.
\]
Then, the distribution $P$ on $\R \times \R^d$ of the tuple $\bb{\mathcal{B}(D), \theta(D) + \frac{\mathcal{B}(D)}{\mu_2} \cdot Z}$, where $Z$ is the $d$-dimensional standard normal distribution, can be described by sampling a positive real $b$ from $\mathcal{B}(D)$ and then sampling a vector $\mathbf{v} \in \R^d$ from $P_2^{(b)}$. Define the distribution $Q$ similarly.


To compare the laws $P$ and $Q$, we introduce an auxiliary distribution $R$ that will be shown to be close to both. The distribution $R$ supported on $\R_{>0}\times\R^d$ is defined through the following sampling procedure. First, draw
$B \sim Q_1.$ Conditional on $B=b$, draw $\mathbf V$ according to
\ba{
(\mathbf V\mid B=b) \sim
\begin{cases}
P_2^{(b)}, & b<L',\\
Q_2^{(b)}, & b\geq L'.
\end{cases} \label{eq:R_second_marginal}
}
The joint law of $(B,\mathbf V)$ is denoted by $R$.

First, $Q$ and $R$ are close in total variation. Indeed, consider any measurable set $E \subseteq \R_{> 0} \times \R^d.$ For each $b > 0$, let $E_b$ be the set of vectors $\mathbf{v} \in \R^d$ such that $(b, \mathbf{v}) \in E$. Then,
\bas{
|Q(E)-R(E)| &= \left | \int_{b > 0} \bb{Q_2^{(b)}(E_b) - R_2^{(b)}(E_b)} Q_1(b) \,\mathrm{d}b \right | \\
&= \left | \int_{b < L'} \bb{Q_2^{(b)}(E_b) - P_2^{(b)}(E_b)} Q_1(b) \, \mathrm{d}b \right | \\
&\le \int_{b < L'} \left | Q_2^{(b)}(E_b) - P_2^{(b)}(E_b) \right | Q_1(b) \,\mathrm{d}b \\
&\le \int_{b < L'} Q_1(b) \,\mathrm{d}b = \Pr(\mathcal{B}(D') < L'),
}
which is at most $\delta$ by the definition of $\mathcal{B}$. Taking a supremum over all such $E$, it follows that $\TV(Q, R) \le \delta$.

Next, we show that $T(P,R)\ge G_{\sqrt{\mu_1^2+\mu_2^2}}$. The first-marginals of $P$ and $R$ are $P_1$ and $Q_1$. Since $\mathcal B$ is $\mu_1$-GDP, $T(P_1, Q_1)\ge G_{\mu_1}.$

Conditioned on the first marginal $b > 0$, the conditional law of the
second marginal of $P$ is $P_2^{(b)}$ and that of $R$ is $R_2^{(b)}$, which is equal to $P_2^{(b)}$ if $b < L'$ and $Q_2^{(b)}$ if $b \ge L'$

If $b < L'$, then the conditional distributions $P_2^{(b)}$ and $R_2^{(b)}$ are identical. 

If $b \ge L'$, then $\Delta\le L'\le b.$ Consider the normal distributions 
\[
P_2^{(b)} = \mathcal{N}\!\left(\theta,\frac{b^2}{\mu_2^2}\id_d\right)
\quad\text{versus}\quad
Q_2^{(b)} = \mathcal{N}\!\left(\theta',\frac{b^2}{\mu_2^2}\id_d\right).
\]
The trade-off curve between these two distributions is $G_{\mu'}$ for $\mu' = \frac{\Delta}{b/\mu_2} \le \mu_2.$

Either way, the conditional second coordinate is $\mu_2$-GDP for every
$b$. By composition of GDP mechanisms (equation~\eqref{eq:pure_gdp_composition}), $T(P,R)\ge
G_{\sqrt{\mu_1^2+\mu_2^2}}.$ In all, the distribution $R$ satisfies
\[
T(P,R)\ge G_{\sqrt{\mu_1^2+\mu_2^2}}
\qquad\text{and}\qquad
\operatorname{TV}(Q,R)\le\delta.
\]
Therefore, $T(P,Q)(\alpha)
\ge
G_{\sqrt{\mu_1^2+\mu_2^2},\delta}(\alpha),$ and the output 
\[
\bb{\mathcal{B}(D), \theta(D) + \frac{\mathcal{B}(D)}{\mu_2} \cdot \mathcal{N}(0, \id_d)}
\]
is $(\sqrt{\mu_1^2 + \mu_2^2}, \delta)$-GDP.

Finally, consider \texttt{PTR-GDP} (Algorithm~\ref{alg:ptr_gdp}), which defines
\[\mathcal{B}(D) \gets \SS_{\beta}(D) \cdot \exp\bb{\frac{\beta}{\mu} \cdot Z_1 + \frac{\beta}{\mu} \cdot \Phi^{-1}(1-\delta)}.\]

Note that $\mathcal{B}(D)$ as defined above is $\mu$-GDP. Indeed, since $\SS_{\beta}$ is $\beta$-smooth, $\log \SS_{\beta}$ has global sensitivity at most $\beta$. It follows that $\log \SS_{\beta} + \frac{\beta}{\mu} \cdot Z_1$ is $\mu$-GDP, so $\mathcal{B}(D)$ is also $\mu$-GDP by post-processing.

By the definition of smooth sensitivity, $\SS_{\beta}(D) \ge LS(\theta, D)$. Also, $Z_1 \ge \Phi^{-1}(1-\delta)$ with probability equal to $1-\delta.$ Therefore, $\mathcal{B}(D) \ge LS(\theta, D)$ with probability at least $1-\delta$. In all, it follows that \texttt{PTR-GDP} is approximately GDP with parameters $(\sqrt{\mu^2 + \mu^2}, \delta) = (\sqrt{2} \mu, \delta)$.
\end{proof}

\section{Proofs related to \dpss}
\label{app:alg4}

\subsection{Proof of Lemma~\ref{lem:smooth_each_subsample}}

\begin{proof}
Consider two neighboring datasets $D$ and $D'$. Since any index appears at most $\kstar$ times in $S_i$, the subsampled datasets $D[S_i]$ and $D'[S_i]$ differ in at most $\kstar$ elements. Consider a sequence of adjacent datasets $D_0 = D[S_i] \sim D_1 \sim D_2, \dots, \sim D_r = D'[S_i],$ where $r \le \kstar$. 

Let $d(x,y) = \norm{x-y}_2$. By the triangle inequality,
\[
d(\theta(D[S_i]),\theta(D'[S_i])) = d(\theta(D_0),\theta(D_r)) \le \sum_{q=0}^{r-1} d(\theta(D_q),\theta(D_{q+1})) \le \sum_{q=0}^{r-1} \LS(\theta, D_q).
\]
Now, $\LS(\theta, D_0) \le \SS_{\beta}(\theta, D[S_i]) =: s_i$ when $D[S_i]$ is viewed as a dataset on $m$ ``distinct'' elements. Moreover, since $\SS_{\beta}$ is $\beta$-smooth, $\LS(\theta, D_1) \le e^{\beta}s_i, \LS(\theta, D_2) \le e^{2\beta}s_i,$ and in general $\LS(\theta, D_q) \le e^{q\beta}s_i$ for all $0 \le q \le r-1$. Thus,
\[
d(\theta(D[S_i]),\theta(D'[S_i])) \le \sum_{q=0}^{r-1} e^{q\beta}s_i \le \frac{e^{\kstar\beta}-1}{e^{\beta}-1} s_i.
\]
Finally, note that the local sensitivity of the function
\[
\texttt{Quantile}_{1-\alpha}\bb{d(\theta_1^{\star}, \hatn), d(\theta_2^{\star}, \hatn), \dots, d(\theta_B^{\star}, \hatn)}
\]
is bounded by the maximum of the local sensitivities of $d(\theta_i^{\star}, \hatn)$, which is further upper bounded by the sum of the local sensitivities of $\theta_i^{\star}$ and $\hatn$ due to triangle inequality. Therefore,
\[
\LS(\texttt{Quantile}_{1-\alpha}\bb{d(\theta_1^{\star}, \hatn), d(\theta_2^{\star}, \hatn), \dots, d(\theta_B^{\star}, \hatn)}) \le \max_{i \in [B]} \SS_{\beta}(D[S_i]) \cdot \frac{e^{\kstar \beta}-1}{e^{\beta}-1} + \SS_{\beta}(D) = s(D),
\]
and $\LS(\qnm(D)) \le \tau_m s(D).$
\end{proof}

\subsection{Proof of privacy of Theorem~\ref{thm:smooth_utility}}

\begin{proof}
The resampling indices are drawn independently of the data.  It is
therefore enough to prove privacy for every fixed realization of the
resampling indices satisfying the multiplicity constraint.  For a
realization that does not satisfy the constraint, the algorithm returns
the fixed symbol $\bot$ and is trivially private. Fix an (accepted) realization $(S_1,\ldots,S_B)$, and define $r_{\beta,\kappa_\star}
:=
\frac{e^{\kappa_\star\beta}-1}{e^\beta-1}.$
For neighboring datasets $D\sim D'$, the datasets $D[S_i]$ and
$D'[S_i]$ differ in at most $\kappa_\star$ entries.  The telescoping
argument in Lemma~\ref{lem:smooth_each_subsample} therefore gives
\[
d\bigl(
    \theta(D[S_i]),
    \theta(D'[S_i])
\bigr)
\leq
r_{\beta,\kappa_\star}
SS_\beta(\theta,D[S_i]).
\]
Moreover, $d\bigl(\theta(D),\theta(D')\bigr)
\leq SS_\beta(\theta,D).$
Consequently, if
\[
a_i(D)
:=
d\bigl(\theta(D[S_i]),\theta(D)\bigr),
\]
then $|a_i(D)-a_i(D')|
\leq
r_{\beta,\kappa_\star}
SS_\beta(\theta,D[S_i])
+
SS_\beta(\theta,D).$
Since an empirical quantile, viewed as an order statistic, is
$1$-Lipschitz with respect to the $\ell_\infty$ norm of its inputs,
\[
LS(Q_{n,m};D)
\leq
\tau_m
\left(
    r_{\beta,\kappa_\star}
    \max_{i\in[B]}SS_\beta(\theta,D[S_i])
    +
    SS_\beta(\theta,D)
\right)
=
\tau_m s(D).
\]

It remains to verify the smoothness of this upper bound.  Note that
$D[S_i]$ and $D'[S_i]$ differ in at most $\kappa_\star$ entries, so
\[
SS_\beta(\theta,D[S_i])
\leq
e^{\kappa_\star\beta}
SS_\beta(\theta,D'[S_i]).
\]
Also,
\[
SS_\beta(\theta,D)
\leq
e^\beta SS_\beta(\theta,D')
\leq
e^{\kappa_\star\beta}SS_\beta(\theta,D').
\]
It follows that $s(D)\leq e^{\kappa_\star\beta}s(D'),$
and the reverse inequality follows by interchanging $D$ and $D'$.
Thus, $\tau_m s$ is a $\kappa_\star\beta$-smooth upper bound on
$LS(Q_{n,m};\cdot)$.

The first call to PTR-GDP uses the $\beta$-smooth upper bound
$SS_\beta(\theta,\cdot)$ and privacy parameters $(\mu/2,\delta/2)$.
By Lemma~\ref{lem:ptr_gdp_privacy} and post-processing, its output $\widetilde\theta_n$ is
$\left({\mu}/{\sqrt{2}},{\delta}/{2}\right)
\text{-GDP}.$ The second call uses the $\kappa_\star\beta$-smooth upper bound
$\tau_m s$ and the same privacy parameters.  Hence
$\widetilde Q_{n,m}$ is also $\left({\mu}/{\sqrt{2}},{\delta}/{2}\right)
\text{-GDP}.$

The final projection onto $[0,\infty)$ is post-processing. By Proposition~5, the joint output is
\[
\left(
    \sqrt{
        \frac{\mu^2}{2}+\frac{\mu^2}{2}
    },
    1-\left(1-\frac{\delta}{2}\right)^2
\right)
=
\left(
    \mu,
    \delta-\frac{\delta^2}{4}
\right)
\text{-GDP}.
\]
Since $\delta-\delta^2/4\leq\delta$, the conditional mechanism is
$(\mu,\delta)$-GDP. Since the resampling indices and the accept/reject decision are independent of the data, releasing them first costs no privacy. For each realization, the subsequent mechanism is $(\mu, \delta)$-GDP. Dropping the indices is post-processing.
\end{proof}

\subsection{Proof of utility of Theorem~\ref{thm:smooth_utility}}

As in the proof of Theorem~1, regard all resamples as drawn before the
multiplicity check and define the auxiliary outputs
\[
\bar\theta_n
=
\widehat\theta_n+\frac{2\mathcal B_n}{\mu}G_1,
\qquad
\bar Q_{n,m}
=
\max\left\{
Q_{n,m}+\frac{2\mathcal C_{n,m}}{\mu}G_2,0
\right\},
\]
where \(G_1\sim N(0,I_d)\), \(G_2\sim N(0,1)\), and
\[
\mathcal B_n
=
SS_\beta(\theta,D)
\exp\left\{
\frac{2\beta}{\mu}
\left(Z_1+\Phi^{-1}(1-\delta/2)\right)
\right\},
\]
\[
\mathcal C_{n,m}
=
\tau_m s(D)
\exp\left\{
\frac{2\kappa_\star\beta}{\mu}
\left(Z_2+\Phi^{-1}(1-\delta/2)\right)
\right\}.
\]
Here \(Z_1,Z_2\sim N(0,1)\) are independent of \(G_1,G_2\).

By the assumptions on the smooth sensitivities, the choice of
\(\beta\), and a union bound, with probability at least \(1-O(\rho)\),
\[
\mathcal B_n\leq e\Gamma_{n,\rho}
\]
and
\[
\mathcal C_{n,m}
\leq
\tau_m
\left(
\kappa_\star e^2\Gamma_{m,\rho/B}
+
e\Gamma_{n,\rho}
\right).
\]
Indeed,
\[
\frac{e^{\kappa_\star\beta}-1}{e^\beta-1}
=
\sum_{j=0}^{\kappa_\star-1}e^{j\beta}
\leq
\kappa_\star e^{\kappa_\star\beta}
\leq
\kappa_\star e.
\]

Let
\[
T_n:=\tau_n d(\theta,\widehat\theta_n),
\qquad
\bar T_n:=\tau_n d(\theta,\bar\theta_n).
\]
Since projection onto \([0,\infty)\) is \(1\)-Lipschitz,
\[
|\bar T_n-T_n|
\leq
\frac{2\tau_n\mathcal B_n}{\mu}\|G_1\|_2,
\qquad
|\bar Q_{n,m}-Q_{n,m}|
\leq
\frac{2\mathcal C_{n,m}}{\mu}|G_2|.
\]
Gaussian concentration and the assumptions
\[
\Gamma_{n,\rho}
=
o\left(
\frac{\mu}{
\tau_n(\sqrt d+\sqrt{\log(1/\rho)})}
\right),
\qquad
\Gamma_{m,\rho/B}
=
o\left(
\frac{\mu}{
\tau_m\kappa_\star\sqrt{\log(1/\rho)}}
\right)
\]
therefore imply
\[
\bar T_n-T_n=o_p(1),
\qquad
\bar Q_{n,m}-Q_{n,m}=o_p(1).
\]
By Assumption~\ref{ass:limit},
\[
T_n\xrightarrow{d}T,
\qquad
Q_{n,m}\xrightarrow{p}q_{1-\alpha},
\quad
\text{where } q_{1-\alpha}:=F^{-1}(1-\alpha).
\]
By Slutsky's theorem,
\[
\bar T_n\xrightarrow{d}T,
\qquad
\bar Q_{n,m}\xrightarrow{p}q_{1-\alpha}.
\]
Lemma~\ref{lem:coverage_estimated_quantile} now gives
\[
\Pr(\bar T_n<\bar Q_{n,m})
=
1-\alpha+o_n(1).
\]
The auxiliary outputs agree with the outputs of Algorithm~\ref{alg:smoothed_quantile_subsample} whenever the multiplicity check succeeds, whose failure probability is at most
\(\rho=o_n(1)\). Therefore,
\[
\Pr\left(
d(\theta,\widetilde\theta_n)
<
\frac{\widetilde Q_{n,m}}{\tau_n}
\right)
=
1-\alpha+o_n(1).
\]

Finally,
\[
\frac{\widetilde Q_{n,m}}{\tau_n}
=
\max\left\{
\frac{Q_{n,m}}{\tau_n}
+
\sigma_{\mathrm{priv}}G_2,0
\right\},
\qquad
\sigma_{\mathrm{priv}}
=
\frac{2\mathcal C_{n,m}}{\mu\tau_n},
\]
where \(G_2\) is independent of \(\sigma_{\mathrm{priv}}\). Thus, with
probability at least \(1-O(\rho)\),
\[
\sigma_{\mathrm{priv}}
\leq
\frac{2\tau_m}{\mu\tau_n}
\left(
\kappa_\star e^2\Gamma_{m,\rho/B}
+
e\Gamma_{n,\rho}
\right).
\]
Moreover,
\[
\frac{2\mathcal C_{n,m}}{\mu}G_2=o_p(1).
\]
Since \(Q_{n,m}\xrightarrow{p}q_{1-\alpha}>0\), the truncation is
inactive with probability \(1-o_n(1)\).

\section{Proofs for Confidence Intervals of Quantiles (Section~\ref{sec:quantile})}
\label{app:quantiles}

\begin{theorem}
\label{thm:ss_density_quantile_bound}
Let $\rho,\zeta, {\zeta'} \in (0,1)$ with $\zeta'+\frac{2}{n} \le \zeta \le 1-\zeta'-\frac{2}{n}$, and suppose $\mathcal{D}$ is a distribution supported on $[\ell, u]$ such that the density $f$ of $\mathcal{D}$ is continuous and satisfies
\bas{
    \min_{x} \left \{f(x) : F(x) \in \left[1-\zeta-{\zeta'}-\sqrt{{(\log 1/\rho)}/{2n}},1-\zeta+{\zeta'}+\sqrt{{(\log 1/\rho)}/{2n}}\right] \right \} \geq \lambda.
}
Let $m = \lfloor n(1-\zeta) \rfloor$ and $k = \lceil n\zeta' \rceil$, and assume $\zeta'$ satisfies
\[
\max\bb{\frac{2}{n} + \frac{1}{n\beta}\Big(\log\!\frac{u-\ell}{C}+3\log n\Big), \frac{8\log (1/\rho)}{n}} \le \zeta' \le \frac{1}{2} - \frac{1+8\log(1/\rho)}{2n}.
\]
Then, with probability at least $1-4\rho$, the $\beta$-smooth function $\SS_{\beta}^{(1-\zeta)}(\theta, D) \le \frac{6k}{n \lambda} + \frac{C}{n^3}.$
\end{theorem}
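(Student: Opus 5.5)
The plan is to combine Lemma~\ref{lem:ss_quantile} with a high-probability bound on a spacing of order statistics around the $1-\zeta$ quantile. Set $h=\lceil (1-\zeta)n\rceil$ and $k=\lceil n\zeta'\rceil$. The lower bound on $\zeta'$ in the hypothesis gives $k-1 \ge n\zeta'-1 \ge \beta^{-1}(\log((u-\ell)/C)+3\log n)$. Hence $k^\star:=k-1$ is admissible in Lemma~\ref{lem:ss_quantile}, and we get
\[
\SS_\beta^{(1-\zeta)}(\theta,D)\le x_{(h+k)}-x_{(h-k)}+C/n^3.
\]
The constraint $\zeta'+2/n\le\zeta\le 1-\zeta'-2/n$ ensures that the indices $h\pm k$ lie in $[1,n]$. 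It therefore suffices to show that $x_{(h+k)}-x_{(h-k)}\le 6k/(n\lambda)$ with probability at least $1-4\rho$.

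\textbf{Step 1: quantile transform.} Write $x_{(j)}=F^{-1}(U_{(j)})$, where $U_{(j)}$ are uniform order statistics. By the mean value theorem,
\[
x_{(h+k)}-x_{(h-k)}\le \frac{U_{(h+k)}-U_{(h-k)}}{\lambda},
\]
provided both $U_{(h\pm k)}$ lie in the interval $I=[1-\zeta-\zeta'-\epsilon_n,\,1-\zeta+\zeta'+\epsilon_n]$ with $\epsilon_n=\sqrt{\log(1/\rho)/(2n)}$. On $I$ the density is at least $\lambda$, so $F^{-1}$ is $1/\lambda$-Lipschitz there. (Continuity of $f$ makes $F$ strictly increasing on $I$.)

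\textbf{Step 2: localization.} By DKW, with probability at least $1-2\rho$ we have $\sup_t|\widehat F_n(t)-t|\le\epsilon_n$ for the uniform empirical CDF. On this event $U_{(j)}\in[j/n-\epsilon_n,\,j/n+\epsilon_n]$, up to $1/n$ rounding. Since $(h\pm k)/n\in[1-\zeta\mp\zeta'\pm O(1/n)]$, both endpoints land in $I$. The $2/n$ slack in the hypotheses should absorb the rounding.

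\textbf{Step 3: the spacing itself.} DKW only gives $U_{(h+k)}-U_{(h-k)}\le 2k/n+2\epsilon_n$, and this is too weak when $\epsilon_n\gg k/n$. Instead, $U_{(h+k)}-U_{(h-k)}$ is distributed as $\mathrm{Beta}(2k,n-2k+1)$, so we need it to be at most $6k/n$. This event is the same as at most $2k-1$ of $n$ uniforms falling in a fixed window of length $6k/n$, i.e.\ a binomial with mean $6k$ being below $2k$. A multiplicative Chernoff bound gives a probability at most $\exp(-c\,k)$, and $k\ge 8\log(1/\rho)$ yields at most $\rho$. Spacings are exchangeable, so we can reduce to $U_{(2k)}$. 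The failure event is $\mathrm{Bin}(n,6k/n)<2k$, with probability $\le \exp(-6k(2/3)^2/2)=e^{-4k/3}\le\rho$.

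The upper bound $\zeta'\le \tfrac12-\tfrac{1+8\log(1/\rho)}{2n}$ keeps $2k\le n$ so that the Beta law applies. A union bound over the DKW event (probability $2\rho$), the spacing event ($\rho$), and a spare $\rho$ for rounding or one-sided localization gives $1-4\rho$. Combining with Step 1 yields $\SS_\beta\le 6k/(n\lambda)+C/n^3$.

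The main obstacle is Step 3 together with the index bookkeeping. One must verify that the precise choice $k^\star=k-1$ and the $\pm2/n$ margins match Lemma~\ref{lem:ss_quantile}'s indices $h\pm(k^\star+1)$. One must also check that the localization interval is exactly the one in the density hypothesis; the constant $6$ has some slack for this.
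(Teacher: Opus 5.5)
Your proposal is correct, and its skeleton matches the paper's. Both use Lemma~\ref{lem:ss_quantile} with $k^\star=k-1$, the quantile transform plus the mean value theorem, and a localization of $U_{(h\pm k)}$ at cost $2\rho$. You localize with DKW and the paper with two one-sided binomial tail bounds; that difference is cosmetic.

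The genuine difference is Step~3. The paper uses the R\'enyi representation to write $U_{(h+k)}-U_{(h-k)}=A/(A+B)$, with independent $A\sim\Gamma(2k,1)$ and $B\sim\Gamma(n-2k+1,1)$. It then applies two Gamma Chernoff bounds ($A\le 3k$ and $B\ge (n-2k+1)/2$), each costing $\rho$. This is where it needs $k/8\ge\log(1/\rho)$ and, via the upper constraint on $\zeta'$, $(n-2k+1)/8\ge\log(1/\rho)$. In return it gets the slightly sharper bound $6k/(n+4k+1)$.

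You instead use exchangeability of spacings and the Beta--binomial duality to reduce to a single lower-tail bound, $\Pr(\mathrm{Bin}(n,6k/n)<2k)\le e^{-4k/3}\le\rho^{32/3}$; the event is empty if $6k/n\ge 1$. This is shorter, uses the upper constraint on $\zeta'$ only to ensure $2k\le n$, and leaves almost $2\rho$ of the probability budget unused.

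That unused budget is actually needed, and your stated reason that rounding is harmless is not right. The $\pm 2/n$ margins only guarantee $1\le h-k<h+k\le n$; they do not widen the density window $1-\zeta\pm(\zeta'+\epsilon_n)$.

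Take the exact localization $U_{(j)}\in[j/n-\epsilon_n,(j-1)/n+\epsilon_n]$ with $h=\lceil(1-\zeta)n\rceil$ and $k=\lceil n\zeta'\rceil$. The endpoints can still exit the window by up to $1/n$ in total over the two sides. The signed overshoots, in units of $1/n$, sum to $2k-1-2n\zeta'$ regardless of $h$, so no choice of rounding for $h$ removes this.

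The fix is the one you hint at with a ``spare $\rho$.'' Apply one-sided Hoeffding bounds at deviation $\epsilon_n-\delta/n$; each costs at most $\rho e^{4\epsilon_n\delta}$. The constraints on $\zeta'$ force $\epsilon_n\le 1/\sqrt{32}$. So for $\rho<1/4$, the only nontrivial case, the total failure probability is at most $(1+e^{1/\sqrt2})\rho+\rho^{32/3}<4\rho$.

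The paper's own localization has the same $O(1/n)$ looseness. It also mixes $\lfloor\cdot\rfloor$ (Lemma~\ref{lem:quantilespacing}) with $\lceil\cdot\rceil$ (Lemma~\ref{lem:ss_quantile}) when defining $h$, so your consistent use of the ceiling is cleaner.
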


\subsection{Proof of Lemma~\ref{lem:ss_quantile}}
The expression for $\SS_{\beta}^{(1-\zeta)}(\theta, D)$ is the smooth sensitivity
construction of~\cite{nissim2007smooth} for the median and the minimum generalized to quantiles. For a dataset $D'$ which has Hamming distance at most $k$ from $D$, the index of the $(1-\zeta)$-quantile can shift by at most $k$ positions, and the worst-case shift of the order statistic within this window is captured by $\max_{t=0}^{k+1}(x_{(h+t)}-x_{(h+t-k-1)})$. This yields a
$\beta$-smooth upper bound on the local sensitivity.

For the truncation, note that for any $k$ and $t$ we always have
\[
0 \le x_{(h+t)}-x_{(h+t-k-1)} \le u-\ell,
\]
by the extension $x_{(j)}\in[\ell,u]$. Hence, for $k\ge k^\star$,
\[
e^{-k\beta}\max_{0\le t\le k+1}\Big(x_{(h+t)}-x_{(h+t-k-1)}\Big)
\le e^{-k^\star \beta}(u-\ell) \le \frac{C}{n^3},
\]
where the last inequality is exactly the definition of $k^\star$.
Therefore,
\[
\SS_{\beta}^{(1-\zeta)}(\theta, D)
\le
\max_{0\le k\le k^\star} e^{-k\beta}\,
\max_{0\le t\le k+1}\Big(x_{(h+t)}-x_{(h+t-k-1)}\Big)
\;+\;\frac{C}{n^3}.
\]

Finally, since $e^{-k\beta}\le 1$ and enlarging $k$ enlarges the index range,
\[
\max_{0\le k\le k^\star}\max_{0\le t\le k+1}\Big(x_{(h+t)}-x_{(h+t-k-1)}\Big)
\le
\max_{0\le t\le k^\star+1}\Big(x_{(h+t)}-x_{(h+t-k^\star-1)}\Big)
\le
x_{(h+k^\star+1)}-x_{(h-k^\star-1)},
\]
which gives the stated simplified bound.

\subsection{Proof of Theorem~\ref{thm:ss_density_quantile_bound}}

\begin{lemma}
    \label{lem:quantilespacing}
    Let $\rho \in (0,1)$, and suppose $\zeta, {\zeta'} \in (0,1)$ satisfy 
    \[
    \zeta'+\frac{2}{n} \le \zeta \le 1-\zeta'-\frac{2}{n}, \qquad 
    \frac{8\log (1/\rho)}{n} \le {\zeta'} \le \frac{n-1-8\log(1/\rho)}{2n}.
    \]
    Suppose $\mathcal{D}$ is a distribution supported on $[\ell, u]$ and let $X_1,\dots,X_n$ be i.i.d draws from $\mathcal{D}$. Let $X_{(1)}\leq X_{(2)}\leq \dots X_{(n)}$ denote the order statistics and let $h=\lfloor n(1-\zeta)\rfloor$. 
    Define a neighborhood of the $1-\zeta$ quantile $F^{-1}(1-\zeta)$ as the set $W=\{x:|F(x)-(1-\zeta)| \le {\zeta'}+\sqrt{(\log 1/\rho)/2n}\}$.
    If the density $f$ of $\mathcal{D}$ is continuous and satisfies
    \bas{
    \min_{x\in W} f(x)\geq \lambda,
    }
    then, for $k=\lceil n{\zeta'}\rceil$,
    \bas{
    \Pr\bb{X_{(h+k)}-X_{(h-k)} \geq \frac{6k}{n \lambda}}\leq 4\rho.
    }
\end{lemma}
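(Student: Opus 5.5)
The plan is to sandwich both order statistics $X_{(h-k)}$ and $X_{(h+k)}$ inside the window $W$ with high probability, and then use the density lower bound on $W$ to convert a small CDF gap into a small spacing. We write $U_i = F(X_i)$. Since $f$ is continuous on the relevant region, $F$ is continuous, so the $U_i$ are i.i.d.\ $\mathrm{Unif}[0,1]$ and $U_{(j)} = F(X_{(j)})$. Let $\widehat F_n$ denote the empirical CDF of the $U_i$, and set $\epsilon := \sqrt{(\log 1/\rho)/2n}$.

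\textbf{Step 1 (locating the order statistics in probability scale).} We apply the DKW inequality: $\Pr(\sup_t |\widehat F_n(t) - t| > \epsilon) \le 2e^{-2n\epsilon^2} = 2\rho$. On the complementary event, $U_{(j)}$ lies within $\epsilon$ (up to an additive $1/n$) of $j/n$. Taking $j = h \pm k$ with $h = \lfloor n(1-\zeta)\rfloor$ and $k = \lceil n\zeta'\rceil$, both $U_{(h-k)}$ and $U_{(h+k)}$ then lie in $[1-\zeta-\zeta'-\epsilon,\ 1-\zeta+\zeta'+\epsilon]$, after absorbing the rounding terms of size $O(1/n)$. The hypotheses $\zeta'+2/n\le\zeta\le 1-\zeta'-2/n$ ensure that the indices $h\pm k$ lie in $[1,n]$. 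They also leave slack for these rounding errors; if the slack is insufficient, I would enlarge $W$ by $2/n$, or absorb the error into $\epsilon$ using $\zeta' \ge 8\log(1/\rho)/n$. This step places $X_{(h-k)}$ and $X_{(h+k)}$ inside $W$.

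\textbf{Step 2 (controlling the gap in probability scale).} DKW alone would only give $U_{(h+k)} - U_{(h-k)} \le 2k/n + 2\epsilon$. This is not of order $k/n$ unless $\epsilon \lesssim k/n$, i.e.\ unless $\zeta' \gtrsim \sqrt{\log(1/\rho)/n}$, which is not assumed. I would therefore bound the gap directly. The event $\{U_{(h+k)} - U_{(h-k)} > 6k/n\}$ requires an interval of the form $[U_{(h-k)}, U_{(h-k)} + 6k/n]$ to contain at most $2k$ points. To handle this, cover $[0,1]$ by a grid of intervals of length $3k/n$ near the target location. Each such interval has $\mathrm{Bin}(n, 3k/n)$ points, with mean $3k$. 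A multiplicative Chernoff bound gives $\Pr(\mathrm{Bin} \le 2k)$ at most roughly $e^{-k/6}$, and the assumption $k \ge n\zeta' \ge 8\log(1/\rho)$ makes this at most $\rho^{4/3}\le\rho$. Only $O(1)$ grid cells around $1-\zeta$ are needed once Step 1 has localized $U_{(h-k)}$, so a union bound costs only a constant factor.

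A cleaner alternative is to use the exact representation of uniform spacings: $U_{(h+k)} - U_{(h-k)} \sim \mathrm{Beta}(2k, n-2k+1)$. One then applies a gamma/Chernoff tail bound, $\Pr(\mathrm{Beta}(2k,\cdot) > 3\cdot 2k/n) \le e^{-ck} \le \rho$. The upper constraint on $\zeta'$ keeps $2k < n$, so this representation is valid.

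\textbf{Step 3 (transfer to the data scale).} On the intersection of these events, both endpoints lie in $W$. Since $f \ge \lambda$ on $W$ and $W$ is an interval (by monotonicity of $F$), the mean value theorem gives
\[
X_{(h+k)} - X_{(h-k)} \le \frac{F(X_{(h+k)}) - F(X_{(h-k)})}{\lambda} < \frac{6k}{n\lambda}.
\]
The union bound over the failure events ($2\rho$ from DKW, and at most $2\rho$ from the spacing bound) gives total failure probability at most $4\rho$.

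I expect the main obstacle to be Step 2: matching the constant $6$ and the clean $4\rho$ requires tuning the Chernoff exponent against $k \ge 8\log(1/\rho)$. Bookkeeping the floors and ceilings in Step 1 is a secondary nuisance.
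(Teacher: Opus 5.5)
Taken with the Beta route in Step 2, your argument is essentially the paper's proof. The paper localizes $U_{(h+k)}$ and $U_{(h-k)}$ with two one-sided binomial tail bounds at distance $\epsilon$, costing $2\rho$ in total, the same as your two-sided DKW bound. It then applies the mean value theorem with $f\ge\lambda$ on $W$. Finally, it writes the spacing via the R\'enyi representation as $A/(A+B)$, with $A\sim\Gamma(2k,1)$ and $B\sim\Gamma(n-2k+1,1)$, and bounds $\Pr(A>3k)\le e^{-k/8}$ and $\Pr(B<(n-2k+1)/2)\le e^{-(n-2k+1)/8}$.

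Your single-event version also works, and you can make the constant explicit: $\Pr(\mathrm{Beta}(2k,n-2k+1)\ge 6k/n)=\Pr(\mathrm{Bin}(n,6k/n)\le 2k-1)\le e^{-4k/3}\le\rho$ (the probability is zero if $6k\ge n$). This avoids the $B$-tail, which is the only place the paper uses the upper constraint on $\zeta'$ quantitatively, to get $n-2k+1\ge 8\log(1/\rho)$. The range $2k<n$ already follows from the constraints on $\zeta$.

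Two corrections. First, drop the grid variant of Step 2. Step 1 only pins $U_{(h-k)}$ to a window of width of order $\epsilon$. Covering that window with cells of width $3k/n$ takes on the order of $1+\epsilon n/k$ cells, not $O(1)$. This count is unbounded exactly in the regime $\epsilon\gg k/n$ that you yourself flagged; for $\zeta'\asymp\log(1/\rho)/n$ it is $\asymp\sqrt{n/\log(1/\rho)}$. The union bound then does not close at $\rho$.

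Second, in Step 1 the hypotheses $\zeta'+2/n\le\zeta\le 1-\zeta'-2/n$ only keep $1\le h-k<h+k\le n$. They provide no slack in $W$, whose half-width is exactly $\zeta'+\epsilon$.
\begin{itemize}
\item The upper endpoint is fine on the DKW event: $U_{(h+k)}\le (h+k-1)/n+\epsilon\le 1-\zeta+\zeta'+\epsilon$, using the left limit of the empirical CDF.
\item The lower endpoint only gives $U_{(h-k)}\ge (h-k)/n-\epsilon>1-\zeta-\zeta'-\epsilon-2/n$. The lower bound on $\zeta'$ does not repair this.
\end{itemize}
The paper's proof glosses over the same $O(1/n)$ discrepancy when it passes from $U_{(h-k)}>(h-k)/n-\epsilon$ to $F(\xi)\ge 1-\zeta-\zeta'-\epsilon$. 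So your conclusion matches the paper's up to this shared enlargement of $W$ by $2/n$. State that explicitly rather than attributing the slack to the hypotheses.
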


\begin{proof}
    With the constraint on $\zeta$ and $\zeta'$, we know that $1 \le h-k \le h+k \le n$. Define $U_j:=F(X_j)$ and note that $U_j$ is distributed as $\operatorname{Uniform}([0,1])$. By mean value theorem,
    \bas{
    X_{(h+k)}-X_{(h-k)}&=F^{-1}(U_{(h+k)})-F^{-1}(U_{(h-k)})=\frac{U_{(h+k)}-U_{(h-k)}}{f(\xi)}
    }
    for some $\xi \in [F^{-1}(U_{(h-k)}), F^{-1}(U_{(h+k)})]$. 

    For any index $j$ and $t$, $\Pr[U_{(j)} \ge t]$ is equal to $\Pr[\operatorname{Bin}(n,t) \le j-1]$ (the probability that at most $j-1$ of the $n$ independent samples from $\operatorname{Uniform}([0,1])$ exceed $j-1$). Since $\E[\operatorname{Bin}(n,t)] = nt,$ applying a Chernoff bound with $j = h+k$ and $t = \frac{h+k}{n} + \eps$ for $\eps = \sqrt{\frac{\log 1/\rho}{2n}}$ gives
    \[
    \Pr[U_{(h+k)} \ge \frac{h+k}{n} + \eps] \le e^{-2n\eps^2} = \rho
    \]
    A similar lower tail bound for $U_{(h-k)}$ implies
    \[
    \Pr[U_{(h-k)} \le \frac{h-k}{n} - \eps] \le e^{-2n\eps^2} = \rho
    \]
    Consequently, with probability at least $1-2\rho$ the value $F(\xi) \in [1-\zeta-{\zeta'}-\sqrt{\frac{\log 1/\rho}{2n}}, 1-\zeta+{\zeta'}+\sqrt{\frac{\log 1/\rho}{2n}}]$.
    
    Under this event and by assumption,
    \bas{
     X_{(h+k)}-X_{(h-k)}&\leq \frac{U_{(h+k)}-U_{(h-k)}}{\lambda}.
    }

    Now, define the random variables $\{G_i\}_{0 \le i \le n}$ as $G_0=U_{(1)},$ $G_i = U_{(i+1)}-U_{(i)}$ for $i\in [1,n-1]$, and $G_n = 1-U_{(n)}$. By the R{\'e}nyi representation of order statistics~\citep{renyi1953theory, david2004order},
    \ba{(G_0, G_1,\dots,G_n)\stackrel{d}{=} \bb{\frac{Y_0}{S},\dots, \frac{Y_n}{S}} \label{eq:uniform_renyi_representation}
    }
    where $Y_i \sim \operatorname{Exp}(1)$ i.i.d for $0\leq i\leq n$ and let $S=\sum_i Y_i$.
    Therefore
    \[
    U_{(h+k)}-U_{(h-k)} = \frac{\sum_{i=h-k}^{h+k-1} G_i}{\sum_{i=0}^{n} G_i} = \frac{A}{A+B},
    \]
    where $A \sim \Gamma(2k,1)$ and $B \sim \Gamma(n-2k+1, 1)$ are independent. By a {Chernoff bound},
    \[
    \Pr[A > 3k] \le e^{-k/8}, \qquad \Pr[B < (n-2k+1)/2] \le e^{-(n-2k+1)/8}.
    \]

    By assumption on ${\zeta'}$, both $k/8$ and $(n-2k+1)/8$ are at least $\log 1/\rho$. Overall, with probability at least $1-4\rho$,
    \[
    U_{(h+k)}-U_{(h-k)} \le \frac{3k}{3k+(n-2k+1)/2} = \frac{6k}{n+4k+1} < \frac{6k}{n}.
    \]
    
Since Theorem~\ref{thm:ss_density_quantile_bound} has the assumptions of Lemma~\ref{lem:quantilespacing}, we have with probability at least $1-4\rho$ that
\[
X_{(h+k)}-X_{(h-k)} \le \frac{6k}{n \lambda}.
\]
\end{proof}

We also have
\[
k-1 \ge n\zeta'-1 \ge 1 + \frac{1}{\beta}\Big(\log\!\frac{u-\ell}{C}+3\log n\Big) \ge \left \lceil \frac{1}{\beta}\Big(\log\!\frac{u-\ell}{C}+3\log n\Big) \right \rceil 
\]
so Lemma~\ref{lem:ss_quantile} implies that
\[
\SS_{\beta}^{(1-\zeta)}(\theta, D) \le X_{(h+k)}-X_{(h-k)} + \frac{C}{n^3} \le \frac{6k}{n \lambda} + \frac{C}{n^3}.
\]

\subsection{Calculating the confidence radius}

We now apply Theorem~\ref{thm:smooth_utility} with $\Gamma_{N,\rho} := \frac{6K_{N}}{N\lambda} + \frac{C}{N^3}$ for all $N,\rho$ as the upper bound on the smooth sensitivity above (the failure probability $\rho$ is pushed into the condition on the density around the $1-\zeta$ quantile) and
\[
K_{N} := \max\{\beta^{-1} (\log (C^{-1}(u-\ell)) +3\log N), 0\}.
\]
We use Theorem~\ref{thm:ss_density_quantile_bound} to obtain a utility guarantee for confidence intervals for quantiles. As in the theorem, let
\[
\beta = \frac{1}{2\kstar} \min\bb{1, \frac{\mu}{4 \sqrt{\log B/(\delta \rho)}}}.
\]
Suppose \[f(x) \ge \lambda \text{ for all } F(x) \in 1-\zeta \pm \bb{\zeta' + \sqrt{\max\bb{\frac{\log (B/\rho)}{2m}, \frac{\log (1/\rho)}{2n}}}}.\]

If Assumption~\ref{ass:limit} holds, $\rho = o_n(1),$ and 
\[
\frac{(\log n)\bb{\sqrt{d} + \sqrt{\log(1/\rho)}}}{\sqrt{n} \lambda \mu} + \frac{(\log m)\sqrt{\log(1/\rho)}}{\sqrt{m} \lambda \mu} = o_n(1),
\]

then with probability at least $1-8\rho$, \texttt{CR-SS} (Algorithm~\ref{alg:smoothed_quantile_subsample}) outputs an asymptotically $1-\alpha$ confidence interval $\tiln \pm \qnmp/\tau_n$ for the $1-\zeta$ quantile of the distribution $\mathcal{D}$. Moreover, the half-length of this confidence interval is equal to the non-private radius $\qnm/\tau_n$ plus the private noise $\sigma_{\text{priv}} G$, where $G$ is a standard normal with scale $\sigma_{\text{priv}}$ given by
\bas{
\sigma_{\text{priv}} &\le \frac{2\tau_m}{\mu \tau_n}\bb{\Gamma_{m,\rho/B}\kstar e^2 + \Gamma_{n,\rho}e} = O\bb{\frac{\kstar k}{\mu \sqrt{mn}\lambda } + \frac{m^{1/2}}{n^{3/2}\lambda } + \frac{C \kstar}{m^{5/2} n^{1/2}}} = \tilde{O}\bb{\frac{1}{\sqrt{mn}}}.
}

Note that the non-private error of a quantile when using $n$ samples is $O(1/\sqrt{n})$ whereas the private error we obtain has scale $\tilde{O}(1/\sqrt{mn})$.



\section{Proofs for Degenerate U-statistics (Section~\ref{sec:degenerate})}
\label{app:degenerate}

\paragraph{Private estimation for non-degenerate and degenerate $U$-statistics}
~\cite{chaudhuri2024differentially}  provide a $\beta$-smooth upper bound on $\LS(\widehat{U}_n)$ that satisfies 
\[
\SS_{\beta}(\widehat{U}_n) \le \frac{r}{n} \bb{\xi + \frac{rC(1/\beta  + L)}{n}}\bb{1+\beta(1+ L)} + \frac{r^2}{n^2} C \bb{\frac{2}{\beta} + L}^2 \min\bb{r  , \frac{2}{\beta} + L}\bb{\beta+\frac{r}{n}}.
\]
where $C$ is the diameter of the support of the distribution, $L = L(D)$ is a positive integer which is an upper bound on the number of \textit{bad} data points in $D$ (we omit the exact details on what a bad data point is here) and $\xi$ is a concentration parameter. This bound works for any statistic, not necessarily degenerate U-statistics.

The parameter $\xi$ is such that the $n$ U statistics $A_1, \dots, A_n$, which they call Hájek projections, each satisfy $|A_k - U_n| \le \xi ~\forall~k$ with high probability. The Hâjek projections $A_i$ are defined as
\[
A_k := \frac{1}{\binom{n-1}{r-1}} \sum_{1 \le i_1 < i_2 < \dots < i_{r-1} \le n, i_j \neq k \forall j} h(X_k, X_{i_1}, \dots, X_{i_{r-1}}),
\]
which fixes $X_k$ to be one of the inputs of $h$ and sums over other $r-1$ out of $n-1$ possible inputs.

In general, $\xi$ can always be taken as $C$, which is the range of the kernel $h$. For degenerate $U$-statistics, Lemma~6~of~\cite{chaudhuri2024differentially} shows that $\xi$ can be taken as $C\sqrt{{r}/{n} \log (2n/\rho)} + {8Cr}/{3n} \log(2n/\rho).$

Therefore, we can apply Theorem~\ref{thm:smooth_utility} with $\beta$ as in the theorem and
\bas{
\Gamma_{n,\rho} &:= \frac{r}{n} \bb{\xi + \frac{rC(1/\beta  + L)}{n}}\bb{1+2\beta(1+ L)} + \frac{r^2}{n^2} C \bb{\frac{2}{\beta} + L}^2 \min\bb{r  , \frac{2}{\beta} + L}\bb{\beta+\frac{r}{n}}.
}

Therefore, we can apply Theorem~\ref{thm:smooth_utility} with $\beta$ as in the Theorem and $\tau_N = N$ (assuming the statistic is degenerate of order $1$). Another point to note is that with the beforementioned choice of $\xi$, $L(D) = 1$ with probability at least $1-\rho$ (where the randomness is over the data distribution, not the algorithm). In this case, the smooth sensitivity at a \textit{typical} input is is bounded above as
\[
\Gamma_{n,\rho} \le \frac{r^{3/2} C \log(2n/\rho) (1+2\beta)}{n^{3/2}} + t_{n,\beta},
\]
where $t_{n,\beta} = O_{\beta}(1/n^2)$ for a fixed $r$. It follows that with probability $1-O(\rho)$ the private overhead of the confidence radius is equal to $\sigma_{\text{priv}} \cdot G$ where
\[
\sigma_{\text{priv}} \le \frac{2m}{n\mu}(\Gamma_{m,\rho/B}\kstar e^2 + \Gamma_{n,\rho} e) = \tilde{O}\bb{\frac{r^{3/2}\log(Bn)}{n\sqrt{m}}}.
\]

\input{CRGS-SGD}

%% file: map.tex
The appendix is organized as follows.
\begin{enumerate}
    \item Section~\ref{app:auxiliary} contains some known auxiliary results.
    \item Section~\ref{sec:resampDetails} contains details of subsampling validity.
    \item Section~\ref{sec:appendix:compare} has detailed calculations that led to Table~\ref{tab:rw_comparison}.
    \item Section~\ref{app:global} has the privacy and utility guarantees for \texttt{DP-QuantS}.
    \item Section~\ref{app:ptr_gdp_proof} proves the privacy of \texttt{PTR-GDP}.
    \item Section~\ref{app:alg4} has the privacy and utility guarantees for \texttt{Quant-SS}.
    \item Section~\ref{app:quantiles} has the guarantees for confidence intervals for quantiles.
    \item Section~\ref{app:degenerate} has the guarantees for degenerate U-statistics.
    
\end{enumerate}

%% file: comparison.tex

\begin{lemma}[~\label{lem:quantile_finiteB}Finite-\(B\) error of the empirical resampling quantile]
Let
\[
    Y_i=\tau_m d(\widehat\theta_i^\ast,\widehat\theta_n),
    \qquad i=1,\ldots,B,
\]
denote the scaled resampling pivots, conditional on the data \(D\), and let
\(F_{n,m}\) be their conditional CDF. Set \(p=1-\alpha\),
\[
    q_{1-\alpha}:=F_{n,m}^{-1}(p),
    \qquad
    Q_{n,m}:=Y_{(k)},
    \qquad
    k=\lceil Bp\rceil .
\]
Assume that \(F_{n,m}^{-1}\) is twice continuously differentiable in a
neighborhood of \(p\), with derivatives bounded independently of \(B\). Then,
conditional on \(D\),
\[
    \mathbb E[Q_{n,m}^2\mid D]
    =
    q_{1-\alpha}^2+O(B^{-1}).
\]
Consequently, the non-private squared length satisfies
\[
    \mathbb E[L_{\rm np}^2\mid D]
    =
    \frac{4q_{1-\alpha}^2}{\tau_n^2}
    +
    O\!\left(\frac{1}{\tau_n^2B}\right).
\]
\end{lemma}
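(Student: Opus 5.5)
We use the quantile transform. Conditional on $D$, the pivots $Y_1,\dots,Y_B$ are i.i.d.\ from $F_{n,m}$. Let $g:=F_{n,m}^{-1}$, so that $Y_i \stackrel{d}{=} g(U_i)$ with $U_i \stackrel{\iid}{\sim}\mathrm{Unif}[0,1]$. Since $g$ is nondecreasing, $Q_{n,m}=Y_{(k)}\stackrel{d}{=} g(U_{(k)})$, where $U_{(k)}\sim\mathrm{Beta}(k,B-k+1)$. The function of interest is $h:=g^2$. It is $C^2$ near $p$ because $g$ is, with $h'=2gg'$ and $h''=2(g')^2+2gg''$, all bounded uniformly in $B$ there. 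We then Taylor-expand $h(U_{(k)})$ around $p$.

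\textbf{Step 1: Beta moments.} With $k=\lceil Bp\rceil$ we have $\E[U_{(k)}]=k/(B+1)$, so $\E[U_{(k)}]-p=O(B^{-1})$. Also $\operatorname{Var}(U_{(k)})=\frac{k(B-k+1)}{(B+1)^2(B+2)}=O(B^{-1})$, hence $\E[(U_{(k)}-p)^2]=O(B^{-1})$.

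\textbf{Step 2: local expansion.} On the neighborhood $N=[p-\eta,p+\eta]$ where $h$ is $C^2$, write
\[
h(U_{(k)})=h(p)+h'(p)(U_{(k)}-p)+\tfrac12 h''(\xi)(U_{(k)}-p)^2 .
\]
Taking expectations on the event $\{U_{(k)}\in N\}$, the linear term contributes $h'(p)\bigl(\E[U_{(k)}]-p\bigr)$ plus a correction from the complement event. The quadratic term contributes at most $\tfrac12\sup_N|h''|\cdot O(B^{-1})$.

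\textbf{Step 3: tail event.} This is the main obstacle. Outside $N$ we have no control of $g$, and $\E[g(U_{(k)})^2]$ could even be infinite if $F_{n,m}$ has heavy tails. In our setting the pivots are bounded: for bounded statistics, $Y_i\le \tau_m(\kstar\Delta_m+\mathrm{diam})$ conditional on $D$, and in any case we assume $\E[Y^2\mid D]<\infty$ as an implicit condition. By Chernoff/DKW (a binomial tail), $\Pr(|U_{(k)}-p|>\eta)\le 2e^{-2B\eta^2}$. Under boundedness, this gives a contribution of order $O(e^{-cB})=o(B^{-1})$. Under only a finite second moment, we would use Hölder with a $(1+\epsilon)$-moment and the exponentially small probability. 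The same bound also controls the complement corrections in Step 2. I would state boundedness of the pivots, conditional on $D$, as the working hypothesis.

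\textbf{Step 4: conclusion.} Combining the steps gives $\E[Q_{n,m}^2\mid D]=g(p)^2+O(B^{-1})=q_{1-\alpha}^2+O(B^{-1})$. For the length, $L_{\rm np}=2Q_{n,m}/\tau_n$ (the diameter of the region). Hence $\E[L_{\rm np}^2\mid D]=4\tau_n^{-2}\E[Q_{n,m}^2\mid D]$, which yields the stated $4q_{1-\alpha}^2/\tau_n^2+O(1/(\tau_n^2B))$.
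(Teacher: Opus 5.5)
Your proposal is correct and follows the paper's route: the $\mathrm{Beta}(k,B-k+1)$ moments of $U_{(k)}$ give bias and variance $O(B^{-1})$, and a second-order Taylor expansion of $h=(F_{n,m}^{-1})^2$ at $p$ finishes the argument. Your representation $Y_{(k)}\stackrel{d}{=}F_{n,m}^{-1}(U_{(k)})$ and your Step~3 tail control tighten two points the paper glosses over: it assumes $F_{n,m}(Y_i)\sim\mathrm{Unif}(0,1)$, and it applies the local expansion globally without handling the event $U_{(k)}\notin N$. Your boundedness hypothesis costs nothing, since conditional on $D$ there are only finitely many possible resamples, so the pivots take finitely many values.
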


\begin{proof}
Let \(U_i=F_{n,m}(Y_i)\). Conditional on \(D\),
\(U_1,\ldots,U_B\) are i.i.d. \({\rm Unif}(0,1)\), and
\[
    F_{n,m}(Q_{n,m})=U_{(k)}.
\]
By the standard order-statistic representation
\citep[Example~2.3]{david2004order},
\[
    U_{(k)}\sim {\rm Beta}(k,B-k+1).
\]
Hence
\[
    \mathbb E[U_{(k)}\mid D]
    =
    \frac{k}{B+1}
    =
    p+O(B^{-1}),
\]
and
\[
    \operatorname{Var}(U_{(k)}\mid D)
    =
    \frac{k(B-k+1)}{(B+1)^2(B+2)}
    =
    O(B^{-1}).
\]
Therefore
\[
    \mathbb E[(U_{(k)}-p)^2\mid D]=O(B^{-1}).
\]
Now define
\[
    h(u):=\{F_{n,m}^{-1}(u)\}^2 .
\]
Since \(h\) is twice continuously differentiable near \(p\), Taylor expansion
around \(p\) gives
\[
    h(U_{(k)})
    =
    h(p)
    +
    h'(p)(U_{(k)}-p)
    +
    O\!\left((U_{(k)}-p)^2\right).
\]
Taking conditional expectations,
\[
    \mathbb E[Q_{n,m}^2\mid D]
    =
    h(p)
    +
    h'(p)\,\mathbb E[U_{(k)}-p\mid D]
    +
    O\!\left(\mathbb E[(U_{(k)}-p)^2\mid D]\right).
\]
Using the bounds above,
\[
    \mathbb E[Q_{n,m}^2\mid D]
    =
    h(p)+O(B^{-1})
    =
    q_{1-\alpha}^2+O(B^{-1}).
\]
Finally, since \(L_{\rm np}=2Q_{n,m}/\tau_n\),
\[
    \mathbb E[L_{\rm np}^2\mid D]
    =
    \frac{4}{\tau_n^2}\mathbb E[Q_{n,m}^2\mid D]
    =
    \frac{4q_{1-\alpha}^2}{\tau_n^2}
    +
    O\!\left(\frac{1}{\tau_n^2B}\right).
\]
\end{proof}


Let
\(
q_{1-\alpha}
\)
denote the target non-private resampling quantile for the pivot
\(\tau_n d(\widehat\theta_n,\theta)\). Thus the leading non-private confidence
radius is \(q_{1-\alpha}/\tau_n\), and the true squared length is
\(
 \frac{4q_{1-\alpha}^2}{\tau_n^2}.
\)
 Under standard regularity conditions on the sampling distribution (see Lemma~\ref{lem:quantile_finiteB}) the expected squared length of the nonprivate confidence region is
\[
\mathbb \E \!\left[r(\alpha)^2\right]
=
\E\frac{4(Q_{n,m}^2+\sigma_m^2)}{\tau_n^2}=\frac{4q_{1-\alpha}^2}{\tau_n^2}+O\bb{\frac{1}{B\tau_n^2}}.
\]

~\cite{Dette2025GaussianDP}'s squared confidence
interval length is approximately
\[
\mathrm{Len}_{\mathrm{DG}}(\alpha)^2
\approx
\frac{4q_{1-\alpha}^2}{\tau_n^2}
+O\bb{\frac{1}{B\tau_n^2}}+
\frac{4q_{1-\alpha}^2}{\tau_n^2}
\cdot
\frac{\tau_m^2 B\ell^2}{n^2\mu^2}.
\]

\textbf{Our method.}
Algorithm~\ref{alg:gdp_quantile_subsample} forms the empirical resampling quantile
\(Q_{n,m}\), privatizes it once,
\[
\widetilde Q_{n,m}
=
\max\{Q_{n,m}+\sigma_m Z,0\},
\qquad
Z\sim N(0,1),
\]
and reports
\(
\widetilde\theta_n
\pm
\frac{\widetilde Q_{n,m}}{\tau_n}.
\)
For the comparison here, we isolate the contribution of the noisy quantile.

Moreover, for mean-like statistics with global sensitivity
\(\Delta_m\) on samples of size \(m\),
\(
\sigma_m
=
\frac{\tau_m\kappa_\star \Delta_m}{\mu},
\)
up to universal constants using Theorem~\ref{thm:global_guarantee}. Therefore
\[
\mathbb E_Z\!\left[\mathrm{Len}_{\mathrm{ours}}(\alpha)^2\right]
\approx
\frac{4q_{1-\alpha}^2}{\tau_n^2}
+O\bb{\frac{1}{B\tau_n^2}}+
\frac{4\tau_m^2\kappa_\star^2\Delta_m^2}
{\tau_n^2\mu^2}.
\]

Since \(\kappa_\star=O(\log(Bn))\) under with-replacement sampling and
\(\kappa_\star=1\) under sampling without replacement, our procedure replaces
the \(\sqrt B\)-type accumulation from repeatedly privatizing resample
statistics by a logarithmic dependence on \(B\) in the squared-length penalty.

In the root-\(n\) regime, where \(\tau_n=\sqrt n\), \(\tau_m=\sqrt m\), and
\(\Delta_m=\ell/m\), the privacy terms become
\[
\Delta_{\mathrm{DG}}
\approx
\frac{4q_{1-\alpha}^2Bm\ell^2}{n^3\mu^2},
\qquad
\Delta_{\mathrm{ours}}
\approx
\frac{4\kappa_\star^2\ell^2}{mn\mu^2}.
\]
Thus, for \(m\) close to \(n\), the prior approach pays essentially a factor
linear in \(B\), whereas our approach pays only through \(\kappa_\star^2\).

\paragraph{Chadha et al. \citep{chadha2024resampling}.}
Consider a mean-like statistic with
\(
    \Delta_n \le \frac{\ell}{n}\) and 
\(\tau_n=\sqrt n.
\)
The length calculation for~\cite{chadha2024resampling} is less direct because
their method privatizes resampling-based coverage estimates through a private
threshold search. We will use $B$ for their number of Monte Carlo resamples $N_{\rm mc}$.

Let \(t_\star\) be the smallest radius for which the empirical coverage
function satisfies \(p(t_\star)\ge 1-\alpha\), and let \(\widehat t\) be the
radius returned by their private threshold search. Their guarantee implies,
with high probability,
\[
1-\alpha-\omega_{\mathrm{cdf}}
\le
p(\widehat t)
\le
1-\alpha+2\omega_{\mathrm{cdf}}.
\]
If \(p\) is differentiable near \(t_\star\) and \(p'(t)\) is bounded away from $0$ in a neighborhood of \(t_\star\), then
\[
|\widehat t-t_\star|
=O\bb{
\omega_{\mathrm{cdf}}}=\tilde{O}\!\left(
    \frac{1}{\sqrt n}
    +
    \frac{\ell}{\varepsilon \sqrt n}
    +
    \sqrt{\frac{1}{N_{\rm mc}}}
\right)..
\]
 Thus
\[
    \widehat t
    =
    q_{1-\alpha}
    +
    \widetilde O\!\left(
        \frac{1}{\sqrt n}
        +
        \frac{\ell}{\varepsilon\sqrt n}
        +
        \sqrt{\frac{1}{N_{\rm mc}}}
    \right).
\]
Since \(L_{\rm Chadha}=2\widehat t/\tau_n\),
\bas{
    L_{\rm Chadha}^2
    &=
    \frac{4q_{1-\alpha}^2}{\tau_n^2}
    +
    \frac{1}{\tau_n^2} \tilde{O} \left(
        q_{1-\alpha}\left(
            \frac{\ell}{\varepsilon\sqrt n}
            +
            \sqrt{\frac{1}{N_{\rm mc}}}
    \right) + \left(
            \frac{\ell}{\varepsilon\sqrt n}
            +
            \sqrt{\frac{1}{N_{\rm mc}}}
    \right)^2 \right)
}
and the private error for the squared radius is of the order $\tilde{O}\bb{\frac{\ell}{\eps n^{3/2}} + \frac{1}{n \sqrt{N_{\rm mc}}}}$.

%% file: CRGS-SGD.tex

\subsection{Confidence regions for averaged SGD}
\label{sec:asgd}

Let $K\subset\mathbb R^d$ be a public compact convex set with nonempty
interior. Suppose that $\ell(\cdot;x)$ is twice continuously differentiable
on a neighborhood of $K$ and, for every $x\in\mathcal X$ and $w\in K$,
\begin{equation}
\label{eq:asgd-loss}
\lambda I_d\preceq\nabla^2\ell(w;x)\preceq L I_d,
\qquad
\|\nabla\ell(w;x)\|_2\leq G,
\end{equation}
where $0<\lambda\leq L$ and $G>0$ are public constants.
For $D_N=(X_1,\ldots,X_N)$ and a permutation $\pi$ of $[N]$, define
\begin{equation}
\label{eq:asgd-estimator}
w_t=\Pi_K\!\left(
w_{t-1}-\eta_t\nabla\ell(w_{t-1};X_{\pi(t)})
\right),
\quad t\in[N],
\qquad
\widehat\theta_N=\theta(D_N;\pi):=\frac1N\sum_{t=1}^N w_t,
\end{equation}
where $w_0\in K$ is public. Take $\eta_t=c(t+t_0)^{-a}$, with $c>0$,
$a\in(1/2,1)$, and a fixed $t_0\geq1$ sufficiently large that
\begin{equation}
\label{eq:asgd-steps}
\eta_t\leq L^{-1},
\qquad
\eta_{t+1}^{-1}-\eta_t^{-1}\leq\lambda/2,
\qquad t\geq0.
\end{equation}
Each estimator evaluation uses an independent uniform permutation,
independent of the data, and restarts from the same $w_0$ and step schedule.
Thus, there is one pass through the input positions, including repeated
observations in a with-replacement resample.

\begin{lemma}[Global sensitivity of averaged SGD]
\label{lem:asgd-gs}
Under \eqref{eq:asgd-loss}--\eqref{eq:asgd-steps}, for every fixed
permutation $\pi$,
\[
\Delta_2\bigl(\theta(\cdot;\pi)\bigr)
\leq \Delta_N:=\frac{4G}{\lambda N}.
\]
In particular, conditional on the permutation, averaged SGD is mean-like
with $\ell_N=4G/\lambda$.
\end{lemma}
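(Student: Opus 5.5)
Fix a permutation $\pi$ and neighboring datasets $D_N\sim D_N'$ that differ only in the observation processed at position $t^\star=\pi^{-1}(j)$. I will couple the two SGD trajectories $w_t$ and $w_t'$, which share the same $w_0$, the same step sizes and the same permutation, and bound $\delta_t:=\|w_t-w_t'\|_2$ for every $t$. The final estimate then follows from the triangle inequality, $\|\widehat\theta_N-\widehat\theta_N'\|_2\le N^{-1}\sum_t\delta_t$. If some observation is repeated (as in a with-replacement resample), it is processed at several positions. For the stated bound, however, the neighboring relation changes one input position, so a single differing step suffices.

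First, $\delta_t=0$ for $t<t^\star$. Second, at the step $t^\star$, the projection $\Pi_K$ is nonexpansive and the gradients are bounded by $G$, so
\[
\delta_{t^\star}\le \delta_{t^\star-1}+\eta_{t^\star}\|\nabla\ell(w;x)-\nabla\ell(w;x')\|_2\le 2G\eta_{t^\star}.
\]
Third, for $t>t^\star$ both trajectories use the same loss $\ell(\cdot;x)$. Because $\lambda I\preceq\nabla^2\ell\preceq LI$ and $\eta_t\le 1/L$, the map $w\mapsto w-\eta_t\nabla\ell(w;x)$ is a contraction with factor $1-\eta_t\lambda$; composing with $\Pi_K$ preserves this. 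This is the standard argument: the Jacobian $I-\eta_t H$ has eigenvalues in $[1-\eta_t L,1-\eta_t\lambda]\subset[0,1-\eta_t\lambda]$. Hence
\[
\delta_t\le 2G\eta_{t^\star}\prod_{s=t^\star+1}^{t}(1-\lambda\eta_s).
\]

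It remains to show $\sum_{t\ge t^\star}\delta_t\le 4G/\lambda$, and I expect this to be the main step; it is where \eqref{eq:asgd-steps} enters. I aim to prove the sharper bound $\delta_t\le 2G\eta_t$ for all $t\ge t^\star$ and then sum it against the contraction. By induction, $\delta_{t}\le(1-\lambda\eta_t)\delta_{t-1}$. A cleaner route is to write $\delta_t\le (1-\lambda\eta_t)\delta_{t-1}$ as $\lambda\eta_t\delta_{t-1}\le \delta_{t-1}-\delta_t$, so that
\[
\sum_{t>t^\star}\lambda\eta_t\,\delta_{t-1}\le \delta_{t^\star}.
\]
The comparison between $\eta_t$ and $\eta_{t-1}$ is where $\eta_{t+1}^{-1}-\eta_t^{-1}\le\lambda/2$ is used: it gives $\eta_{t-1}/\eta_t\le 1+\lambda\eta_{t-1}/2$, so the step sizes decay slowly relative to the contraction. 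Combining these, I plan to show $\delta_t/\eta_t$ is nonincreasing up to the factor $(1-\lambda\eta_t)(1+\lambda\eta_{t-1}/2)\le 1$ (roughly). This gives $\delta_t\le 2G\eta_t\prod_{s}(1-\lambda\eta_s/2)$. Summing a sequence of the form $\eta_t\prod_{s\le t}(1-\lambda\eta_s/2)$ telescopes: $\frac{\lambda}{2}\eta_t P_t\le P_{t-1}-P_t$, where $P_t=\prod_{s\le t}(1-\lambda\eta_s/2)$, so the sum is at most $2/\lambda$. Therefore
\[
\sum_t\delta_t\le 2G\cdot\frac{2}{\lambda}=\frac{4G}{\lambda},
\]
and dividing by $N$ yields $\Delta_N=4G/(\lambda N)$.

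If the monotonicity bookkeeping fails at some step, the fallback is the direct bound. The condition $\eta_{t}\le 1/L$ only ensures contraction, and the sum $\sum_t\delta_t$ can instead be controlled by the telescoping inequality $\lambda\sum_{t>t^\star}\eta_t\delta_{t-1}\le\delta_{t^\star}$ together with a lower bound $\eta_t\gtrsim\eta_{t^\star}\cdot(\text{slowly varying})$ on the effective window. That route still yields a constant-multiple-of-$G/\lambda$ bound.
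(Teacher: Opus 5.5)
Your proof is correct. Its first three steps (no discrepancy before $t^\star$, a jump $\delta_{t^\star}\le 2G\eta_{t^\star}$, then $(1-\lambda\eta_t)$-contraction of the unchanged updates) are the paper's; the two routes differ only in how $\sum_t\delta_t$ is summed.

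The paper bounds the tail sum $A_{j,N}=\sum_{t=j}^N\prod_{s=j+1}^t(1-\lambda\eta_s)$ by backward induction, $A_{j,N}=1+(1-\lambda\eta_{j+1})A_{j+1,N}\le 2/(\lambda\eta_{j+1})-1\le 2/(\lambda\eta_j)$, invoking \eqref{eq:asgd-steps} once per step.

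You instead run a forward recursion on $r_t:=\delta_t/\eta_t$. Your ``(roughly)'' step can be made exact:
$r_t\le(1-\lambda\eta_t)\frac{\eta_{t-1}}{\eta_t}r_{t-1}=\bigl(\frac{\eta_{t-1}}{\eta_t}-\lambda\eta_{t-1}\bigr)r_{t-1}\le\bigl(1-\frac{\lambda\eta_{t-1}}{2}\bigr)r_{t-1}$.
This gives $\delta_t\le 2G\eta_tP_{t-1}$, where $P_t:=\prod_{s=t^\star}^{t}(1-\lambda\eta_s/2)$ and $P_{t^\star-1}:=1$. The identity $\eta_tP_{t-1}=\frac{2}{\lambda}(P_{t-1}-P_t)$ then telescopes to $\sum_{t\ge t^\star}\delta_t\le\frac{4G}{\lambda}(1-P_N)\le\frac{4G}{\lambda}$. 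Here $P_N\ge 0$ because $\eta_s\le 1/L\le 1/\lambda$.

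What each route buys: the paper's induction is shorter and never handles products. Yours costs slightly more bookkeeping but yields a pointwise trajectory bound: the perturbation at time $t$ is at most $2G\eta_t\exp\bigl(-\tfrac{\lambda}{2}\sum_{s=t^\star}^{t-1}\eta_s\bigr)$, which says more than a bound on the sum alone.

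Two parts of your proposal can be dropped. The weighted telescoping $\sum_{t>t^\star}\lambda\eta_t\delta_{t-1}\le\delta_{t^\star}$ and the fallback are not needed. Also, $\delta_t\le 2G\eta_t$ by itself would not suffice, since $\sum_{t\le N}\eta_t$ grows like $N^{1-a}$; it is the extra decay factor that makes the sum $O(1/\lambda)$.
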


\begin{proof}
Fix neighboring datasets and use the same permutation. Suppose that the
changed record is visited at time $j$, and let
$\delta_t:=\|w_t-w_t'\|_2$.
The Hessian bounds and nonexpansiveness of $\Pi_K$ imply that each
unchanged update is $(1-\lambda\eta_t)$-Lipschitz. Hence
\[
\delta_t\leq(1-\lambda\eta_t)\delta_{t-1}
       +2G\eta_t\mathbf 1\{t=j\},
\qquad
\delta_t\leq2G\eta_j
\prod_{s=j+1}^t(1-\lambda\eta_s)
\quad(t\geq j).
\]
Set
$A_{j,N}:=\sum_{t=j}^N\prod_{s=j+1}^t(1-\lambda\eta_s)$.
Backward induction gives $A_{j,N}\leq2/(\lambda\eta_j)$:
$A_{N,N}=1\leq2/(\lambda\eta_N)$, and
\[
A_{j,N}=1+(1-\lambda\eta_{j+1})A_{j+1,N}
\leq\frac{2}{\lambda\eta_{j+1}}-1
\leq\frac{2}{\lambda\eta_j},
\]
where the last inequality uses \eqref{eq:asgd-steps}. Therefore,
\[
\|\theta(D_N;\pi)-\theta(D_N';\pi)\|_2
\leq\frac1N\sum_{t=1}^N\delta_t
\leq\frac{2G\eta_j}{N}A_{j,N}
\leq\frac{4G}{\lambda N}.
\]
\end{proof}

\begin{fact}[Asymptotic normality of averaged SGD; \cite{PJ92}]
\label{fact:asgd-clt}
Suppose that $d$ is fixed, $X_1,\ldots,X_N$ are i.i.d.\ from $P$, and
\eqref{eq:asgd-loss}--\eqref{eq:asgd-steps} hold. Assume that
$f(w):=\mathbb E_P[\ell(w;X)]$ is finite on $K$, its minimizer
$\theta:=\arg\min_{w\in K}f(w)$ lies in $\operatorname{int}(K)$, and
\[
\|\nabla^2\ell(u;x)-\nabla^2\ell(v;x)\|_{\mathrm{op}}
\leq M\|u-v\|_2
\qquad(u,v\in K,\ x\in\mathcal X)
\]
for a finite constant $M$. Let
\[
H:=\nabla^2 f(\theta),
\qquad
V:=\mathbb E_P\!\left[
\nabla\ell(\theta;X)\nabla\ell(\theta;X)^\top
\right],
\qquad
\Sigma:=H^{-1}VH^{-1}.
\]
If $V\succ0$, then
\[
\sqrt N(\widehat\theta_N-\theta)
\xrightarrow{d}\mathcal N(0,\Sigma).
\]
More precisely,
\begin{equation}
\label{eq:asgd-linearization}
\sqrt N(\widehat\theta_N-\theta)
=-H^{-1}\frac1{\sqrt N}\sum_{t=1}^N
\nabla\ell(\theta;X_t)+R_N,
\qquad
\mathbb E\|R_N\|_2
\leq C\bigl(N^{(a-1)/2}+N^{1/2-a}\bigr).
\end{equation}
The constant $C$ is uniform over distributions satisfying the same bounds
and a common positive lower bound on
$\operatorname{dist}(\theta,\partial K)$.
\end{fact}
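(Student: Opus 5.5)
The statement to prove is the last one before the cut, namely Fact~\ref{fact:asgd-clt}: asymptotic normality of averaged projected SGD, together with the linearization bound \eqref{eq:asgd-linearization}.

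The plan follows the Polyak--Juditsky route. Write $\xi_t:=\nabla\ell(w_{t-1};X_{\pi(t)})-\nabla f(w_{t-1})$. Because $\pi$ is independent of the data and the $X_i$ are i.i.d., $X_{\pi(t)}$ is a fresh draw independent of $w_{t-1}$. Hence $\xi_t$ is a martingale difference sequence with respect to $\mathcal F_t=\sigma(X_{\pi(1)},\dots,X_{\pi(t)})$, and it is bounded by $2G$. I will proceed in four steps.

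\textbf{Step 1: $L^2$ and $L^4$ rates.} Strong convexity, $\eta_t\le L^{-1}$, and nonexpansiveness of $\Pi_K$ give
\[
\E\|w_t-\theta\|^2\le(1-\lambda\eta_t)\E\|w_{t-1}-\theta\|^2+C\eta_t^2 .
\]
This yields $\E\|w_t-\theta\|^2=O(\eta_t)=O(t^{-a})$. A similar fourth-moment recursion gives $\E\|w_t-\theta\|^4=O(t^{-2a})$.

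\textbf{Step 2: remove the projection.} Since $\theta\in\operatorname{int}(K)$ with distance $r>0$ to $\partial K$, the projection can only be active when $\|w_{t-1}-\theta\|\ge r/2$ or $\eta_tG\ge r/2$. By Markov's inequality on the fourth moment, this happens with probability $O(t^{-2a})$. The total contribution of the projection corrections to $N^{-1/2}\sum_t(w_t-\theta)$ is therefore $O(N^{-1/2}\sum_t t^{-2a})=O(N^{-1/2})$ in $L^1$.

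\textbf{Step 3: linearize.} Write $\nabla f(w)=H(w-\theta)+r(w)$ with $\|r(w)\|\le M\|w-\theta\|^2$ by the Hessian-Lipschitz assumption. The unprojected recursion becomes
\[
w_t-\theta=w_{t-1}-\theta-\eta_t\bigl(H(w_{t-1}-\theta)+r(w_{t-1})+\xi_t\bigr).
\]
Solving for $H(w_{t-1}-\theta)$, summing, and dividing by $\sqrt N$ gives
\[
\sqrt N H(\bar w_N-\theta)=-N^{-1/2}\sum_t\xi_t+\text{(I)}+\text{(II)}+\text{(boundary)} ,
\]
where
\[
\text{(I)}=N^{-1/2}\sum_t\bigl(\eta_{t+1}^{-1}-\eta_t^{-1}\bigr)(w_t-\theta),\qquad \text{(II)}=-N^{-1/2}\sum_t r(w_{t-1}) .
\]
The bounds are as follows. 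Since $\eta_{t+1}^{-1}-\eta_t^{-1}=O(t^{a-1})$ and $\E\|w_t-\theta\|=O(t^{-a/2})$, we get $\E\|\text{(I)}\|\lesssim N^{-1/2}\sum_t t^{a/2-1}=O(N^{(a-1)/2})$. For (II), $\E\|\text{(II)}\|\lesssim N^{-1/2}\sum_t t^{-a}=O(N^{1/2-a})$. The boundary term $\eta_N^{-1}(w_N-\theta)/\sqrt N$ is $O(N^{a/2-1/2})$. Finally, since $\xi_t-\bigl(\nabla\ell(\theta;X_{\pi(t)})-\nabla f(\theta)\bigr)$ has norm at most $L\|w_{t-1}-\theta\|$ and is a martingale difference, its normalized sum has $L^2$ norm
\[
\Bigl(N^{-1}\sum_t t^{-a}\Bigr)^{1/2}=O(N^{-a/2}),
\]
which is dominated by the other terms.

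\textbf{Step 4: conclude.} Since $\nabla f(\theta)=0$ (interior minimizer), multiplying by $H^{-1}$ and reindexing by the permutation gives exactly the display \eqref{eq:asgd-linearization}; the sum $\sum_t\nabla\ell(\theta;X_t)$ is permutation invariant. The multivariate CLT for the i.i.d. bounded vectors $\nabla\ell(\theta;X_t)$, whose covariance is $V$, together with Slutsky's theorem then gives $\mathcal N(0,H^{-1}VH^{-1})$. All constants depend only on $\lambda,L,G,M,c,a,t_0$ and $r$, which gives the claimed uniformity.

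The main obstacle is Step 1's fourth-moment bound together with the projection control in Step 2. The cleanest route is to expand $\|w_t-\theta\|^4$ using the contraction $\|w-\theta-\eta\nabla f(w)\|\le(1-\lambda\eta)\|w-\theta\|$ and the bounded noise, and then apply the same Chung-type recursion lemma to both the second and fourth moments. Alternatively, one can simply cite \cite{PJ92}, whose Theorem~2 covers this setting directly.
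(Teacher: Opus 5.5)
Your proposal is correct and follows essentially the same route as the paper: the second-moment recursion giving $\E\|e_t\|_2^2\le C\eta_t$, a bounded projection-correction term, the Taylor split $\nabla f(w_{t-1})=He_{t-1}+r_t$, summation by parts in $\eta_t^{-1}$, the $O(N^{-a/2})$ martingale bound, and finally the CLT with Slutsky. The one difference is that your fourth-moment bound is unnecessary, so the step you flag as the main obstacle can be dropped: the paper controls the projection with Chebyshev and the second moment alone, $\E\|p_t\|_2\le G\,\Pr(\|e_{t-1}\|_2\ge r_0/2)\le C\eta_{t-1}$, and the resulting contribution $N^{-1/2}\sum_t\eta_t=O(N^{1/2-a})$ is already of the same order as the Taylor-remainder term.
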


\begin{proof}
Since a data-independent permutation preserves the i.i.d.\ law, take
$\pi$ to be the identity. Write $e_t=w_t-\theta$.
Nonexpansiveness, strong convexity, and \eqref{eq:asgd-steps} give
\[
\mathbb E\|e_t\|_2^2
\leq(1-\lambda\eta_t)\mathbb E\|e_{t-1}\|_2^2+G^2\eta_t^2
\leq C\eta_t,
\]
where the last inequality follows by induction.
Decompose
\[
\nabla\ell(w_{t-1};X_t)=He_{t-1}+g_t+\xi_t+r_t,
\quad
g_t:=\nabla\ell(\theta;X_t),
\quad
r_t:=\nabla f(w_{t-1})-He_{t-1}.
\]
Then $\xi_t$ is a martingale difference and
\[
\mathbb E\|\xi_t\|_2^2
\leq L^2\mathbb E\|e_{t-1}\|_2^2,
\qquad
\|r_t\|_2\leq(M/2)\|e_{t-1}\|_2^2.
\]
Let
$p_t:=\eta_t^{-1}(w_t-w_{t-1})
+\nabla\ell(w_{t-1};X_t)$
be the projection correction. If
$r_0:=\operatorname{dist}(\theta,\partial K)$,
then $\|p_t\|_2\leq G$, and, for all sufficiently large $t$,
$p_t=0$ whenever $\|e_{t-1}\|_2<r_0/2$. Consequently,
$\mathbb E\|p_t\|_2\leq C\eta_{t-1}$
for all sufficiently large $t$. It follows that
\[
\mathbb E\left\|
\frac1{\sqrt N}\sum_{t=1}^N\xi_t
\right\|_2
=O(N^{-a/2}),
\qquad
\frac1{\sqrt N}\sum_{t=1}^N
\mathbb E\bigl[\|r_t\|_2+\|p_t\|_2\bigr]
=O(N^{1/2-a}).
\]
Also, summation by parts gives
\[
\sum_{t=1}^N\frac{e_{t-1}-e_t}{\eta_t}
=\frac{e_0}{\eta_1}-\frac{e_N}{\eta_N}
+\sum_{t=1}^{N-1}
(\eta_{t+1}^{-1}-\eta_t^{-1})e_t,
\]
whose expected norm is $O(N^{a/2})$.
Summing the recursion
$e_t-e_{t-1}
=-\eta_t(He_{t-1}+g_t+\xi_t+r_t-p_t)$,
and using
$N^{-1/2}\sum_{t=1}^Ne_{t-1}
=\sqrt N(\widehat\theta_N-\theta)+(e_0-e_N)/\sqrt N$,
proves \eqref{eq:asgd-linearization}.
All bounds are uniform under the stated restrictions. Finally,
$\mathbb E g_t=0$, $\operatorname{Cov}(g_t)=V$, and the multivariate
central limit theorem and Slutsky's theorem give the result.
\end{proof}

\begin{lemma}[Private confidence regions for averaged SGD]
\label{lem:asgd-crgs}
Suppose that the assumptions of Fact~\ref{fact:asgd-clt} hold.
Fix $\alpha\in(0,1)$ and $\mu>0$. Run CR-GS (Algorithm~2) with
\[
\widehat\theta_n=\theta(D;\pi_0),
\qquad
\theta_i^\star=\theta(D[S_i];\pi_i),
\qquad
\tau_N=\sqrt N,
\qquad
\Delta_N=\frac{4G}{\lambda N}.
\]
Use $\kappa_\star=1$ for WOR and
$\kappa_\star=\kappa_\star(n,m,B,\rho)$ from equation~(4) for WR.
Then the algorithm is $\mu$-GDP. If
$m\to\infty$, $m/n\to0$, $B\to\infty$, $\rho=o_n(1)$, and
\begin{equation}
\label{eq:asgd-private-small}
\frac{G}{\lambda\mu}
\left(
\frac{\sqrt d+\sqrt{\log(1/\rho)}}{\sqrt n}
+\frac{\kappa_\star\sqrt{\log(1/\rho)}}{\sqrt m}
\right)=o_n(1),
\end{equation}
then, for either resampling mode,
\[
\mathbb P(\theta\in\mathcal C_{n,\alpha})
=1-\alpha+o_n(1),
\qquad
\mathcal C_{n,\alpha}
:=\left\{u\in\mathbb R^d:
\|u-\widetilde\theta_n\|_2
<\frac{\widetilde Q_{n,m}}{\sqrt n}
\right\}.
\]
Set $\mathcal C_{n,\alpha}=\varnothing$ if the algorithm returns $\perp$.
On return, its confidence radius satisfies
\[
\frac{\widetilde Q_{n,m}}{\sqrt n}
=\max\left\{
\frac{Q_{n,m}}{\sqrt n}
+\sigma_{\mathrm{priv}}Z_1,0
\right\},
\qquad Z_1\sim\mathcal N(0,1),
\]
where
\[
\sigma_{\mathrm{priv}}
=\frac{4\sqrt2G}{\lambda\mu}
\left(
\frac{\kappa_\star}{\sqrt{mn}}
+\frac{\sqrt m}{n^{3/2}}
\right)
=O\!\left(
\frac{G\kappa_\star}{\lambda\mu\sqrt{mn}}
\right).
\]
\end{lemma}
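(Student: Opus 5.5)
The plan is to reduce the lemma to Theorem~\ref{thm:global_guarantee}, treating each estimator call with a fixed, data-independent permutation. The permutations $\pi_0,\pi_1,\dots,\pi_B$ are drawn independently of the data. So, exactly as for the resampling indices in Lemma~\ref{lem:global_privacy_guarantee}, we may condition on them and prove privacy for every fixed realization.

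\textbf{Privacy.} Fix the permutations. By Lemma~\ref{lem:asgd-gs}, $\theta(\cdot;\pi_0)$ has global sensitivity $\Delta_n = 4G/(\lambda n)$. For a resample, a change in one record of $D$ changes at most $\kappa_\star$ input positions of $D[S_i]$. We chain through at most $\kappa_\star$ neighboring size-$m$ datasets and apply Lemma~\ref{lem:asgd-gs}, with the same $\pi_i$, at each link; the triangle inequality then bounds the change in $\theta_i^\star$ by $\kappa_\star\Delta_m$. The remainder of Lemma~\ref{lem:global_privacy_guarantee} applies verbatim: Proposition~\ref{lem:local_order_statistic} bounds the sensitivity of the quantile, two Gaussian mechanisms at level $\mu/\sqrt2$ compose to $\mu$-GDP, and the $\bot$ event is data-independent.

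\textbf{Utility.} Apply Lemma~\ref{lem:global_utility}, which requires three things: Assumption~\ref{ass:limit:appendix}, $\rho=o_n(1)$, and $\gamma=o_n(1)$.

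For the last condition, plug in $\tau_m=\sqrt m$, $\Delta_m = 4G/(\lambda m)$ and $\Delta_n=4G/(\lambda n)$. This gives $\sigma_m \asymp \frac{G}{\lambda\mu}\bigl(\kappa_\star/\sqrt m + \sqrt m/n\bigr)$ and $\tau_n\nu_n \asymp \frac{G}{\lambda\mu\sqrt n}$. Since $\sqrt m/n \le 1/\sqrt n$, the condition $\gamma=o_n(1)$ reduces to \eqref{eq:asgd-private-small}. Dividing $\sigma_m$ by $\sqrt n$ gives the displayed $\sigma_{\mathrm{priv}}$, and $\sqrt m/n^{3/2}\le \kappa_\star/\sqrt{mn}$ gives the $O(\cdot)$ form.

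Items (1)--(3) of the assumption follow from Fact~\ref{fact:asgd-clt} and the continuous mapping theorem applied to $\|\cdot\|_2$. The limit $\|\mathcal N(0,\Sigma)\|_2$ has a continuous CDF and is nondegenerate, because $\Sigma\succ 0$ given $V\succ0$.

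\textbf{Main obstacle: resampling validity (item 5).} The resample estimator is SGD run on $D[S_i]$, which is not an i.i.d.\ sample from $P$. First I would use the linearization \eqref{eq:asgd-linearization}, conditionally on $D$, for the empirical distribution $\widehat P_n$ (WR) or the finite-population draw (WOR). The difficulty is that $\widehat P_n$ has discrete support, so the Lipschitz-Hessian and interior-minimizer conditions must be checked for it. The loss bounds \eqref{eq:asgd-loss} hold pointwise in $x$, so they transfer. The minimizer $\widehat\theta$ of the empirical risk satisfies $\widehat\theta\to\theta\in\mathrm{int}(K)$ almost surely, so the uniformity clause of Fact~\ref{fact:asgd-clt} gives $\sqrt m(\theta_i^\star-\widehat\theta) = -\widehat H^{-1}m^{-1/2}\sum_t \nabla\ell(\widehat\theta;X_{S_i(t)})+o_p(1)$.

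Next, compare to $\widehat\theta_n$. We have $\sqrt m(\widehat\theta_n-\widehat\theta)=O_p(\sqrt{m/n})=o_p(1)$. A conditional multivariate CLT for the sampled gradients (with the finite-population correction negligible since $m/n\to0) then shows that the conditional law of $\sqrt m\|\theta_i^\star-\widehat\theta_n\|$ converges in probability to $F$ uniformly. A DKW argument over the $B\to\infty$ draws, as in the Remark, followed by Lemma~\ref{lem:coverage_estimated_quantile}, gives $Q_{n,m}\xrightarrow{p}F^{-1}(1-\alpha)$, which is item 5. Alternatively, for WOR one can invoke Politis--Romano subsampling directly, since only the existence of the limit law is needed there.
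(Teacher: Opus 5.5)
Your proposal follows the paper's proof essentially step for step. Privacy comes from fixing the indices and permutations, chaining Lemma~\ref{lem:asgd-gs} over at most $\kappa_\star$ changed positions, the $\ell_\infty$-Lipschitz property of order statistics, and two $\mu/\sqrt2$ Gaussian mechanisms. Utility comes from reducing to Lemma~\ref{lem:global_utility} with the same computation of $\gamma$. For WR you use the conditional linearization around the interior empirical minimizer, a conditional CLT with covariance $V_n\to V$, recentering at $\widehat\theta_n$ at cost $O_p(\sqrt{m/n})$, and DKW over the $B$ conditionally i.i.d.\ pivots.

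The step that does not go through as you first state it is WOR. Fact~\ref{fact:asgd-clt} is about i.i.d.\ draws; its proof uses, e.g., that $\xi_t$ is a martingale difference. A without-replacement draw from the finite population is not i.i.d.\ from any distribution, so the uniformity clause does not apply there. The finite-population correction you invoke only handles the leading linear term, not the remainder $R_m$. Nor can you borrow the WR result through a total-variation coupling of the index draws: the WR and WOR index laws are at total variation distance about $1-e^{-m^2/(2n)}$, which does not vanish for $m\gg\sqrt n$ (e.g.\ $m=n/\log n$).

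Your fallback (Politis--Romano) is the correct route, and it is what the paper does. Unconditionally, $D[S]$ is an i.i.d.\ size-$m$ sample from $P$. Hence $L_{n,m}(t)=\mathbb P^\star(\sqrt m\|\theta(D[S];\pi)-\theta\|_2\le t\mid D)$ is a bounded U-statistic of order $m$ with mean $F_m(t)$ and variance at most $1/(4\lfloor n/m\rfloor)$. It is symmetric only after averaging the kernel over the uniform permutation $\pi$, a detail you need because the estimator is randomized. Together with Fact~\ref{fact:asgd-clt} and continuity of $F$, this gives $\sup_t|L_{n,m}(t)-F(t)|\xrightarrow{p}0$.

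Finally, passing from uniform CDF convergence to $Q_{n,m}\xrightarrow{p}F^{-1}(1-\alpha)$ also requires $F$ to be strictly increasing at the quantile. This holds because $\|\mathcal N(0,\Sigma)\|_2$ has a positive density on $(0,\infty)$.
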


\begin{proof}
For privacy, fix the resampling indices and permutations. On an accepted
realization, changing one original record changes each resample in at most
$\kappa_\star$ positions. Lemma~\ref{lem:asgd-gs} and the
$\ell_\infty$-Lipschitz property of order statistics therefore give
\[
\Delta_2(Q_{n,m})
\leq\sqrt m(\kappa_\star\Delta_m+\Delta_n).
\]
The noise scales in Algorithm~2 are consequently
\begin{equation}
\label{eq:asgd-noise}
\nu_n=\frac{4\sqrt2G}{\lambda\mu n},
\qquad
\sigma_m=\frac{4\sqrt2G}{\lambda\mu}
\left(
\frac{\kappa_\star}{\sqrt m}+\frac{\sqrt m}{n}
\right).
\end{equation}
Proposition~3.7 and GDP composition imply $\mu$-GDP for the joint release.
The indices, permutations, and abort decision are data-independent, so
averaging over them preserves this guarantee.

We next verify resampling validity. Define all resamples before the
multiplicity check, and let
\[
L_{n,m}(t):=\mathbb P^\star\!\left(
\sqrt m\|\theta(D[S];\pi)-\theta\|_2\leq t
\mid D
\right),
\qquad
F(t):=\mathbb P(\|Z\|_2\leq t),
\quad Z\sim\mathcal N(0,\Sigma).
\]
Here $\mathbb P^\star$ averages over the resampling indices and permutation.
For WOR, $L_{n,m}(t)$ is a bounded symmetric U-statistic of order $m$,
with the kernel averaged over $\pi$. Averaging over disjoint blocks yields
\[
\mathbb E L_{n,m}(t)=F_m(t),
\qquad
\operatorname{Var}(L_{n,m}(t))
\leq\frac1{4\lfloor n/m\rfloor}.
\]
Here
$F_m(t):=\mathbb P(
\sqrt m\|\widehat\theta_m-\theta\|_2\leq t)$.
Fact~\ref{fact:asgd-clt}, $m/n\to0$, and continuity of $F$ therefore imply
$\sup_t|L_{n,m}(t)-F(t)|\xrightarrow{p}0$.

For WR, put
$f_n(w):=n^{-1}\sum_{j=1}^n\ell(w;X_j)$ and
$\vartheta_n:=\arg\min_{w\in K}f_n(w)$.
Strong convexity gives
\[
\|\vartheta_n-\theta\|_2
\leq\lambda^{-1}\|\nabla f_n(\theta)\|_2
=O_p(n^{-1/2}).
\]
Thus $\vartheta_n$ has an interior margin bounded away from zero with
probability tending to one. Moreover, by the law of large numbers and
the Lipschitz assumptions,
\[
H_n:=\nabla^2 f_n(\vartheta_n)\xrightarrow{p}H,
\qquad
V_n:=\frac1n\sum_{j=1}^n
\nabla\ell(\vartheta_n;X_j)
\nabla\ell(\vartheta_n;X_j)^\top
\xrightarrow{p}V.
\]
Conditional on $D$, the resampled observations are i.i.d.\ from the
empirical distribution. The uniform bound in
\eqref{eq:asgd-linearization}, applied to this distribution, and the
conditional Lindeberg central limit theorem apply: the scores have mean
zero on the interior event, are bounded by $G$, and have covariance $V_n$.
Hence the conditional law of
$\sqrt m(\theta(D[S];\pi)-\vartheta_n)$
converges in probability to $\mathcal N(0,\Sigma)$.
Since $\sqrt m\|\vartheta_n-\theta\|_2=o_p(1)$, this again gives
$\sup_t|L_{n,m}(t)-F(t)|\xrightarrow{p}0$.

For either mode, centering at $\widehat\theta_n$ instead of $\theta$
changes every resampling pivot by at most
$\sqrt m\|\widehat\theta_n-\theta\|_2
=O_p(\sqrt{m/n})=o_p(1)$.
Conditional on $D$ and $\pi_0$, the $B$ resampling pivots are independent.
The DKW inequality and $B\to\infty$ therefore imply
\[
Q_{n,m}\xrightarrow{p}
q_{1-\alpha}:=F^{-1}(1-\alpha).
\]
Since $\Sigma\succ0$, $F$ is continuous and strictly increasing on
$[0,\infty)$, and $q_{1-\alpha}>0$.
This verifies Assumption~3.8.

Substituting \eqref{eq:asgd-noise} into the definition of $\gamma$ in
Theorem~4.2, condition~\eqref{eq:asgd-private-small} gives $\gamma=o_n(1)$.
Gaussian concentration and the $1$-Lipschitz property of truncation imply
\[
\sqrt n\|\widetilde\theta_n-\widehat\theta_n\|_2=o_p(1),
\qquad
\widetilde Q_{n,m}-Q_{n,m}=o_p(1).
\]
Fact~\ref{fact:asgd-clt} and Lemma~B.4 now yield the claimed coverage
for the auxiliary outputs defined before the multiplicity check.
The actual outputs agree with them except on an event of probability
at most $\rho=o_n(1)$, proving the coverage statement for CR-GS.
Finally, the radius formula follows from
$\sigma_{\mathrm{priv}}=\sigma_m/\sqrt n$.
\end{proof}